\newtheorem{theorem}{Theorem}[section]
\newtheorem{proposition}[theorem]{Proposition}
\newtheorem{lemma}[theorem]{Lemma}
\newtheorem{assumption}{Assumption}
\theoremstyle{remark}
\newtheorem{remark}{Remark}[section]
\newtheorem{conjecture*}{Conjecture}
\theoremstyle{plain}
\newcommand{\calL}{\mathcal{L}}
\newcommand{\calH}{\mathcal{H}}
\newcommand{\calF}{\mathcal{F}}
\newcommand{\calV}{\mathcal{V}}
\newcommand{\E}{\mathbb{E}}
\newcommand{\R}{\mathbb{R}}
\newcommand{\SVD}{\text{SVD}}
\newcommand{\Unif}{\rm{Uniform}}
\renewcommand{\paragraph}{%
  \@startsection{paragraph}{4}%
  {\z@}{1ex \@plus 0.5ex \@minus .2ex}{-1em}%
  {\normalfont\normalsize\itshape}%
}
\begin{document}

\title{Point processes with event time uncertainty}

 \author{
 Xiuyuan Cheng$^{1}$,
 Tingnan Gong$^{2}$,   
 and Yao Xie$^{2,}$\thanks{Email: yao.xie@isye.gatech.edu. Authors listed alphabetically.} 
 \vspace{10pt}
 \\
 \small{$^1$Department of Mathematics, Duke University}\\
 \small{$^2$H. Milton Stewart School of Industrial and Systems Engineering, Georgia Institute of Technology} 
 }

%\author{}

\date{\vspace{-30pt}}

\maketitle

\begin{abstract}

Point processes are widely used statistical models for continuous-time discrete event data, such as medical records, crime reports, and social network interactions, to capture the influence of historical events on future occurrences. In many applications, however, event times are not observed exactly, motivating the need to incorporate time uncertainty into point process modeling. In this work, we introduce a framework for modeling time-uncertain self-exciting point processes, known as Hawkes processes, possibly defined over a network. We begin by formulating the model in continuous time under assumptions motivated by real-world scenarios. By imposing a time grid, we obtain a discrete-time model that facilitates inference and enables computation via first-order optimization methods such as gradient descent and variational inequality (VI). We establish a parameter recovery guarantee for VI inference with an $O(1/k)$ convergence rate using $k$ steps. Our framework accommodates non-stationary processes by representing the influence kernel as a matrix (or tensor on a network), while also encompassing stationary processes—such as the classical Hawkes process—as a special case. Empirically, we demonstrate that the proposed approach outperforms existing baselines on both simulated and real-world datasets, including the sepsis-associated derangement prediction challenge and the Atlanta Police Crime Dataset.

\end{abstract}

%\begin{keyword}
%\kwd{}
%\end{keyword}

\section{Introduction}

Point processes, particularly self-exciting point processes, commonly known as Hawkes processes (see, e.g., \cite{hawkes1971spectra, reinhart2018review}), have been widely adopted for modeling sequential discrete event data across diverse domains, including seismology \cite{ogata1988statistical, ogata1999seismicity, zhuang2011next}, social networks \cite{farajtabar2014shaping}, high-frequency finance \cite{bacry2013some}, and genomics \cite{reynaud2010adaptive}.
Most existing models assume exact knowledge of event timestamps. However, in many practical applications, event times are often observed with uncertainty. For example, in medical contexts such as COVID-19 case reporting or ICU monitoring for sepsis detection, the exact onset of a condition may not be observed directly. Symptoms may arise, but a confirmed diagnosis typically follows after a delay due to lab testing. Similarly, in crime reports, such as burglaries, the exact time of the incident is often unknown because the event is discovered only after the fact. This type of uncertainty in event timing introduces a key modeling challenge, as illustrated in Figure~\ref{fig:illustration-motivation}.
While timing uncertainty is critical, the literature addressing it in point processes is limited. Early efforts, such as Ogata’s Bayesian approach \cite{ogata1999estimating}, introduced prior distributions (e.g., uniform over a fixed window) to model uncertainty in event times. However, these priors are subjective, and these methods did not fully address the computational challenges in inference.

In this paper, we introduce a new framework for modeling point processes under time uncertainty without relying on prior distributions. We adopt a non-Bayesian approach that models the underlying continuous-time Hawkes process directly, avoiding heuristic approximations of event times. Our formulation enables principled statistical inference and leads to efficient parameter estimation algorithms with both convergence guarantees and provable recovery accuracy.
Given a sequence of uncertain event-time windows, each containing at most one event, we first derive the likelihood in a continuous-time setting using a filtration defined by the history of uncertain event-window observations. In many real-world scenarios, the uncertain event windows are naturally aligned with a regular time grid (e.g., hourly medical measurements), which motivates our use of discretized time intervals.
This discretization reduces the continuous-time model to a discrete-time formulation, in which the influence kernel becomes a matrix that can be estimated. This matrix is a ``quantized'' version of the continuous-time kernel, with its size determined by the temporal resolution and influence horizon. The resulting model can be viewed as a Bernoulli process with a link function derived specifically to map historical observations to the event probability at each time point. This structure resembles a Generalized Linear Model (GLM) for discrete-time point processes.

We develop two inference approaches based on this model: one via maximum likelihood estimation (MLE) using gradient descent, and another inspired by solving a monotone operator variational inequality (VI), adapted from \cite{juditsky2019signal}. The latter improves numerical stability and efficiency. We provide theoretical guarantees, including an $O(1/k)$ algorithm convergence rate for parameter recovery.

We further extend the framework to point processes on graphs, in which events are uncertain in both time and location (modeled as nodes on a graph). We begin with a continuous-time formulation and apply a similar discretization to obtain a discrete-time model. In this setting, the influence kernel becomes a tensor, indexed by both time and graph nodes, along with a baseline intensity vector over the nodes. Estimation proceeds similarly using our proposed algorithms.

Empirically, we validate the approach on synthetic data, demonstrating convergence and recovery accuracy, as well as the accuracy of our model in predicting event probabilities in future intervals under both time-only and on-network settings. To evaluate real-world utility, we apply our model to two datasets: (1) the Sepsis-Associated Derangements (SADs) dataset \cite{physionetChallenge} and (2) Atlanta crime data. Both can be viewed as point processes over networks, where nodes represent medical variables or geographical regions. Our model reveals interpretable dynamic influence structures and achieves better predictive performance compared to baselines such as classical GLMs and Hawkes processes.

Notably, classical Hawkes models impose strong assumptions on the influence kernel, often fixing it to a parametric form, such as exponential decay. However, real-world influence structures can be significantly more complex. Here, we overcome this challenge by learning the influence kernel from discrete-time observations with time uncertainty, without assuming a specific parametric form. In addition, our formulation accommodates both non-stationary processes, in which the influence kernel is time-varying and represented as a matrix (or tensor in the network setting), and stationary processes, in which the kernel is time-invariant and reduces to a vector. All theoretical results and inference procedures naturally extend to the stationary case.

In summary, our contributions are:
\begin{itemize}

\item Modeling: We introduce a principled continuous-time point process model with time uncertainty, extendable to networked event data. After assuming that the event uncertainty windows are aligned with a temporal grid, the model yields a discrete-time formulation that is interpretable as a GLM with non-standard link functions.
	
\item Inference: We propose efficient parameter-estimation algorithms based on solving the maximum-likelihood problem via gradient descent, as well as a new approach based on solving a variational inequality with monotone operators. We provide theoretical guarantees for the estimation algorithms, including an $O(1/k)$ convergence rate for parameter recovery.
	
    \item Generality: Our framework handles both non-stationary and stationary influence kernels. All theoretical results apply to both settings.
	
    \item Empirical validation: We demonstrate the effectiveness of our model against baselines on both synthetic and real-world data, including the sepsis prediction dataset and the crime dataset.
\end{itemize}

\begin{figure}[t]
 \centering 
 \includegraphics[width=0.9\linewidth]{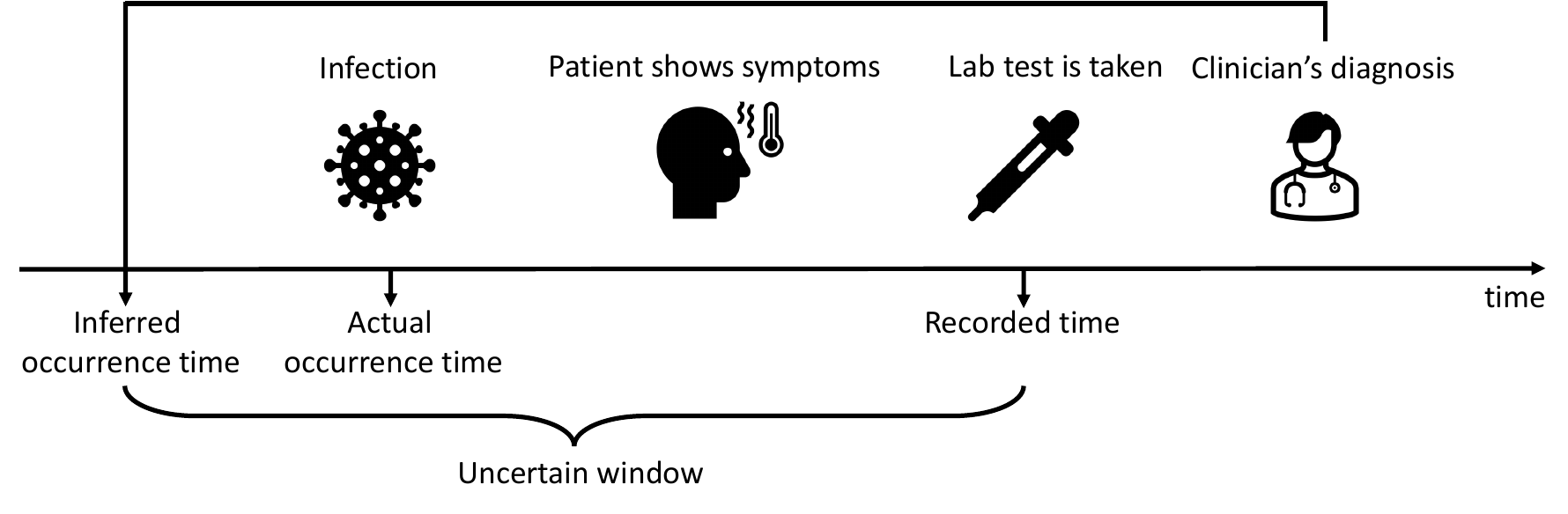}
 \vspace{-10pt}
 \caption{Illustration of a real-world medical application scenario leading to discrete event data with time uncertainty.}
 \label{fig:illustration-motivation}
 \end{figure}

\paragraph{Notations.} The notations used in this work are standard.
For a vector $x \in \R^n$, we use both bracket $x(u)$ or subscript $x_u$ to denote the $u$-th entry of the vector. When to emphasize that $x$ is a vector, we use the boldface symbol $\boldsymbol{x}$; $\otimes$ means vector outer-product, i.e., for $u, v \in \R^n$, $u \otimes n$ is an $n$-by-$n$ matrix.
$a \vee b := \max\{ a, b\}$, and  $a \wedge b := \min \{ a, b\}$.
$\R_+ = (0, \infty)$ denotes the set of strictly positive real numbers.

\subsection{Related works}

\paragraph{Hawkes process.} The Hawkes process, originally studied in \cite{hawkes1971spectra}, is a self-exciting continuous-time point process model, which can capture the excitation effects among events. The Hawkes process \cite{hawkes1971point, hawkes1971spectra, hawkes1974cluster} with a parametric kernel function has given rise to a body of literature for modeling the complex dynamics of heterogeneous time-event data, for example, earthquake occurrences \cite{ogata1988statistical, marsan2008extending}. Due to its good interpretability and well-studied estimation procedures, many applied works are keen on using parametric kernels, including exponential kernels \cite{zhou2013learninga, farajtabar2014shaping, yan2015machine, hall2016tracking} or power-law kernels \cite{zhao2015seismic, zhang2016modeling}. However, the expressiveness of the Hawkes process appears insufficient when the data are temporally non-stationary in their distribution or are not self-exciting. This is due not only to the parametric structure of the kernel in the Hawkes process, but also to the limited number of trainable parameters. Later, non-parametric Hawkes models \cite{hansen2015lasso}, including neural Hawkes models \cite{okawa2021dynamic}, were developed to accommodate the increasing complexity of real-world applications.

\paragraph{Point processes beyond Hawkes.} For events recorded at certain times, more expressive models have been developed in two directions. One is to model the intensity function, namely the rate of arrivals of events. Hansen et al. \cite{hansen2015lasso} aim to find the best linear approximation of the intensity function given a fixed dictionary. Mei and Eisner \cite{mei2017neural} model the intensity function through a continuous-time Long Short-Term Memory neural network. As indicated by \cite{dong2022spatio}, modeling intensity functions may lead to overfitting when the underlying dynamics are relatively simple. To address this issue, an alternative is to model the kernel function.
To learn the kernel function, Zhuang et al. \cite{zhuang2002stochastic} proposed a space-time branching process model (the Epidemic Type Aftershock Sequences model) with a parametric kernel that depends on the magnitudes of ancestors, time lags from offspring to current events, and event locations. Lewis and Mohler \cite{lewis2011nonparametric} proposed a nonparametric EM algorithm to learn coefficients for the kernel density estimator (KDE) of the triggering function. Zhou et al. \cite{zhou2013learningb} improved upon \cite{lewis2011nonparametric} by expressing the triggering kernel as a linear combination of base kernels rather than relying on a single KDE kernel. Bacry et al. \cite{bacry2012non} estimated the triggering kernel shape via the empirical auto-covariance of the counting process. Among these works, few explicitly consider the non-stationarity of the data in kernel design.
Another line of work includes Juditsky et al. \cite{juditsky2020convex}, who introduced a discrete-time Bernoulli process to model multivariate event data, leading to a General Linear Model (GLM) that can be efficiently optimized by first-order methods. Later, Wei and Xie \cite{wei2023causal} applied this model to learn causal networks from time-series data. These discrete-time models can be viewed as implicitly incorporating time uncertainty through discretization, but the modeling of time uncertainty itself is not explicit. In contrast, our model of event-time uncertainty is derived from the probabilistic principles of point processes and is applicable to more general point-process data. We will show that our model also leads to a GLM structure, and we adopt optimization techniques developed in \cite{juditsky2020convex}.

\paragraph{Uncertainty of observations in point processes.} There have been works on uncertainty quantification for model parameters of parametric Hawkes models. Wang et al. \cite{wang2020uncertainty} quantify uncertainty in MLE estimation for multivariate Hawkes processes. Dubey et al. \cite{dubey2021bayesian} use Bayesian neural networks to model event uncertainty via posterior distributions over model parameters. Osama et al. \cite{osama2019prediction} infer confidence intervals for the intensity functions of spatial point processes. Yang et al. \cite{yang2015predicting} assess model selection uncertainty using spatial point processes with applications to wildfire occurrences. These works primarily address uncertainty in model parameters rather than in the event times themselves, which is the focus of our work.

Recent studies have explored a related problem: estimating Hawkes processes from time-interval-censored observations, in which events are recorded as counts within non-overlapping time intervals. For instance, Schneider et al. \cite{schneider2023estimation} developed an expectation-maximization (EM) algorithm for parameter estimation in time-censored Hawkes processes. Rizoiu et al. \cite{rizoiu2022interval} proposed an approach that approximates the Hawkes process using a Mean Behavior Poisson Process (MBPP), which assumes independent increments to simplify model fitting. However, this approximation deviates from the underlying continuous-time stochastic model of the original Hawkes process. Our model also differs from those that consider time-interval-censored observations in \cite{schneider2023estimation, rizoiu2022interval} in that we assume each bin contains exactly one event, a scenario in which events are rare and can be localized to specific time intervals. This setup contrasts with \cite{schneider2023estimation, rizoiu2022interval}, where multiple events can occur within each non-overlapping bin—a more suitable model for denser observations. While our approach currently focuses on sparse event settings, it can, in principle, be extended to accommodate multiple events per bin, which we leave for future work.

\paragraph{Uncertainty of observations in time series.} In time series analysis, uncertainty in observed values has been studied through probabilistic models \cite{yang2019uncertain} or interval-valued observations \cite{maia2008forecasting}. 
%Another related line is fuzzy time series, introduced by Song \cite{song1993fuzzy} and applied to university enrollment forecasting \cite{song1993forecasting, song1994forecasting}. In fuzzy time series, uncertain observations are represented as fuzzy sets or linguistic variables instead of precise numbers. More recent developments include the use of auto-correlation for fuzzy data dependency analysis \cite{song2003note} and Ant Colony Optimization (ACO) for optimal universe partitioning \cite{cai2015new}. 
For a comprehensive review of uncertain time series, see the survey by Dallachiesa et al. \cite{dallachiesa2012uncertain}. These models are discrete in time and focus on uncertainty in observed values, whereas our work focuses on uncertainty in event timing, with our discrete-time formulation derived from continuous-time uncertainty.

\section{Background}\label{Sec: background}

We start by reviewing some preliminaries of the classical point process without time uncertainty; a more comprehensive review can be found in \cite{daley2003introduction,reinhart2018review}.  

\subsection{Continuous-time Hawkes process}\label{subsec:prelim-hawkes}

Given a sequence of $n$ event times $\{t_1, t_2, \ldots, t_n\}$ on $ (0,T)$,
\[  
0 < t_1 <  t_2 <  \cdots < t_n  < T, 
\]
the (conditional) {\it intensity function} $\lambda(t)$ is defined as the probability of having the next event in $[t, t+dt)$ given the history, where we use $dt$ to denote the infinitesimal time. Formally, 
\begin{equation}\label{eq:def-lambdat-classical}
 \lambda( t ) := \lim _{\Delta t \rightarrow 0} \frac{ \E[ \mathbb{N}( [t, t+\Delta t) ) \mid \mathcal{H}_{t} ]}{\Delta t},   
\end{equation}
%\yao{are $dt$ and $\Delta t$ the same?}
%\xc{clarification of the notation of $dt$ is added above.}
where $\calH_t $ is the history of all events up to $t$, 
and  $ \mathbb{N}$ is the counting measure of the events. 
Define the filtration $\calF_k$ to be the $\sigma$-algebra generated by observations up to the $k$-th event, i.e.,
\begin{equation}\label{eq:def-calFk}
    \calF_k :=  \sigma\{ t_1, \cdots, t_k \}.
\end{equation}
The {\it survival function} $S_k(t)$ is defined as
\begin{equation}\label{eq:def-Sk-old-hawkes}
    S_k(t) := \Pr [ t_k > t | {\calF}_{k-1}], \quad t \ge t_{k-1}, 
\end{equation}
and by definition $S_k(t_{k-1}) = 1$ and $S_k(t) > 0$. 

We define the {\it hazard function}, $ h_k(t)$ for the $k$-th event, in the standard fashion
\begin{equation}\label{eq:def-hk-old-hawkes}
    h_k(t) := -  \frac{S_k'(t)}{S_k(t)},  
    \quad t > t_{k-1}, 
\end{equation}
and equivalently, 
\[
S_k(t) = \exp\left\{ - \int_{t_{k-1}}^t  h_k(s) ds\right\},  
\quad  t  \ge t_{k-1}.
\]
Thus, by definition, this gives that for each $k$, 
\begin{equation}\label{eq:Prtk|Fk-old-hawkes}
\Pr [ t_k \in [t, t+dt) | \calF_{k-1}] 
= - d S_k(t)   
= h_k(t) \exp\left\{ - \int_{t_{k-1}}^{t}  h_k(s) ds\right\}  dt.
\end{equation}
From the above derivation, it can be shown that the intensity function $\lambda(t)$ defined in \eqref{eq:def-lambdat-classical}
actually equals the hazard function $h_k(t)$ on the interval $(t_{k-1}, t_k]$. 
Evaluating \eqref{eq:Prtk|Fk-old-hawkes} at $t = t_k$
and applying the argument consecutively for $k=1, \ldots, n$,
we have the expression of the log-likelihood as
\[
\ell := \log \Pr[ t_1, \cdots, t_n, t_{n+1}>T]
= \sum_{k=1}^n \log \lambda(t_k) - \int_0^T \lambda(s) ds.
\]

The self-exciting mechanism  is captured by the parametrization of the intensity $\lambda(t)$ through the 
{\it influence kernel} function $k(t',t)$ in the form as
\begin{equation}\label{eq:lambda-kernel-old-hawkes}
\lambda [k] (t)  = \mu + \sum_{i, \, t_i < t} k(t_i, t),
\end{equation}
where we have assumed that $\mu > 0$ is the constant base intensity of events throughout time. 
It is possible to make $\mu$ time-dependent in general cases.

In the classical Hawkes process \cite{hawkes1971point, hawkes1974cluster}, the influence kernel $k(t', t)$ is typically assumed to be time-invariant and takes the form of an exponential function:
$k(t', t) = \alpha \beta e^{-\beta (t - t')}$
where $\alpha > 0$ and $\beta > 0$ are scalar parameters. This time-invariant kernel leads to a stationary process, meaning that its distribution is invariant under time shifts.
We generalize this setting by allowing $k(t', t)$ to be a time-varying influence kernel, that is, it can depend not only on the time difference $t - t'$, but also on the absolute time $t$. In addition, we allow the influence kernel to take negative values to account for inhibitory effects. This general form enables modeling of how past events at time $t'$ influence future events at time $t > t'$, where the nature of this influence may vary over time.
When the influence is time-invariant, the kernel simplifies to a function of the time difference, i.e., $k(t', t) = k(t - t')$. In contrast, a time-varying kernel implies that the influence of past events varies with their occurrence time $t'$, even for the same lag $t - t'$.
Due to the principle of temporal causality, namely, that only past events can affect future events, the influence kernel is inherently asymmetric, with $k(t', t) \neq k(t, t')$ for $t \ne t'$.

\subsection{Point process on networks}

Hawkes already discussed the multivariate case in \cite{hawkes1971spectra}. 
In a point process on a network $\calV = \{1, \ldots, V \}$ having $V$ nodes, we observe the event data
\begin{equation}\label{eq:ti-ui-on-graph}
    (t_1,u_1), (t_2,u_2), \ldots, (t_n,u_n),
\end{equation}
over the time horizon  $[0,T]$,
$0 < t_1 < \cdots < t_n < T$;
 $t_i$ is the occurrence time of $i$-th event, and
  $u_i \in \calV $ denotes the location of the event on the network. 
We can similarly define the filtration $\calF_k$ of precise timing and node information of events as
\begin{equation}
\calF_k :=  \sigma\{ (t_1, u_1) , \cdots, (t_k, u_k) \}.
\label{filtration_1}
\end{equation}
The event history $\calH_t$ and the counting measure $\mathbb{N}$ contain information about both event time and location. 
The conditional intensity function at time $t$ and location $u$ is defined as 
\[
    \lambda (t, u) := \lim _{\Delta t \rightarrow 0} 
     \frac{\E [ \mathbb{N}( [t, t+\Delta t), u)  \mid \calH_t ]}{\Delta t}, \quad t >0, \, u \in \calV,
\]
where $\calH_t $ is the history of all events up to $t$. 
The log-likelihood of $n$ events has the expression as \cite{reinhart2018review}
\[
\ell = \sum_{k=1}^n \log \lambda(t_k, u_k) - \int_0^T  \sum_{v \in \calV } \lambda(s, v) ds.
\]

On a network, the influence kernel function $k(t', t, u', u)$ depends on both time and location. The conditional intensity function  $\lambda( t,u)$ is parameterized  as
\begin{equation}
    \lambda [ k ] (t, u )= \mu(u) + \sum_{i, \, t_i < t } k(t_i,t, u_i, u), \label{eq:classicHawkes}
\end{equation}
where $\mu(u)>0$ is the base intensity at node $u$.
 The kernel function $k (t', t, u', u )$, $t>t'$, $u, u' \in \mathcal V$ represents the influence, that is, the triggering or inhibiting effect of the past event at time and location $(t',u')$ 
 on the probability of a future event at $(t,u)$.

The model to be introduced in Section \ref{sec:model-with-graph} is more expressive than those previously considered in the literature (see, e.g., \cite{reinhart2018review}), including the spatial-temporal factorized model $k(t', t, u', u) = k_1 (t', t)k_2(u',u)$. As a result, our model can represent more complex spatial-temporal influence patterns of point processes on graphs.

\begin{figure}[t]
\centering 
\begin{subfigure}[h]{0.45\linewidth}
\includegraphics[width=\linewidth]{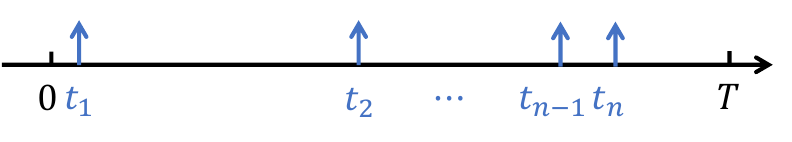}
\caption{Continuous-time: exact event times}
\end{subfigure}
\hspace{10pt}
\begin{subfigure}[h]{0.45\linewidth}
\includegraphics[width=\linewidth]{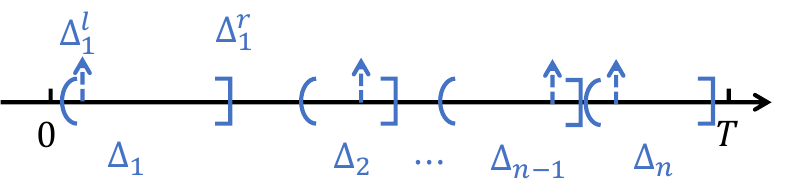}
\caption{Continuous-time: arbitrary event uncertainty}
\label{fig:time-diag-b}
\end{subfigure}

\begin{subfigure}[h]{0.45\linewidth}
\includegraphics[width=\linewidth]{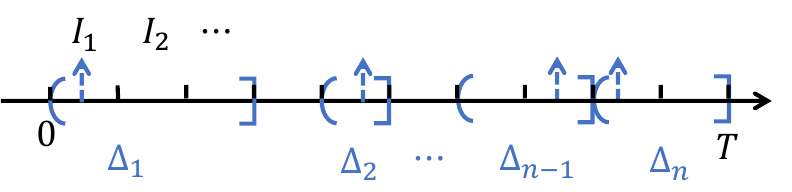}
\caption{Discrete-time: arbitrary event uncertainty}
\label{fig:time-diag-c}
\end{subfigure}
\hspace{10pt}
\begin{subfigure}[h]{0.45\linewidth}
\includegraphics[width=\linewidth]{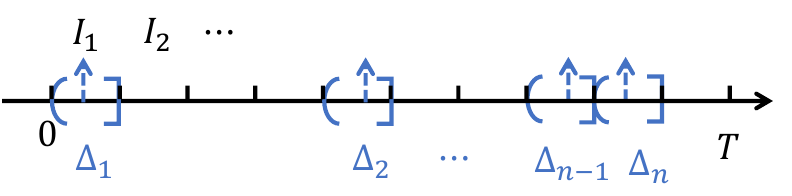}
\caption{Discrete-time: unit-time event uncertainty}
\end{subfigure}
\caption{
\label{fig:time-diag}
Illustration of the model: 
(a) Event times without uncertainty in the standard point process model.
(b) Continuous-time model with arbitrary-length event window, see Section \ref{subsec:cont-time-time-only}.
(c) Discrete-time model with arbitrary-sized event uncertainty, see Section \ref{subsec:discrete-general-uncertainty}.
(d) Discrete-time model with unit-sized event uncertainty, see Section \ref{subsec:bernoulli-time-only}.}
\end{figure}

\section{Events with time uncertainty}\label{sec:model-time-only}

In this section, we establish a time-uncertainty model using only event-time data, treating it as a one-dimensional point process. We first detail how to express the likelihood of a sequence of events subject to time uncertainty under a continuous-time point process model governed by a general influence kernel, and then explore a specific scenario in which the uncertainty windows align with a uniform grid on [0, T]. 
Furthermore, if the uncertainty window spans a single unit of time, the model is equivalent to a Bernoulli process with a specific link function that maps past observations to the current observation. 
This model will be extended to encompass network events, accounting for both time and location, in Section \ref{sec:model-with-graph}. All proofs are given in Appendix \ref{app:proofs-sec:3}.

\subsection{Continuous-time model with arbitrary-length event window}\label{subsec:cont-time-time-only}

\paragraph{Uncertainty window.}
The classical Hawkes model assumes that we observe events exactly when they occur; however, in practice, we often do not know the exact time of events.
Instead, we may know when the $i$-th event happens up to certain {\it uncertainty window} $\Delta_i$, namely,
\begin{equation}
    t_i \in \Delta_i := (\Delta_i^l, \Delta_i^r], \quad i = 1, 2, \ldots, \label{event_uq}
\end{equation}
where $\Delta_i^l< \Delta_i^r$ are the endpoints of the interval for each event. In other words, we do not know exactly when the event happens, but we only know that it happens within the interval.

We also make the following assumption that each interval only contains one event, and the intervals do not overlap. 
In other words, the observation has enough time resolution to identify an individual event up to its time uncertainty.

\begin{assumption}[Non-overlapping windows]\label{assump:disjoint_Delta} 
(A1)  The consecutive intervals $\Delta_i$ do not overlap, i.e., $\Delta_{i-1}^r \le \Delta_{i}^l$ for all $i$.
\end{assumption}
We would like to remark that the extension to allow multiple events in one uncertainty window may lead to a  Poisson process, which can be handled by other mathematical tools and is not the focus of this work. 
The time uncertainty window is illustrated in Figure \ref{fig:time-diag-b}.

\paragraph{Likelihood model.}
We are ready to derive the likelihood of the observation
\begin{equation}\label{eq:def-calL-time-uncert}
\calL := \Pr [  t_1 \in \Delta_1, \cdots, t_n \in\Delta_n, t_{n+1} > T],
\end{equation}
which will enable model estimation.
Our derivation will be based on first principles, and we will define the survival and hazard functions in the new setting in which the history contains time uncertainty. 
Specifically, instead of using the filtration $\calF_k$ as in \eqref{eq:def-calFk} by precise timing information, we define the filtration to be generated by historical events with time uncertainty, i.e.,
\begin{equation}
\tilde{\calF}_k :=  \sigma\{ t_i \in \Delta_i, \, i=1, \cdots, k \}.
\end{equation}
For notation brevity, below we use $\Delta_i$ to stand for the probability event of $t_i \in \Delta_i$ when there is no confusion. 
The survival function $S_k(t)$
is defined similarly as in the classical case in Section \ref{subsec:prelim-hawkes},  c.f. \eqref{eq:def-Sk-old-hawkes}, but conditioning on the uncertain history, that is
\begin{equation}\label{eq:def-Sk-time-only}
    S_k(t) := \Pr [ t_k > t | \tilde{\calF}_{k-1}], 
    \quad t \ge \Delta_{k-1}^r. 
\end{equation} 
Because $t_k $ must happen after $\Delta_{k-1}^r$ by Assumption (A1), 
we have $ S_k( \Delta_{k-1}^r ) = 1$.
The hazard function $h_k$ is defined from $S_k$, same as before, c.f. \eqref{eq:def-hk-old-hawkes},
\begin{equation}\label{eq:def-hk-time-uncert}
    h_k(t) := -  \frac{S_k'(t)}{S_k(t)},  
    \quad t > \Delta_{k-1}^r, 
\end{equation}
and note that the time is from $t > \Delta_{k-1}^r$.
As a result, $S_k(t)$ can be represented by integrating $h_k$ as 
\begin{equation}\label{eq:Sk-expression-by-hk}
S_k(t) = \exp\left\{ - \int_{\Delta_{k-1}^r}^t h_k(s) ds\right\},  \quad  t \ge \Delta_{k-1}^r.
\end{equation}
We further define the intensity function $\lambda(t)$  by {\it piecing together} the hazard function $h_k$ from each interval as
\begin{equation}\label{eq:def-lambdat-time-only}
\lambda(t) := h_k(t), \quad t \in (\Delta_{k-1}^r, \Delta_{k}^r], \quad k = 1, 2, \cdots, 
\end{equation}
and for $k=n+1$ it will from the interval $(\Delta_n^r, T]$; note that in this way, $\lambda(t)$ is defined everywhere on the time line.

\begin{lemma}\label{lemma:logL-time-only}
The log-likelihood of $\calL$ as in \eqref{eq:def-calL-time-uncert} has the expression
    \begin{equation}\label{eq:l-lambda-cont-time-uncert}
\ell: =
\log \calL
=   \sum_{k=1}^n
  \log \left( \exp\left\{ \int_{\Delta_{k}^l}^{\Delta_{k}^r}  \lambda(s) ds\right\}  - 1 \right) 
  - \int_{0}^{T}  \lambda(s) ds.
\end{equation}
\end{lemma}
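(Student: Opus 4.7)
The plan is to apply the chain rule for conditional probabilities along the filtration $\tilde{\calF}_k$, evaluate each factor using the survival/hazard machinery already set up, and then show that the ``piecewise'' intensity $\lambda$ in \eqref{eq:def-lambdat-time-only} glues everything together into a single integral from $0$ to $T$.

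First I would write
\[
\calL = \Pr[t_{n+1}>T \mid \tilde{\calF}_n] \prod_{k=1}^n \Pr[t_k \in \Delta_k \mid \tilde{\calF}_{k-1}],
\]
using Assumption (A1) to ensure the events $\{t_k \in \Delta_k\}$ are consistent with a well-defined ordering. For a single factor, since $t_k \ge \Delta_{k-1}^r$ almost surely under $\tilde{\calF}_{k-1}$, the definition of $S_k$ in \eqref{eq:def-Sk-time-only} gives
\[
\Pr[t_k \in \Delta_k \mid \tilde{\calF}_{k-1}] = S_k(\Delta_k^l) - S_k(\Delta_k^r),
\]
and plugging in the representation \eqref{eq:Sk-expression-by-hk} yields
\[
\Pr[t_k \in \Delta_k \mid \tilde{\calF}_{k-1}] = \exp\!\left\{-\int_{\Delta_{k-1}^r}^{\Delta_k^l} h_k(s)\,ds\right\}\left(1 - \exp\!\left\{-\int_{\Delta_k^l}^{\Delta_k^r} h_k(s)\,ds\right\}\right).
\]
The terminal factor $\Pr[t_{n+1}>T \mid \tilde{\calF}_n] = S_{n+1}(T) = \exp\{-\int_{\Delta_n^r}^T h_{n+1}(s)\,ds\}$ is handled identically.

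Next I would take logarithms and use the algebraic identity $\log(1-e^{-x}) = \log(e^x - 1) - x$ on the second factor, which converts
\[
\log\Pr[t_k \in \Delta_k \mid \tilde{\calF}_{k-1}] = \log\!\left(\exp\!\left\{\int_{\Delta_k^l}^{\Delta_k^r} h_k(s)\,ds\right\} - 1\right) - \int_{\Delta_{k-1}^r}^{\Delta_k^r} h_k(s)\,ds.
\]
Substituting the definition $\lambda(t) = h_k(t)$ on $(\Delta_{k-1}^r,\Delta_k^r]$ from \eqref{eq:def-lambdat-time-only} allows me to replace every $h_k$ by $\lambda$. Summing over $k=1,\ldots,n$ and adding the terminal contribution $-\int_{\Delta_n^r}^T \lambda(s)\,ds$, the collection of integration intervals $(\Delta_{k-1}^r,\Delta_k^r]$ for $k=1,\ldots,n$ together with $(\Delta_n^r,T]$ forms a partition of $(0,T]$ (with the convention $\Delta_0^r = 0$), so the ``$-\int$'' contributions telescope to $-\int_0^T \lambda(s)\,ds$, yielding exactly \eqref{eq:l-lambda-cont-time-uncert}.

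The only mild subtlety is conceptual rather than computational: one must verify that conditioning on $\tilde{\calF}_{k-1}$ (which only knows $t_{k-1} \in \Delta_{k-1}$, not its exact value) is compatible with defining $S_k, h_k$ on the fixed interval $(\Delta_{k-1}^r,\infty)$. This is precisely why Assumption (A1) and the starting point $\Delta_{k-1}^r$ (rather than $t_{k-1}$) are used in \eqref{eq:def-Sk-time-only}--\eqref{eq:def-hk-time-uncert}: $S_k$ is a deterministic function of $t$ given $\tilde{\calF}_{k-1}$, so the factorization and integration above are legitimate. Everything else is bookkeeping.
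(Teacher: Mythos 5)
Your proposal is correct and follows essentially the same route as the paper's proof: chain-rule factorization over $\tilde{\calF}_{k-1}$, writing each factor as $S_k(\Delta_k^l)-S_k(\Delta_k^r)$ via \eqref{eq:Sk-expression-by-hk}, replacing $h_k$ by the pieced-together $\lambda$, and telescoping the exponential integrals over the partition of $(0,T]$. The only cosmetic difference is that you factor out $\exp\{-\int_{\Delta_{k-1}^r}^{\Delta_k^l}h_k\}$ and invoke the identity $\log(1-e^{-x})=\log(e^x-1)-x$, whereas the paper factors out $\exp\{-\int_{\Delta_{k-1}^r}^{\Delta_k^r}h_k\}$ directly; these are algebraically equivalent.
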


\paragraph{Influence kernel.}

In the classical Hawkes model, the intensity function  $\lambda(t)$ is further parametrized by the influence kernel function  $k(t',t)$, see \eqref{eq:lambda-kernel-old-hawkes}.
Here, we also parametrize the intensity function \eqref{eq:def-lambdat-time-only} by a linear superposition of continuous time kernel $k(t',t)$, and we adopt the following expression to incorporate time uncertainty
\begin{equation}\label{eq:lambda-phi-cont-time-uncert}
    \lambda[k](t) = \mu + \sum_{i, \Delta_i^r < t } \frac{1}{|\Delta_i|} \int_{\Delta_i^l}^{\Delta_i^r} k( t', t) dt'.
\end{equation}
The notation has emphasized the dependence of $\lambda(t)$ on the continuous time kernel function $k$. 

\begin{remark}[Consistency of conditional intensity function definition]
We choose the form of the conditional intensity function in \eqref{eq:lambda-phi-cont-time-uncert}
 because it is consistent with the earlier Hawkes conditional intensity model specified in \eqref{eq:lambda-kernel-old-hawkes}.
Specifically,  if the uncertainty window $\Delta_i$ shrinks to a single point $t_k$,
i.e.,  $|\Delta_i|\rightarrow 0$ for all $i$, 
then,  by that  $t_i \in (\Delta_i^l, \Delta_i^r]$ according to \eqref{event_uq} (and assuming the continuity of the function $k$),
the expression \eqref{eq:lambda-phi-cont-time-uncert} is reduced to \eqref{eq:lambda-kernel-old-hawkes}.
In this limit, the time-uncertain model for the conditional intensity function $\lambda(t)$ recovers the classic ``time-certain'' Hawkes model. 
\end{remark}

\begin{remark}[Time-invariant kernel]\label{rk:stationary-kernel}
In \eqref{eq:lambda-phi-cont-time-uncert}, we have assumed the most general form of kernel $k(t',t)$, which is allowed to be non-stationary. 
The time-invariant kernel is a special case when $k(t',t) = \psi( t- t' ) $ for some function $\psi$.  
Later in Section \ref{sec:inference-GD-VI}, we will first present the general case of a time-varying kernel (corresponding to the recovery of a matrix), 
and then discuss the special case of the time-invariant kernel (corresponding to the recovery of a vector)
 in Section \ref{subsec:special-structure-psi}.
\end{remark}

\subsection{Discrete time model: Arbitrary event uncertainty}\label{subsec:discrete-general-uncertainty}
In many applications, the observed uncertainty windows are some regular time intervals, e.g., a window having a length of one or more hours 
 that starts at integer hours. 
For example, in the PhysioNet challenge data for sepsis prediction \cite{physionetChallenge}, event (patient test results) records are recorded hourly.
We show that under this setting, the continuous-time uncertainty model above can be simplified into a special case of a discrete-time model, 
where the uncertainty is up to a certain time unit.

\paragraph{Time grid.}
We evenly divide the time horizon $(0,T)$ into $N$ intervals, each having length $h$, 
\begin{equation}\label{eq:def-h-time-step}
    h := T/N,
    \quad  I_j: = \left( (j-1) h, j  h \right], 
    \quad j  \in [N] :=\{ 1,\cdots,N \}.
\end{equation} 
In the discrete-time models, we use $h$ to stand for the size of the time grid, not to confuse the notation with the hazard function $h_k$,
which was denoted by $\lambda(t)$ from \eqref{eq:def-lambdat-time-only} on.

To proceed, we assume that the event time uncertainty intervals $\Delta_i$ are ``aligned'' with the discrete-time grids, as illustrated in Figure~\ref{fig:time-diag-c}.
 We index the $N$ intervals by the set $[N]:= \{1, \cdots, N \}$.  
\begin{assumption}[Alignment to time grid]\label{assump:grid}
(A2) The uncertain event time window $\Delta_i$ can only have the end points $\Delta_i^l$ and $\Delta_i^r$ 
taking value on the evenly spaced discrete time grid points $\{ j h: j \in [N]\}$.
\end{assumption}  

Under (A2), the event-time window $\Delta_i$ for each event $i$ occupies some consecutive intervals of $I_j$.
When $\Delta_k$ occupies from 
the $l$-th interval till the $r$-th interval, 
 we denote the index $l$ as $t_k^l$,
 the index $r$ as $t_k^r$, that is,
 \begin{equation}\label{eq:Deltak-discrete-time-multiple-Ij}
 \Delta_k = \cup_{ t_k^l \le j \le t_k^r } I_j,
 \quad t_k^l, t_k^r \in [N], 
 \quad t_k^l \le t_k^r, 
 \quad  t_{k}^r < t_{k+1}^l.
 \end{equation}
The last inequality $t_{k}^r < t_{k+1}^l$ is by that the uncertainty windows $\Delta_k$ do not overlap as assumed in (A1).
The assumption (A2) provides a discrete-time formulation
that will  allow us to simplify the model \eqref{eq:l-lambda-cont-time-uncert} and \eqref{eq:lambda-phi-cont-time-uncert} by writing everything in discrete time indexed by $j \in [N]$.

\paragraph{Likelihood and influence model.}
Under (A2), the log-likelihood of the windowed observations \eqref{eq:l-lambda-cont-time-uncert} only depends on the $N$ quantities which are the integral of $\lambda(t)$ on the subintervals $I_j$. 
 To be specific,  we define
\begin{equation}\label{eq:def-Lambdaj-discrete-time}
\Lambda_j := \frac{1}{ h }  \int_{I_j} \lambda(s) ds, \quad j \in [N],
\end{equation}
which can be interpreted as the ``average intensity" over a discrete time interval $I_j$.
With the definition of $\Lambda_j$, \eqref{eq:l-lambda-cont-time-uncert} is reduced to 
\begin{equation}\label{eq:l-lambda-dis-time-uncert}
\ell 
= \sum_{k=1}^n \log \left( e^{ h \sum_{ j' = {t_k^l}}^{t_k^r}  \Lambda_{j'} }   - 1 \right)
-  h \sum_{j=1}^N  \Lambda_j.
\end{equation}

Next, we derive the representation of intensity $\Lambda_j$ by the influence kernel.
Define the matrix 
\begin{equation}\label{eq:def-Kij-discrete-time}
K_{i,j} : = \frac{1}{ h^2  }\int_{I_{i}} \int_{I_j} k( t', t) dt' dt, \quad i < j, \quad i,j \in [N], 
\end{equation}
which can be interpreted as the ``average influence" casted on the interval $I_j$ from an earlier interval $I_i$.
We will show that only the $i< j$ entry of $K_{i,j}$ is used. We call $K$ the {\it influence kernel matrix}, which can be viewed as a quantization of the kernel function $k(t',t)$.

The following lemma provides the representation of $\Lambda_j$ by the kernel matrix $K_{i,j}$.
\begin{lemma}\label{lemma:Lambdaj-Kij-lemma}
Under Assumption (A2),
for $j \in [N]$, 
\begin{equation}\label{eq:Lambdaj-Kij-lemma}
\Lambda_j[K] = 
     \mu+ \sum_{\{k, \, t_k^r < j\} } \frac{1}{ t_k^r - t_k^l+1} \sum_{i=t_k^l}^{t_k^r} K_{i,j}.
\end{equation}
\end{lemma}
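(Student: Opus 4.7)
The proof is essentially a direct substitution followed by bookkeeping under the grid alignment in (A2), so the plan is to just unfold each definition carefully and match terms to the right-hand side of \eqref{eq:Lambdaj-Kij-lemma}.

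First, I would start from the definition $\Lambda_j = \tfrac{1}{h}\int_{I_j}\lambda(s)\,ds$ and substitute the continuous-time intensity \eqref{eq:lambda-phi-cont-time-uncert}. This gives
\[
\Lambda_j \;=\; \mu \;+\; \frac{1}{h}\int_{I_j} \sum_{i,\,\Delta_i^r < s} \frac{1}{|\Delta_i|}\int_{\Delta_i^l}^{\Delta_i^r} k(t',s)\,dt'\,ds.
\]
The first nontrivial step is to show that the index set $\{i : \Delta_i^r < s\}$ is the same for every $s\in I_j=((j-1)h,\,jh]$, so that the sum can be pulled outside the $s$-integral. This is where Assumption (A2) enters: since $\Delta_k^r = t_k^r h$ with $t_k^r\in[N]$, for any $s\in I_j$ we have $\Delta_k^r < s$ iff $t_k^r h < s$, and it is easy to check that this holds for \emph{all} $s\in I_j$ exactly when $t_k^r \le j-1$, i.e., $t_k^r < j$. (If $t_k^r \ge j$, then $\Delta_k^r\ge jh\ge s$, failing the strict inequality.) This step is the only place where one has to be a little careful with strict vs.\ non-strict inequalities at the grid endpoints; I expect this to be the main obstacle.

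Once the index set is shown to be $\{k : t_k^r < j\}$ independent of $s\in I_j$, I would swap sum and integral and use Assumption (A2) to decompose the event window $[\Delta_k^l,\Delta_k^r]$ into a disjoint union of grid intervals, $[\Delta_k^l,\Delta_k^r] = \bigsqcup_{i=t_k^l}^{t_k^r} I_i$, and use $|\Delta_k| = (t_k^r-t_k^l+1)h$. Then
\[
\frac{1}{h|\Delta_k|}\int_{I_j}\int_{\Delta_k^l}^{\Delta_k^r} k(t',s)\,dt'\,ds
\;=\; \frac{1}{(t_k^r-t_k^l+1)}\sum_{i=t_k^l}^{t_k^r} \frac{1}{h^2}\int_{I_j}\int_{I_i} k(t',s)\,dt'\,ds,
\]
and by the definition \eqref{eq:def-Kij-discrete-time} of $K_{i,j}$ the inner double integral equals $K_{i,j}$. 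Note that since $t_k^r < j$, all indices $i$ in the inner sum satisfy $i\le t_k^r < j$, so only entries $K_{i,j}$ with $i<j$ appear, consistent with the stated range in \eqref{eq:def-Kij-discrete-time}.

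Putting these together gives exactly \eqref{eq:Lambdaj-Kij-lemma}. No further estimation is required; the whole argument is an exchange of sum and integral plus the grid-decomposition identities, with the only subtlety being the correct translation of the continuous-time ``past'' condition $\Delta_k^r < s$ into the discrete-time condition $t_k^r < j$.
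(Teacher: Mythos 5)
Your proposal is correct and follows essentially the same route as the paper's proof: substitute \eqref{eq:lambda-phi-cont-time-uncert} into \eqref{eq:def-Lambdaj-discrete-time}, use (A2) to see that the past-event index set is $\{k: t_k^r < j\}$ uniformly over $t \in I_j$, decompose each window $\Delta_k$ into the grid intervals $I_{t_k^l},\dots,I_{t_k^r}$ with $|\Delta_k| = (t_k^r - t_k^l + 1)h$, and identify the resulting double integrals with $K_{i,j}$ via \eqref{eq:def-Kij-discrete-time}. The only difference is that you make explicit the endpoint check (strict versus non-strict inequality at grid points) that the paper leaves implicit, which is a fine addition but not a new idea.
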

In the notation, we have emphasized the dependence of $\Lambda_j$ on the kernel matrix $K$.
The lemma shows that the dependence of the discrete-time intensity function $\Lambda_j$ 
from the influence kernel function $k(t',t)$ is all encoded into the matrix $K_{i,j}$.

The equations \eqref{eq:l-lambda-dis-time-uncert} and \eqref{eq:Lambdaj-Kij-lemma} provide a discrete-time model which only involves discrete-time objects $\Lambda_j$ and $K_{i,j}$. 
This will facilitate computation, as we can thus reduce the functional optimization problem with respect to  $k(t',t)$ 
to a matrix-valued optimization problem with respect to $K_{i,j}$.

\subsection{Discrete time model: Unit-time event uncertainty}\label{subsec:bernoulli-time-only}

The discrete-time formulation in Section \ref{subsec:discrete-general-uncertainty} has a simple and important special case,
which we call ``unit uncertainty,'' as follows.

\begin{assumption}[Unit-time uncertainty]\label{assump:A3}
(A3) The uncertainty window $\Delta_i$ falls on exact one of the $N$ intervals, i.e., $\Delta_i = I_{j(i)}$ for some $j(i) \in [N]$.
In other words, $t_i^l = t_i^r = j(i)$.
\end{assumption}

\paragraph{Likelihood and influence model.}
Under Assumption (A3),  the discrete-time model  \eqref{eq:l-lambda-dis-time-uncert} and \eqref{eq:Lambdaj-Kij-lemma} is further simplified to
\begin{align}\label{eq:model-base-uncert}
\ell 
 = \sum_{k=1}^n \log \left( e^{ h     \Lambda_{j(k) } }   - 1 \right)
    -   h   \sum_{j=1}^N  \Lambda_j,
    \quad 
\Lambda_j  
  =  \mu+ \sum_{\{k: \, j(k) < j\} }  K_{j(k), j}.
\end{align}
Like before, $\Lambda_j = \Lambda_j[K]$ which depends on the kernel matrix $K$.

Meanwhile, when the baseline intensity $\mu$ is unknown, in principle it can also be inferred from data. 
Thus, one can treat the scalar $\mu$ as an estimable parameter. 
We write the parameters $\{\mu ,K \} $ together as $\theta$, 
and write $\Lambda_j$ as $\Lambda_j (\theta)$ to emphasize the dependence on the parametrization. 
In this work,  our inference algorithms and experiments will focus on the unit-time uncertainty case under (A3).

In the rest of this subsection, we first show  that  the model \eqref{eq:model-base-uncert} has an equivalent formulation in the form of a Bernoulli process,
which will give a close connection to the general linear model for discrete-time point process \cite{juditsky2020convex} 
and also facilitate notations in future sections.
In addition, we show that, once the parameters are estimated, our discrete-time model allows us to predict the event chance on a future interval.

\paragraph{Bernoulli process formulation.}
Under the unit-uncertainty setting (A3), our discrete-time model has an equivalent form as a Bernoulli process with an autoregressive structure and a particular link function. Below, we will use $i$, $j$, and also $t \in [N]$ to denote the time-grid index, not to be confused with the former continuous time $t$.

Specifically,  each trajectory of the event time uncertainty window data can be written as a  binary sequence $y = (y_1, \cdots, y_N)$, 
where $y_t = 1$ if $I_t = I_{j(k)} $ for some $k$-th event (i.e., there is an event happening inside $I_t$), 
and $y_t = 0$ otherwise. 
Using the notation of $y_j$, 
\eqref{eq:model-base-uncert} can be equivalently written as
\begin{equation}\label{eq:model-base-uncert-bernoulli}
\ell
= \sum_{j=1}^N \left( 
- (1-y_j ) h  
\Lambda_j +  y_j  \log ( 1 - e^{ - 
 h
\Lambda_j} )
\right), \quad
\Lambda_j ( \theta )
=  \mu+ \sum_{ i < j}  y_i  K_{i, j}.
\end{equation}

Furthermore, the following lemma characterizes the conditional expectation of $y_t$ given the time-uncertain history. In this lemma, $\Lambda_t $ refers to the true intensity $\Lambda_t^*$, namely induced by the true parameter under which the sequence $\{ y_t \}_t$ is generated. 
The lemma provides a general linear model to predict $y_t$ using $\Lambda_t$.
We note that the link function $\phi$ here takes a particular form under the framework of \cite{juditsky2020convex}, which considers a discrete-time autoregressive Bernoulli process; however, with a standard logistic link function.

\begin{lemma}\label{lemma:time-only-bernoulli-glm}
Under the unit-uncertainty setting Assumption \ref{assump:A3}(A3) and using the formulation of the Bernoulli process $y_t$, 
for any $t\in [N]$, $\Lambda_t \in \sigma\{ y_i, i\le t-1\}$ and 
\begin{equation}\label{eq:time-only-bernoulli-glm}
\E [ y_t |  y_i, i\le t-1 ] = \phi( h   \Lambda_t ), 
\quad \phi(x):= 1-e^{-x}.
\end{equation}
\end{lemma}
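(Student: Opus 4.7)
The strategy is to read off the conditional distribution directly from the joint likelihood in \eqref{eq:model-base-uncert-bernoulli} by recognizing it as a chain-rule factorization into one-step Bernoulli distributions.

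The measurability claim is immediate from the explicit formula given in \eqref{eq:model-base-uncert-bernoulli}: since $\Lambda_t = \mu + \sum_{i<t} y_i K_{i,t}$ is a deterministic affine function of $y_1,\ldots,y_{t-1}$ (with coefficients from the true parameter $\theta^*$), it is $\sigma\{y_i : i \le t-1\}$-measurable.

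For the conditional expectation, I would first rewrite the joint probability mass function of $(y_1,\ldots,y_N)$, namely $e^\ell$ with $\ell$ as in \eqref{eq:model-base-uncert-bernoulli}, in the equivalent product form
$$\Pr[y_1,\ldots,y_N] \;=\; \prod_{t=1}^N \bigl(e^{-h\Lambda_t}\bigr)^{1-y_t} \bigl(1-e^{-h\Lambda_t}\bigr)^{y_t},$$
which follows by splitting the exponential sum and a short case check on $y_t \in \{0,1\}$. Each factor is a Bernoulli pmf in $y_t$ with success probability $\phi(h\Lambda_t) = 1-e^{-h\Lambda_t}$, and by the previous step the coefficient $\Lambda_t$ in the $t$-th factor depends only on $y_{<t}$.

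I would then conclude by uniqueness of the chain-rule factorization: marginalizing the above product successively over $y_N, y_{N-1}, \ldots, y_{t+1}$, each outer Bernoulli factor sums to one in its own argument (for any fixed prefix $y_{\le t}$), so $\Pr[y_1,\ldots,y_{t-1}] = \prod_{j < t}(e^{-h\Lambda_j})^{1-y_j}(1-e^{-h\Lambda_j})^{y_j}$, and dividing yields
$$\Pr[y_t = 1 \mid y_1,\ldots,y_{t-1}] \;=\; 1 - e^{-h\Lambda_t} \;=\; \phi(h\Lambda_t),$$
which gives $\E[y_t \mid y_i, i \le t-1] = \phi(h\Lambda_t)$ as desired. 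No step here is a genuine obstacle; the only point worth checking carefully is the product-form rewriting of $e^\ell$ (a routine case analysis), since without it the reverse-order marginalization argument would not apply cleanly.
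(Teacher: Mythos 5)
Your proposal is correct and follows essentially the same route as the paper: both read the measurability of $\Lambda_t$ off its defining formula, rewrite $e^{\ell}$ as a product of Bernoulli factors with success probability $\phi(h\Lambda_j)$ whose parameter depends only on $y_{<j}$, and then obtain the conditional probability by marginalizing out the later factors and dividing. The only cosmetic difference is that you write each factor in power form $(e^{-h\Lambda_t})^{1-y_t}(1-e^{-h\Lambda_t})^{y_t}$ and spell out the reverse-order summation, whereas the paper writes the factors as $(1-y_j)e^{-h\Lambda_j}+y_j(1-e^{-h\Lambda_j})$ and leaves the marginalization implicit.
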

The function $\phi$ is monotonically increasing on $\R$.
The relationship \eqref{eq:time-only-bernoulli-glm} will be used in deriving the estimation algorithm.

\paragraph{Prediction on future interval.}
Once the parameters $\theta$ are learned from data, we can compute $\Lambda_j$ based on \eqref{eq:model-base-uncert} and equivalently \eqref{eq:model-base-uncert-bernoulli}.
This enables us to predict the likelihood of an event occurring during a future interval.
To derive the expression, we take a step back to recall the notations of the continuous-time setting in Section \ref{subsec:cont-time-time-only}.
Given $n$ historical events with uncertainty, 
by the definition of $S_k$ in \eqref{eq:def-Sk-time-only}, the chance of having the next event on a given interval $( l ,r]$, $r > l \ge  \Delta_{n}^r$, 
can be written as
\begin{equation}\label{eq:predict-derivation-1}
\Pr [t_{n+1} \in (l, r] | \tilde{\calF}_{n} ]
=  S_{n+1}(l) - S_{n+1}(r).
\end{equation}
Under the discrete-time model and using the notation in Section \ref{subsec:discrete-general-uncertainty}, suppose $l$ and $r$ are endpoints of the time grid that is,
\[
l = j_l h , \quad r = j_r h , \quad j_r > j_l \ge t_n^r.
\]
By \eqref{eq:predict-derivation-1}, \eqref{eq:Sk-expression-by-hk} and the definition of $\Lambda_j$ in \eqref{eq:def-Lambdaj-discrete-time},
we have
\begin{equation}\label{eq:predict-future-1}
\Pr [t_{n+1} \in ( j_l h, j_r h] | \tilde{\calF}_{n} ]
= e^{ - h \sum_{j' = t_n^r +1}^{j_l} \Lambda_{j'}} \left( 1 -  e^{ - h \sum_{j' =j_l+1}^{j_r} \Lambda_{j'}} \right).
\end{equation}
For the probability of having no event happening till time $l = j_l h$, we have
\begin{equation}\label{eq:predict-future-2}
\Pr [t_{n+1} > j_l h  | \tilde{\calF}_{n} ]
= e^{ - h \sum_{j' =t_n^r +1}^{j_l} \Lambda_{j'}}.
\end{equation}
In particular, under unit-time uncertainty (A3), 
$\Delta_{n} = I_{ j(n) }$ and $t_n^r = j(n)$.  
One can compute \eqref{eq:predict-future-1} and \eqref{eq:predict-future-2} with $t_n^r = j(n)$ in the expressions.

\section{Estimation of influence kernel}\label{sec:inference-GD-VI}

\begin{figure}[t]
\centering 
\begin{subfigure}[h]{0.26\linewidth}
\includegraphics[width=\linewidth]{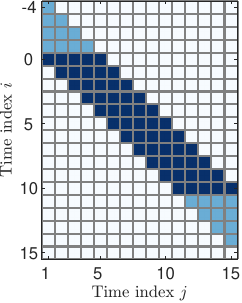}
\caption{Kernel $K$.}
\label{fig:time-only-K-illus}
\end{subfigure}
\hspace{+1.2in}
\begin{subfigure}[h]{0.1525\linewidth}
\includegraphics[width=\linewidth]{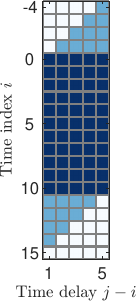}
\caption{Kernel $\Psi$.}
\label{fig:time-only-Phi-illus}
\end{subfigure}
\caption{
Kernel matrix $K$ in \eqref{eq:def-Kij-discrete-time}
and the reparameterized form $\Psi$  in \eqref{eq:def-Psi-matrix}. 
 We set $N' = 5$ and $N=15$. 
The blue (including light and dark blue) elements represent $j > i$, the entries of the kernel matrix that are estimable in $\theta_K$, see \eqref{eq:def-theta-timeonly-Omega};
The white elements are non-causal (i.e., $j\leq i$) and thus are excluded from $\theta_K$.
We apply the low-rank constraint to the submatrix formed by the dark blue entries in $\Psi$.}
\label{fig:time-only-K-diag}
\end{figure}

Given an observed sequence of event data with time uncertainty, our goal is to estimate the influence kernel from data up to the time uncertainty.
The derivations in Section \ref{sec:model-time-only} provide a discrete-time model, where the parameters are the influence kernel matrix $\{K_{i,j}\}$ (and the scalar base intensity $\mu$). 
In this section, we develop estimation approaches to recover the kernel matrix from data, focusing on unit uncertainty setting under Assumption \ref{assump:A3}(A3). 

\subsection{Kernel matrix  and log-likelihood}\label{subsec:kernel-est-parametrization}

Recall that  $\theta$ denotes the parameters in the model, 
which consists of entries in the influence kernel matrix $K$ and $\mu$. 
In the model \eqref{eq:model-base-uncert} and equivalently \eqref{eq:model-base-uncert-bernoulli}, 
only $K_{i,j}$ for $j > i$ are contributing to the likelihood. 
Thus, only $K_{i,j}$ for $j > i$ are included into $\theta$.
We further assume that the influence has a finite memory length, that is, 
\[
k(t', t) = 0, \quad \text{if $t-t' > \tau_{\rm max}$.}
\]
Without loss of generality, assume that $\tau_{\rm max} = N' h$ for some integer $N'$. 
By definition \eqref{eq:def-Kij-discrete-time}, we have 
\[
K_{i,j} = 0, \quad \text{if $j - i > N'$.}
\]
Thus, we only include $K_{i,j}$ for $  i < j \le i + N' $ into $\theta$, as illustrated in Figure \ref{fig:time-only-K-illus}.

The parameters are then
\begin{equation}\label{eq:def-theta-timeonly-Omega}
    \theta 
    = \{ \mu \} \cup
    \{ K_{i,j}, \,  0 <j-i \le N', \,  -N' < i \le N, \, 0 < j  \le N \}
    =: \{ \mu \} \cup \theta_K, 
\end{equation} 
where $\theta_K$ has $N N'$ estimable entries and thus is in the vector space of $\R^{NN'}$. 
Here, for convenience, we have assumed negative time grids indexed from $-N'+1$ to 0 with possible observed event data (that is, the Bernoulli process model in Section \ref{subsec:bernoulli-time-only} starts at $-N'+1$ instead of 1). 
These historical data are used to compute $\Lambda_j$ for $j >0$, while the likelihood is computed on $1 \le j \le N$.  
The vector $\theta_K$ can be re-arranged into an array indexed by $ -N' < i \le N$  and $0 < j-i \le N'$, 
which we call the kernel matrix $\Psi $,
\begin{equation}\label{eq:def-Psi-matrix}
\Psi_{i,l} = K_{i, i+l}, \quad l=1, \cdots, N',
\end{equation}
 and $\Psi$ is of size $(N+N')\times N'$, see Figure \ref{fig:time-only-Phi-illus}.
 In many cases, it is reasonable to assume that the rearranged kernel matrix $\Psi$ is low-rank, possibly due to the similar influence pattern over time, see more in Section \ref{subsec:special-structure-psi}. 

Using the notations above, our likelihood model \eqref{eq:model-base-uncert-bernoulli} can be written as
\begin{equation*}
\ell(\theta)
= \sum_{t=1}^N \left( 
- (1-y_t ) h \Lambda_t(\theta) +  y_t  \log ( 1 - e^{ - h \Lambda_t(\theta)} )
\right), 
\quad 
\Lambda_t ( \theta ) = \mu + \langle \eta_t , \theta_K \rangle,
\end{equation*}
and  $\eta_t \in \R^{NN'}$ is a vector 
consisting of binary-valued entries determined by the process $y_t$ such that 
\begin{equation}\label{eq:def-innerprod-etat-K}
\langle \eta_t , \theta_K \rangle = \sum_{ i = t- N' }^{t-1}  y_i  K_{i, t}.
\end{equation}

In practice, we usually estimate the model parameters from multiple trajectories. Given $M$  independent (training) trajectories  $\{ y_t^{(m)}\}_{m=1}^M $, the log-likelihood is summed over the $M$ trajectories.
The trajectories may potentially differ in length, but the likelihood model remains well-defined. For simplicity, we assume the $M$ trajectories are of the same length $N$, and the trajectories $y^{(m)}$ are i.i.d. across $m$. 
We introduce the superscript $^{(m)}$ to stand for the $m$-th trajectory and apply to the notations $y_t$, $\eta_t$, $\Lambda_t$ and log-likelihood $\ell$. 
We denote the (averaged) summed log-likelihood as $L(\theta)$, 
%and the likelihood model is summarized as
\begin{align}
L(\theta) 
& = \frac{1}{M}\sum_{m=1}^M \ell^{(m)}(\theta),  \label{eq:def-Ltheta-timeonly} \\
\ell^{(m)} (\theta)
& = \sum_{t=1}^N \left( 
- (1-y_t^{(m)} ) h \Lambda_t^{(m)}(\theta) +  y_t^{(m)}  \log ( 1 - e^{ - h \Lambda_t^{(m)}(\theta)} )
\right),  \label{eq:lm-theta-timeonly}\\
 \Lambda_t^{(m)}(\theta) 
& = \mu + \langle \eta_t^{(m)} , \theta_K \rangle. \label{eq:lambdatm-theta-timeonly}
\end{align}

%We propose first-order approaches to estimate the model parameter from data. 
Given the likelihood model, it is natural to estimate the parameters using MLE by, e.g., Gradient Descent (GD).
We will see that the MLE problem is (strongly) convex under certain technical conditions, which is proved in expectation and should hold when there are enough trajectories. However, it is known that gradient descent algorithms for estimating GLMs converge slowly, particularly near the true parameters \cite{he2020point}. Thus, we also consider an alternative approach to estimate model parameters using Variational Inequality (VI) optimization \cite{juditsky2019signal}. We obtain a parameter recovery guarantee for VI (in expectation) under a stochastic optimization setting that resembles the guarantee of stochastic first-order optimization for convex problems.
All the proofs are postponed to Appendix \ref{app:proofs-sec:4}.

\subsection{Gradient Descent (GD) for MLE}\label{subsec:GD-shceme-timeonly}

Given the likelihood model \eqref{eq:def-Ltheta-timeonly}\eqref{eq:lm-theta-timeonly}\eqref{eq:lambdatm-theta-timeonly}, the MLE solves for parameter $\theta$ by maximizing $L(\theta)$.
Again, we assume that $\mu$ is known and focus on solving for the kernel matrix parameters $ z= \theta_K \in \R^{ N N'}$. Define $L(z) := L([\mu, z])$, the GD dynamic follows the gradient field
\[
F(z) = - \partial_z L(z) =  \frac{1}{M} \sum_{m=1}^M \hat F^{(m)}(z),
\quad 
\hat F^{(m)}(z) = - \partial_z l^{(m)}([\mu, z]),
\]
and the GD update is by $z_{k} = z_{k-1} - \gamma_k F(z_{k-1})$ at step size $\gamma_k > 0$.
One can also implement stochastic optimization similar as in \eqref{eq:VI-SGD-scheme},  replacing $\hat G^{(k)}$ with $\hat F^{(k)}$. 

Direct computation gives the expression of $\hat F^{(m)}$ as 
\begin{equation}\label{eq:expression-hatFm}
\hat F^{(m)}(z) = 
\sum_{t=1}^N  \frac{ h}{ \phi( h  \Lambda_t^{(m)}( z) )}   ( \phi( h  \Lambda_t^{(m)}( z) ) - y_t^{(m)} )  \eta_t^{(m)},
\end{equation}
as well as the Hessian of $L(z)$. The following proposition shows that the objective $-L$ is strongly convex in expectation on the domain $\Theta_K$.

\begin{proposition}\label{prop:GD-strongly-convex}
Under Assumption \ref{assump:c1-Lambda-on-data}, 
$- \E L(z)$ is $\kappa'$-strongly convex on $\Theta_K$,
where $\kappa'  :=  e^{-2hb} (b/B)^2 h^2 e^{-B \tau_{\rm max}}$.
\end{proposition}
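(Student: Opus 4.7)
}

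The plan is to compute the Hessian of $-\ell^{(m)}(z)$ explicitly, show it is a nonnegative combination of rank-one matrices $\eta_t^{(m)} (\eta_t^{(m)})^\top$ with coefficients bounded below on $\Theta_K$, and then combine the expectation with Lemma \ref{lemma:eta-eta-positivity} and Lemma \ref{lemma:time-only-bernoulli-glm} to extract the required $\kappa' I$ lower bound.

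First I would differentiate. Because $\Lambda_t^{(m)}(z) = \mu + \langle \eta_t^{(m)}, z \rangle$ is affine in $z$ with $\partial_z \Lambda_t^{(m)} = \eta_t^{(m)}$, the linear term $(1-y_t^{(m)}) h \Lambda_t^{(m)}(z)$ in \eqref{eq:lm-theta-timeonly} contributes nothing to the Hessian. Writing $f(x) := -\log(1-e^{-x})$ so that $f''(x) = e^x/(e^x-1)^2$, the remaining term gives
\begin{equation*}
- \nabla_z^2 \ell^{(m)}(z) \;=\; \sum_{t=1}^N y_t^{(m)} \, h^2 \, f''\!\bigl(h \Lambda_t^{(m)}(z)\bigr)\, \eta_t^{(m)} \otimes \eta_t^{(m)}.
\end{equation*}
Averaging over $m$ and taking expectations yields $-\nabla^2 \E L(z) = \E \sum_t y_t\, h^2 f''(h\Lambda_t(z))\, \eta_t \otimes \eta_t$.

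Next I would bound $f''$ from below on $[hb, hB]$. Using the elementary inequality $e^x - 1 \le x e^x$ for $x > 0$, we get $(e^x-1)^2 \le x^2 e^{2x}$, so
\begin{equation*}
f''(x) \;\ge\; \frac{e^x}{x^2 e^{2x}} \;=\; \frac{e^{-x}}{x^2} \;\ge\; \frac{e^{-hB}}{(hB)^2}\qquad \text{for } x \in [hb, hB].
\end{equation*}
By Assumption \ref{assump:c1-Lambda-on-data}(ii), $h\Lambda_t(z) \in [hb, hB]$ a.s. for all $z \in \Theta_K$, so the scalar weight $h^2 f''(h\Lambda_t(z))$ is uniformly at least $e^{-hB}/B^2$.

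Then I would handle the $y_t$ factor via the tower rule. Since $\eta_t$ is $\sigma(y_i, i \le t-1)$-measurable, Lemma \ref{lemma:time-only-bernoulli-glm} gives $\E[y_t \mid \eta_t, y_{i<t}] = \phi(h\Lambda_t^*) = 1 - e^{-h\Lambda_t^*} \ge 1 - e^{-hb}$ by Assumption \ref{assump:c1-Lambda-on-data}(i). Therefore $\E[y_t \, \eta_t \otimes \eta_t] \succeq (1-e^{-hb})\, \E[\eta_t \otimes \eta_t]$, and summing over $t$ and invoking Lemma \ref{lemma:eta-eta-positivity} gives
\begin{equation*}
\E \sum_{t=1}^N y_t \, \eta_t \otimes \eta_t \;\succeq\; (1-e^{-hb})\, \rho\, I_{NN'},\qquad \rho = (1-e^{-hb})e^{-hB(N'-1)}.
\end{equation*}
Combining the three factors,
\begin{equation*}
-\nabla^2 \E L(z) \;\succeq\; \frac{e^{-hB}}{B^2} \cdot (1-e^{-hb})^2\, e^{-hB(N'-1)}\, I = \frac{(1-e^{-hb})^2}{B^2}\, e^{-hBN'}\, I.
\end{equation*}
Finally, applying $1 - e^{-hb} \ge hb\, e^{-hb}$ (equivalent to $e^{hb} - 1 \ge hb$) and using $hN' = \tau_{\rm max}$ yields $-\nabla^2 \E L(z) \succeq e^{-2hb}(b/B)^2 h^2 e^{-B\tau_{\rm max}}\, I = \kappa' I$, proving $\kappa'$-strong convexity on $\Theta_K$.

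The only genuinely delicate step is the third one: propagating the conditional-expectation factor $\phi(h\Lambda_t^*) \ge 1-e^{-hb}$ \emph{through} the rank-one matrix $\eta_t \otimes \eta_t$. The key is that $\eta_t$ is measurable with respect to the past, so conditioning first on the past reduces the problem to the already-known positive-definiteness bound of Lemma \ref{lemma:eta-eta-positivity}; everything else is careful bookkeeping of the scalar lower bounds on $f''$ and on $\phi$.
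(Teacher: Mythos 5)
Your proposal is correct and follows essentially the same route as the paper's proof: compute the Hessian $\E\sum_t y_t\,h^2\,\frac{e^{-h\Lambda_t(z)}}{(1-e^{-h\Lambda_t(z)})^2}\,\eta_t\otimes\eta_t$, lower-bound the scalar weight on $[hb,hB]$, use the tower rule with Lemma \ref{lemma:time-only-bernoulli-glm} to replace $y_t$ by $1-e^{-hb}$, and invoke Lemma \ref{lemma:eta-eta-positivity} before applying $xe^{-x}\le 1-e^{-x}\le x$. The only cosmetic difference is that you absorb the inequality $1-e^{-hB}\le hB$ into the weight bound up front, while the paper keeps $e^{-hB}/(1-e^{-hB})^2$ and converts to $\kappa'$ at the very end.
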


While Proposition \ref{prop:GD-strongly-convex} gives an in-expectation result, one may use the i.i.d. or possibly relaxed independence assumptions on trajectories $y^{(m)}$ to derive a concentration argument and show that when there are sufficiently many training trajectories, $-L$ is strongly convex and the MLE has a unique solution. 
Meanwhile, it is also possible to prove a recovery guarantee (in expectation) for a stochastic GD scheme similar to the VI result in Theorem \ref{thm:VI-convergence}.
We omit these extensions.

\subsection{Monotone Variational Inequality (VI) for parameter recovery}\label{subsec:VI-recovery-timeonly}

The VI approach leverages the GLM model in \eqref{eq:time-only-bernoulli-glm} with link function $\phi: \R \to \R$, and casts the parameter estimation problem as finding an equilibrium that corresponds to the true parameters using an iteration defined by the monotone operator \cite{juditsky2019signal,juditsky2020convex}.
We conceptually consider the $L^2$ loss defined as
 $L_{\rm VI}(\theta)   = \frac{1}{M} \sum_{m=1}^M \ell_{\rm VI}^{(m)}(\theta)$,
 where 
 \[
 \ell_{\rm VI}^{(m)}(\theta)
   = \frac{1}{2 h }\sum_{t=1}^N  ( \phi( h \Lambda_t^{(m)}(\theta)) - y_t^{(m)} )^2.
 \]
Recall that $\theta = \{ \mu, \theta_K\}$.
For simplicity and without loss of generality, in the rest of this section, we always assume that $\mu$ is a known constant and focus on estimating the influence kernel $\theta_K \in \R^{ N N'}$.  We will explain the algorithmic details to recover $\mu$ from data in Section \ref{sec:algo}.

\paragraph{VI stochastic scheme.}
For notation convenience, we denote $\theta_K$ as $z \in \R^{N N'}$, and also write 
$\Lambda_t( z) = \Lambda_t( [\mu, z])$, i.e., with $\theta = [\mu, z]$.
Note that $\Lambda_t^{(m)}$ is linear in $z=\theta_K$
and $\partial_{\theta_K} \Lambda_t^{(m)}(\theta) = \eta_t^{(m)}$ by \eqref{eq:lambdatm-theta-timeonly}.
Following the VI monotone operator construction for GLM \cite{juditsky2020convex}, 
we have the VI monotone operator 
(as a vector field in the space of $z$ and it is {\it not} the gradient to minimize the $L^2$ loss above, which is a non-convex problem) 
written as 
\begin{equation}\label{eq:formula-Gtheta-timeonly}
\hat G( z ) =  
 \frac{1}{M} \sum_{m=1}^M 
 \hat G^{(m)}( z ),
 \quad 
 \hat G^{(m)}( z ) = 
\sum_{t=1}^N \left(\phi( h \Lambda_t^{(m)}( z )) - y_t^{(m)} \right) \eta_t^{(m)}.
\end{equation}

We use $z_k$ to stand for the parameter at the $k$-th iteration, $k=0,1, \ldots$.
The stochastic updating scheme reads %\xc{plz keep the proj here}
\begin{equation}\label{eq:VI-SGD-scheme}
z_k = {\rm Proj}_{\Theta_K} [ z_{k-1} - \gamma_k \hat G^{(k)} (z_{k-1}) ],
\end{equation}
where  $\gamma_k > 0 $ is the step size,
$\Theta_K$ is a convex compact domain in $\R^{N N'}$ satisfying certain technical assumptions to be specified below, and ${\rm Proj}_{\Omega}$ is the metric projection operator onto convex set $\Omega$ defined as 
\begin{equation}\label{eq:def-proj-operator}
{\rm Proj}_{\Omega} (z) : = \arg\min_{ u \in \Omega} \| u - z\|_2^2.
\end{equation}
We assume that the iteration \eqref{eq:VI-SGD-scheme} starts from $z_0 \in \Theta_K$.
For theoretical simplicity, we also assume there are enough trajectories to compute \eqref{eq:VI-SGD-scheme} without repeating.

\paragraph{Kernel recovery guarantee.}
We will prove the convergence of the scheme \eqref{eq:VI-SGD-scheme} such that $z_k$ recovers the true kernel parameter $z^*$.
The convergence bounds the 2-norm error in expectation, namely $\E \| z_k - z^*\|_2^2 $, where the expectation $\E$ is over the randomness of data observations $y_t^{(m)}$. 
A central property of analyzing VI optimization is the monotonicity of the VI vector field. We define
\begin{equation}\label{eq:def-Gz-VI}
{G}( z)
: =  \E \hat G^{(m)}( z ) 
= \E \sum_{t=1}^N \left(\phi( h \Lambda_t (z)) - y_t \right) \eta_t,
\end{equation}
and when $z$ is the true kernel $z^*$ we have
\begin{equation}\label{eq:G(zstar)=0}
G(z^*) = 0,
\end{equation}
as a result of Lemma \ref{lemma:time-only-bernoulli-glm} (in \eqref{eq:def-Gz-VI}, first taking conditional expectation conditioning on $y_i,\, i\le t-1$).
We will see that the strong monotonicity of $G(z)$ can be induced when $\E \sum_{t= 1}^N \eta_t \otimes \eta_t $ is strictly positive definite. 

To proceed, we introduce the following assumption on the domain $\Theta_K$ of $z$.

\begin{assumption}[Physical intensity]\label{assump:c1-Lambda-on-data}
There exist constants  $B, b > 0 $ s.t.

(i) The true conditional intensity $\Lambda_t^*$ satisfies 
\begin{equation}\label{eq:assump-Lamdbat-star} 
b \le \Lambda_t^* \le B, \quad \forall t= -N'+1, \cdots, N, \quad \text{a.s.}
\end{equation}

(ii)  There exists a non-empty, compact, and convex set $\Theta_K \subset \R^{NN'}$
such that the true kernel parameter  $z^* \in \Theta_K $, and for any $z \in \Theta_K$,
\begin{equation}\label{eq:assump-Lamdbat-b} 
b \le \Lambda_t( z )  \le B, \quad 
\forall t= 1, \cdots, N, \quad \text{a.s.}
\end{equation}

\end{assumption}

To clarify the relationship of $\Lambda_t^*$ and $z^*$, note that $z^*$ is the true kernel matrix restricted to the $NN'$ entries in \eqref{eq:def-theta-timeonly-Omega}. Thus $\Lambda_t^*  = \Lambda_t(z^*) $ when $ 1 \le t \le N $, 
and $\Lambda_t^*$ for $ -N' < t \le 0 $ also involves entries of the true kernel outside the $N N'$ index set.

We say that the intensity $\Lambda_t (z) > 0$ is ``physical''. 
Assumption \ref{assump:c1-Lambda-on-data}(ii)  imposes that the model intensity
$\Lambda_t (z) $ is upper and lower bounded by positive constants. 
For example, when $z = 0$, $\Lambda_t \equiv \mu$.
Then, as long as $[b, B]$ contains $\mu$, we have \eqref{eq:assump-Lamdbat-b} hold at $z=0$. 
Thus, one can initialize the iteration \eqref{eq:VI-SGD-scheme} from $z_0 = 0$ assuming that $0 \in \Theta_K$.
On the other hand, Assumption \ref{assump:c1-Lambda-on-data}(i) actually implies the following lemma:

\begin{lemma}\label{lemma:c1-Lambda-on-data-implication}
Under Assumption \ref{assump:c1-Lambda-on-data}(i), 
we have that $\mu \in [b,B]$, and any realization of the binary sequence $\{y_t, -N' <  t \le N\}$ happens w.p. $> 0$. Consequently, it always holds that $b \le \Lambda_t^*  \le B$, $\forall t= -N'+1, \cdots, N$. 
\end{lemma}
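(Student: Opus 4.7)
The plan is to reduce the claim to three elementary observations. First, at the earliest index $t = -N'+1$ in the observation window, no prior observed event lies in the $N'$-long memory range, so $\Lambda_{-N'+1}^*$ collapses to $\mu$ deterministically; invoking Assumption \ref{assump:c1-Lambda-on-data}(i) at that index then gives $b \le \mu \le B$. Second, the Bernoulli transition probabilities furnished by Lemma \ref{lemma:time-only-bernoulli-glm} are bounded strictly away from $0$ and $1$ whenever $\Lambda_t^* \in [b,B]$. Third, every realization of the binary sequence $\{y_t\}$ therefore carries strictly positive probability, which forces the exceptional null set in the ``a.s.'' statement to be empty and promotes the bound to a sure statement.

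Concretely, for the first step I would unpack $\Lambda_t^* = \mu + \sum_{i<t} y_i K^*_{i,t}$ from \eqref{eq:model-base-uncert-bernoulli}, use the finite-memory property $K_{i,t}^* = 0$ for $t-i > N'$ together with the boundary convention $y_i = 0$ for $i \le -N'$ (no events before the history buffer), and obtain $\Lambda_{-N'+1}^* = \mu$ without randomness. Assumption \ref{assump:c1-Lambda-on-data}(i) at $t = -N'+1$ then forces $b \le \mu \le B$. For the second step, Lemma \ref{lemma:time-only-bernoulli-glm} gives
\begin{equation*}
\Pr(y_t = 1 \mid y_{-N'+1}, \ldots, y_{t-1}) = 1 - e^{-h\Lambda_t^*},
\end{equation*}
so whenever the conditioning history makes $\Lambda_t^* \in [b,B]$ this probability lies in $[1-e^{-hB}, 1-e^{-hb}] \subset (0,1)$, and the complementary probability $e^{-h\Lambda_t^*} \in [e^{-hB}, e^{-hb}] \subset (0,1)$ as well.

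I would then run a forward induction on $t$ from $-N'+1$ to $N$. At $t = -N'+1$ the intensity is surely $\mu \in [b,B]$, so both values of $y_{-N'+1}$ have positive probability. At the inductive step, any specific history $y_{-N'+1}, \ldots, y_{t-1}$ has positive probability by hypothesis; by Assumption \ref{assump:c1-Lambda-on-data}(i) the bound $b \le \Lambda_t^* \le B$ must hold on this particular history (otherwise that positive-probability realization would be inside the a.s.\ null set, a contradiction), so by the second step each extension $y_t \in \{0,1\}$ again carries positive probability. Terminating the induction at $t = N$ simultaneously proves (i) positivity of every binary realization and (ii) sure validity of $b \le \Lambda_t^* \le B$ for all $t$. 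The only real obstacle is notational: one must fix the boundary convention for the pre-history $\{y_i : i \le -N'\}$ explicitly so that $\Lambda_{-N'+1}^*$ really is deterministic; once fixed, the rest is a clean finite induction on a discrete sample space.
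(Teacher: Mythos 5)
Your proposal is correct and follows essentially the same route as the paper's proof: identify $\Lambda_{-N'+1}^* = \mu$ to get $\mu \in [b,B]$, invoke Lemma \ref{lemma:time-only-bernoulli-glm} for the conditional Bernoulli transition probabilities, and run a forward induction in which positive probability of each history plus the a.s.\ bound forces $b \le \Lambda_t^* \le B$ on that realization, keeping both extensions at positive probability. The only cosmetic difference is that you make the pre-history convention ($y_i = 0$ for $i \le -N'$) explicit where the paper simply says there is no history at $t=-N'+1$.
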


As a result, the requirement that \eqref{eq:assump-Lamdbat-b} holds a.s. means that the model intensity $\Lambda_t(z)$ also needs to stay in $[b,B]$ for any realization of the history sequence $\{ y_i, \, t - N'\le i \le t-1 \}$. This poses finitely many linear constraints on $z$ as a vector. 

We are ready to prove the strict positive definiteness of  $\E  \sum_{t= 1}^N \eta_t \otimes \eta_t $:

\begin{lemma}\label{lemma:eta-eta-positivity}
Under Assumption \ref{assump:c1-Lambda-on-data}(i),
\[
\E     \sum_{t= 1}^N \eta_t \otimes \eta_t   
\succeq  \rho I_{NN'},
\quad \rho := (1-e^{-hb})e^{-hB (N'-1)}.
\]
\end{lemma}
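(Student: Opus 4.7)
I would exploit the sparse structure of $\eta_t$. By \eqref{eq:def-innerprod-etat-K}, the vector $\eta_t \in \R^{NN'}$ has nonzero entries only on the coordinates indexed by $(i,t)$ for $i=t-N',\ldots,t-1$, with value $y_i$. Since the second coordinate of every nonzero entry of $\eta_t$ is $t$, the supports of $\eta_t \otimes \eta_t$ for distinct values of $t$ are pairwise disjoint. Grouping the $NN'$ coordinates of $\theta_K$ into $N$ blocks of size $N'$ indexed by the destination time $j \in [N]$, the matrix $\sum_{t=1}^N \eta_t \otimes \eta_t$ is block-diagonal, with the $t$-th diagonal block equal to $Y_t Y_t^\top$, where $Y_t := (y_{t-N'}, y_{t-N'+1}, \ldots, y_{t-1})^\top \in \{0,1\}^{N'}$. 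It therefore suffices to show $\E[Y_t Y_t^\top] \succeq \rho I_{N'}$ for every $t \in [N]$, since taking expectations preserves block-diagonality.

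For any unit vector $a \in \R^{N'}$, expanding over the $2^{N'}$ realizations of $Y_t$ and restricting the sum to the $N'$ ``single-one'' patterns $e_l$ yields
\begin{equation*}
a^\top \E[Y_t Y_t^\top]\, a \;=\; \sum_{s \in \{0,1\}^{N'}} \Pr[Y_t = s]\,(a^\top s)^2 \;\ge\; \sum_{l=1}^{N'} \Pr[Y_t = e_l]\, a_l^2.
\end{equation*}
Thus the problem reduces to the uniform lower bound $\Pr[Y_t = e_l] \ge \rho$ for every $l \in [N']$. To establish it, I apply the chain rule using Lemma \ref{lemma:time-only-bernoulli-glm}: conditioning sequentially on history, $\Pr[Y_t = e_l \mid \mathcal{H}_{t-N'-1}]$ factors into $N'$ Bernoulli probabilities, equal to $1 - e^{-h\Lambda_i^*}$ at the single index where $b_i = 1$ (namely $i = t-N'+l-1$) and to $e^{-h\Lambda_i^*}$ at the $N'-1$ indices where $b_i=0$. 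Assumption \ref{assump:c1-Lambda-on-data}(i) gives $b \le \Lambda_i^* \le B$ almost surely on the relevant range, so the ``$1$'' factor is $\ge 1 - e^{-hb}$ and each ``$0$'' factor is $\ge e^{-hB}$. The product is at least $(1-e^{-hb})(e^{-hB})^{N'-1} = \rho$ uniformly in the prehistory; averaging over the prehistory preserves the bound, yielding $\Pr[Y_t = e_l] \ge \rho$ and therefore $a^\top \E[Y_t Y_t^\top]\, a \ge \rho \|a\|_2^2$.

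\textbf{Main obstacle.} The argument is mostly bookkeeping; the only subtlety is confirming that the per-factor bounds hold pointwise in the prehistory (not merely in expectation), so that the product bound survives marginalization. This is guaranteed by the a.s.\ form of Assumption \ref{assump:c1-Lambda-on-data}(i) together with the fact that Lemma \ref{lemma:time-only-bernoulli-glm} writes the conditional Bernoulli probabilities pointwise as $1 - e^{-h\Lambda_i^*}$.
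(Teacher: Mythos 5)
Your proof is correct and follows essentially the same route as the paper's: restrict the expectation to the $N'$ ``single-one'' realizations of the window $(y_{t-N'},\ldots,y_{t-1})$, factor their probabilities via the conditional Bernoulli form from Lemma \ref{lemma:time-only-bernoulli-glm}, and use the a.s.\ bounds $b\le\Lambda_i^*\le B$ (Lemma \ref{lemma:c1-Lambda-on-data-implication}) to get the uniform lower bound $\rho$ per pattern. Your explicit block-diagonal framing is just a restatement of the paper's observation that $\langle\eta_t,v\rangle$ involves only the coordinates $v_{i,t}$, so the two arguments coincide.
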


\begin{lemma}[Strong monotonicity of $G(z)$]\label{lemma:G-kappa-monotone}
Under Assumption \ref{assump:c1-Lambda-on-data}, 
$G(z)$ is $\kappa$-monotone on $\Theta_K$, where
\begin{equation}\label{eq:def-kappa-VI}
\kappa :=  e^{-hb} b h^2  e^{- B \tau_{\rm max}}.
\end{equation}
\end{lemma}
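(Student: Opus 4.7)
The plan is to reduce the monotonicity inequality to a quadratic form involving $\sum_{t=1}^N \eta_t \otimes \eta_t$ and then invoke Lemma \ref{lemma:eta-eta-positivity}. Concretely, fix $z_1, z_2 \in \Theta_K$. Because $\Lambda_t(z) = \mu + \langle \eta_t, z\rangle$ is affine in $z$, we have $\Lambda_t(z_1)-\Lambda_t(z_2) = \langle \eta_t, z_1-z_2\rangle$, so the definition \eqref{eq:def-Gz-VI} gives
\begin{equation*}
\langle G(z_1) - G(z_2), z_1 - z_2 \rangle
= \E \sum_{t=1}^N \big( \phi(h \Lambda_t(z_1)) - \phi(h \Lambda_t(z_2)) \big) \big( \Lambda_t(z_1) - \Lambda_t(z_2) \big),
\end{equation*}
where the $y_t$ terms cancel between $G(z_1)$ and $G(z_2)$.

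Next I would apply the mean value theorem to $\phi(x) = 1 - e^{-x}$, which has $\phi'(x) = e^{-x}$. For each realization and each $t$, there is $\xi_t$ between $h\Lambda_t(z_1)$ and $h\Lambda_t(z_2)$ with $\phi(h\Lambda_t(z_1)) - \phi(h\Lambda_t(z_2)) = e^{-\xi_t}\, h(\Lambda_t(z_1) - \Lambda_t(z_2))$. By Assumption \ref{assump:c1-Lambda-on-data}(ii), both $h\Lambda_t(z_i) \in [hb, hB]$ almost surely for $z_i\in\Theta_K$, so $\xi_t \le hB$ and $\phi'(\xi_t) \ge e^{-hB}$. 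Substituting and using $\Lambda_t(z_1)-\Lambda_t(z_2) = \langle \eta_t, z_1 - z_2\rangle$,
\begin{equation*}
\langle G(z_1) - G(z_2), z_1-z_2\rangle
\ge h\, e^{-hB} \, \E \sum_{t=1}^N \langle \eta_t, z_1-z_2\rangle^2
= h\, e^{-hB} \, (z_1-z_2)^\top \Big( \E \sum_{t=1}^N \eta_t \otimes \eta_t \Big) (z_1-z_2).
\end{equation*}

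By Lemma \ref{lemma:eta-eta-positivity}, the inner expectation dominates $\rho I_{NN'}$ with $\rho = (1-e^{-hb})e^{-hB(N'-1)}$. Hence
\begin{equation*}
\langle G(z_1) - G(z_2), z_1-z_2 \rangle
\ge h\, e^{-hB} (1-e^{-hb}) e^{-hB(N'-1)} \|z_1 - z_2\|_2^2
= h\,(1-e^{-hb})\, e^{-hB N'}\, \|z_1-z_2\|_2^2.
\end{equation*}
The last step is the elementary inequality $1 - e^{-x} \ge x e^{-x}$ for $x \ge 0$ (which follows by noting the derivative of $1-e^{-x}-xe^{-x}$ equals $xe^{-x}\ge 0$ and the function vanishes at $0$), applied with $x=hb$. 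This yields $1 - e^{-hb} \ge hb\, e^{-hb}$, so
\begin{equation*}
\langle G(z_1) - G(z_2), z_1 - z_2\rangle \ge e^{-hb}\, b\, h^2\, e^{-hB N'} \|z_1 - z_2\|_2^2 = \kappa \|z_1 - z_2\|_2^2,
\end{equation*}
since $\tau_{\max} = N' h$. I expect no serious obstacle; the only nontrivial ingredient beyond Lemma \ref{lemma:eta-eta-positivity} is the slightly non-obvious bound $1 - e^{-hb} \ge hb\,e^{-hb}$, which is needed precisely to convert the factor $1-e^{-hb}$ from the covariance lower bound into the clean form $bh\cdot e^{-hb}$ appearing in $\kappa$.
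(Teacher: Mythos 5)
Your proposal is correct and follows essentially the same route as the paper's proof: the same reduction to the quadratic form $\E\sum_{t}\langle\eta_t,z_1-z_2\rangle^2$ via the mean value theorem bound $\phi'\ge e^{-hB}$ on $[hb,hB]$, the same appeal to Lemma \ref{lemma:eta-eta-positivity}, and the same final conversion $1-e^{-hb}\ge hb\,e^{-hb}$ to arrive at $\kappa$. No gaps.
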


\begin{theorem}[Kernel recovery by VI]\label{thm:VI-convergence}
Under Assumption \ref{assump:c1-Lambda-on-data}, 
%let $z^*$ be the true kernel parameter.
suppose $y^{(k)} = \{ y_t^{(k)} \}_t$ are i.i.d. trajectories across $k$,
and $z_k$ is computed from \eqref{eq:VI-SGD-scheme} starting from $z_0 \in \Theta_K$,
with step size
\[
\gamma_k = \frac{1}{\kappa (k+1)},
\]
where $\kappa$ is as in \eqref{eq:def-kappa-VI}.
Then, with $C:= B(T+ \tau_{\rm max} )  \wedge (N + N')$, 
we have
\[
\E \|z_k - z^* \|_2^2 \le \frac{4 C^2 }{\kappa^2} \frac{1}{k+1}, \quad k = 0, 1,  \cdots.
\]
\end{theorem}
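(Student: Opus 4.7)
The plan is to follow the classical stochastic approximation argument for strongly monotone VI, adapted to the specific form of $\hat G^{(m)}$ in \eqref{eq:formula-Gtheta-timeonly}. Let $\calF_{k-1} := \sigma\{y^{(1)},\ldots,y^{(k-1)}\}$; since the trajectories are i.i.d., $z_{k-1}$ is $\calF_{k-1}$-measurable and $y^{(k)}$ is independent of $\calF_{k-1}$, giving the unbiasedness identity $\E[\hat G^{(k)}(z_{k-1})\mid \calF_{k-1}] = G(z_{k-1})$. Applying non-expansiveness of the projection onto $\Theta_K$ to the update \eqref{eq:VI-SGD-scheme} and expanding the square yields
\[
\|z_k-z^*\|_2^2 \;\le\; \|z_{k-1}-z^*\|_2^2 \;-\; 2\gamma_k\,\langle \hat G^{(k)}(z_{k-1}), z_{k-1}-z^*\rangle \;+\; \gamma_k^2\,\|\hat G^{(k)}(z_{k-1})\|_2^2.
\]
Taking $\E[\,\cdot\,\mid\calF_{k-1}]$, the cross term becomes $\langle G(z_{k-1}), z_{k-1}-z^*\rangle \ge \kappa\|z_{k-1}-z^*\|_2^2$ by the $\kappa$-monotonicity of $G$ (Lemma \ref{lemma:G-kappa-monotone}) combined with $G(z^*)=0$ from \eqref{eq:G(zstar)=0}.

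Combined with a uniform second-moment bound $\E\|\hat G^{(k)}(z)\|_2^2 \le 4C^2$ for $z \in \Theta_K$ (justified below), this produces the Robbins--Monro style recursion $e_k \le (1-2\gamma_k\kappa)\,e_{k-1} + 4\gamma_k^2 C^2$ with $e_k := \E\|z_k-z^*\|_2^2$. Substituting $\gamma_k = 1/(\kappa(k+1))$ gives $e_k \le (1-2/(k+1))\,e_{k-1} + 4C^2/(\kappa^2(k+1)^2)$. Multiplying by $k(k+1)$ converts this into the telescoping inequality
\[
k(k+1)\,e_k \;-\; (k-1)k\,e_{k-1} \;\le\; \frac{4C^2}{\kappa^2}\cdot\frac{k}{k+1} \;\le\; \frac{4C^2}{\kappa^2}.
\]
Summing from $k=1$ to $K$, the left side telescopes to $K(K+1)e_K$ (the $k=0$ contribution vanishes since $(k-1)k=0$ at $k=1$), yielding $e_K \le 4C^2/(\kappa^2(K+1))$ for $K\ge 1$. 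The $k=0$ case is just the a priori bound $\|z_0-z^*\|_2^2 \le 4C^2/\kappa^2$, which follows from controlling the diameter of $\Theta_K$: Assumption \ref{assump:c1-Lambda-on-data}(ii) applied to single-event history realizations forces each $K_{i,j}$ to lie in $[b-B,\,B-b]$, and the resulting diameter is small relative to $C/\kappa$ because $\kappa$ carries exponential factors $e^{-B\tau_{\max}}e^{-hb}$ in the denominator.

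The main obstacle is the uniform variance bound $\E\|\hat G(z)\|_2^2 \le 4C^2$. The key structural observation is that, by \eqref{eq:def-innerprod-etat-K}, the vector $\eta_t$ has entries $(\eta_t)_{i,j} = y_i \mathbf{1}\{j=t,\ 0<t-i\le N'\}$, so a direct computation gives $\|\hat G(z)\|_2^2 = \sum_{j=1}^N a_j^2\,n_j$ where $a_j := \phi(h\Lambda_j(z))-y_j \in [-1,1]$ and $n_j := \sum_{i=j-N'}^{j-1} y_i$. Two complementary estimates are then needed, matching the two factors in $C = B(T+\tau_{\max}) \wedge (N+N')$: (i) using $|a_j|\le 1$ together with the deterministic bound $\sum_j n_j \le N'\cdot n_{\mathrm{total}} \le N'(N+N')\le (N+N')^2$ gives an almost-sure bound $\|\hat G\|_2^2 \le (N+N')^2$; (ii) using $\phi(h\Lambda_t(z))\le hB$ and $\E[y_j\mid\calF_{j-1}] = \phi(h\Lambda_j^*)\le hB$ from Lemma \ref{lemma:time-only-bernoulli-glm}, the tower property gives $\E[a_j^2\mid\calF_{j-1}] \lesssim hB$ and $\E[n_{\mathrm{total}}] \le hB(N+N')$, leading to $\E\|\hat G(z)\|_2^2 \lesssim h^2 B^2 (N+N')^2$. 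Taking the tighter of the two reproduces $4C^2 = 4[(N+N')\wedge B(T+\tau_{\max})]^2$ (up to constants, which motivate the factor 4). Once this variance bound is in place, the remainder of the argument is the standard telescoping computation above.
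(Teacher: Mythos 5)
Your proposal follows the same core argument as the paper's proof: non-expansiveness of ${\rm Proj}_{\Theta_K}$ plus expansion of the square, unbiasedness $\E[\hat G^{(k)}(z_{k-1})\mid y^{(1)},\dots,y^{(k-1)}]=G(z_{k-1})$ from the i.i.d.\ assumption, the cross-term bound via $\kappa$-monotonicity (Lemma \ref{lemma:G-kappa-monotone}) together with $G(z^*)=0$, and the uniform second-moment bound $\E\|\hat G(z)\|_2^2\le 4C^2$, which is exactly the paper's Lemma \ref{lemma:boundedness-VI}(ii); your identity $\|\hat G(z)\|_2^2=\sum_j a_j^2 n_j$ and the two regimes governed by $(hB)\wedge 1$ reproduce that computation, though your ``up to constants'' bookkeeping would need to be tightened to land exactly on the factor $4$ (it does work out, as in the paper's proof). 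The genuine difference is how you finish: instead of the paper's induction proving $d_k\le S/(k+1)$ (following the argument of Proposition 2 in \cite{juditsky2019signal}), you multiply the recursion by $k(k+1)$ and telescope. This is valid, and it has the pleasant feature that the coefficient $(k-1)k$ vanishes at $k=1$, so for $k\ge 1$ your bound needs no control of $\E\|z_0-z^*\|_2^2$ at all, whereas the paper's induction uses the diameter bound as its base case.

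The one soft spot is your treatment of $k=0$. You correctly observe that Assumption \ref{assump:c1-Lambda-on-data}(ii), evaluated on single-event history realizations (which have positive probability by Lemma \ref{lemma:c1-Lambda-on-data-implication}), forces each entry of $z\in\Theta_K$ into $[b-B,B-b]$, so ${\rm diam}(\Theta_K)\le 2(B-b)\sqrt{NN'}$. But you then assert ${\rm diam}(\Theta_K)\le 2C/\kappa$ ``because $\kappa$ carries exponential factors,'' which is a heuristic, not a proof: the inequality must hold for all admissible $(b,B,h,N,N')$, and verifying it requires a case check over the two regimes of $C=hB(N+N')\wedge(N+N')$ (e.g.\ when $hB\le 1$ one uses $B/(bh)\ge B\ge B-b$, and when $hB\ge 1$ one uses $e^{BhN'}\ge (Bh)^2/2$); the exponential factor alone is not what makes it work in the first regime. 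The inequality is true, so this is a missing verification rather than a wrong step, but as written it is a gap. The paper avoids the issue entirely with a cleaner argument: $\kappa D^2\le\langle G(z_+)-G(z_-),z_+-z_-\rangle=\langle G_1(z_+)-G_1(z_-),z_+-z_-\rangle\le 2CD$ by Lemma \ref{lemma:boundedness-VI}(i), giving $D\le 2C/\kappa$ directly; adopting that step (or supplying the case check above) closes your argument.
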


As will be shown in the proof, the i.i.d. trajectory assumption is only used to ensure that  $\E [ \hat G^{(k)}(z_{k-1}) | y^{(m)}, m=1, \cdots k-1 ]  = G(z_{k-1})$ and thus can potentially be relaxed. 
Meanwhile, Theorem \ref{thm:VI-convergence} gives an ``in-expectation'' parameter recovery bound, and to derive a recovery guarantee beyond the in-expectation type result, one can use the i.i.d. or possibly relaxed independence assumptions on samples to induce concentration of the error $\| z_k - z^*\| $ around its expectation.
Such theoretical extensions are omitted here.

\begin{remark}[Scaling in the continuity limit]\label{rk:cont-limit-scaling}
We consider the limit when the length of the time grid interval $h \to 0$,
where the kernel matrix $K$ approaches the kernel function $k(t',t)$
and the time-uncertainty model (the unit-uncertainty setting) approaches the time-certain case.
We assume that the time horizon $T = h N$ and the influence time lag $\tau_{\rm max} = h N' $ stay as $O(1)$ constants in the limit, and we call this the {\it continuity limit}. 
In other words, $N \sim N' \sim 1/h$.
We also assume that the constants $b$ and $B$ in Assumption \ref{assump:c1-Lambda-on-data} are fixed positive constants. 
In this asymptotic, for large $N$ (and $N'$) and equivalently small $h$, $C \sim B(T + \tau_{\rm max}) $ is $O(1)$,
and $\kappa \sim h^2 e^{-B \tau_{\rm max}}$,
and thus the factor $C/\kappa$ in the convergence bound in Theorem    \ref{thm:VI-convergence}  scales as $e^{B\tau_{\rm max}}/h^2$, which $\sim h^{-2}$ and we kept the exponential factor in the constant. 
\end{remark}

In practice, we compute the VI using batch-based stochastic updates, and the constraint of $\Lambda_t$ being physical,
corresponding to the lower bound in Assumption \ref{assump:c1-Lambda-on-data}(ii),
is enforced by a barrier penalty. The details will be introduced in Section \ref{sec:algo}.

We give a few comments on the comparison of the GD and VI optimization dynamics:
First, consider the continuity limit (Remark \ref{rk:cont-limit-scaling}), 
since $b$ and $B$ are fixed $O(1)$ constants,
both the (in expectation)
VI monotonicity modulus $\kappa$ (Lemma \ref{lemma:G-kappa-monotone})
 and the GD convexity $\kappa'$ (Proposition \ref{prop:GD-strongly-convex})
 scale as $h^2 e^{-B \tau_{\rm max}}$, here we keep the $O(1)$ factor $e^{-B \tau_{\rm max}}$ which is exponential. 
 In short, $\kappa$ and $\kappa'$ both $\sim h^2$ which are comparable. 
 This suggests that, in theory, the in-expectation convergence behavior of the two optimization approaches is comparable; for example, the convergence rate should be the same up to a constant factor. 

 However, comparing the GD gradient field \eqref{eq:expression-hatFm} to the VI gradient field \eqref{eq:formula-Gtheta-timeonly}, we see that $\hat F^{(m)}$ has an extra multiplicative factor ${ h}/{ \phi( h  \Lambda_t^{(m)}( z) )} $ for each time $t$. While $\phi( h  \Lambda_t^{(m)}( z) ) \sim h$ under Assumption \ref{assump:c1-Lambda-on-data}, and thus this factor does not change the order of magnitude of the gradient field, the term $\phi( h  \Lambda_t^{(m)}( z) )$ on the denominator is a random variable and can introduce additional fluctuation especially when the conditional intensity at $t$ is small. 
This is consistent with our empirical observation that VI dynamics are more stable than GD, see Appendix \ref{apdx:GD-instability}.

\subsection{Special structures in influence kernel}\label{subsec:special-structure-psi}

\paragraph{Time-invariant kernel.}
When the influence kernel $k(t',t)$ is time-invariant, see Remark \ref{rk:stationary-kernel}, we know that the kernel matrix $K$ has the pattern that $K_{i,j} = \Psi_{i, j-i} = \psi_{j-i}$, where $\{ \psi_l, l=1,\cdots, N' \}$ is a vector of length $N'$.  Thus, the parameters in the kernel matrix are the vector $\psi \in \R^{N'}$, assuming that $\mu$ is known and fixed.
The likelihood model  \eqref{eq:def-Ltheta-timeonly}\eqref{eq:lm-theta-timeonly} still holds, where the kernel parametrization of $\Lambda_t^{(m)}$ is 
\begin{equation}\label{eq:lambda-psi-para-stationary}
\Lambda_t^{(m)}(\psi)= \mu + \langle \xi_t^{(m)},  \psi \rangle,
\quad \text{where }
\langle \xi_t, \psi \rangle = \sum_{i = t-N'}^{t-1} y_i \psi_{t-i}.
\end{equation}
The vector $\xi_t \in \R^{N'}$ consists of binary-valued entries determined by the history of $y_t$ from $t-N'$ to $t-1$.
The estimation by VI and GD applies here, and the optimization is reduced to solving for the vector $\psi$ of dimension $N'$ instead of the kernel parameter $\theta_K $ of dimension $NN'$  as in \eqref{eq:def-theta-timeonly-Omega}.
We provide additional details on VI recovery below.

Similar to \eqref{eq:formula-Gtheta-timeonly}, the VI vector field is 
\begin{equation}\label{eq:formula-Gtheta-timeonly-stationray}
\hat G_s( \psi ) =  
 \frac{1}{M} \sum_{m=1}^M 
 \hat G_s^{(m)}( \psi ),
 \quad 
 \hat G_s^{(m)}( \psi ) = 
\sum_{t=1}^N \left(\phi( h \Lambda_t^{(m)}( \psi )) - y_t^{(m)} \right) \xi_t^{(m)},
\end{equation}
and we use the subscript $_s$ to stand for ``stationary''. Define
\begin{equation}\label{eq:def-Gz-VI-stationary}
G_s( \psi )
: =  \E \hat G_s^{(m)}( \psi ) 
= \E \sum_{t=1}^N \left(\phi( h \Lambda_t (\psi)) - y_t \right) \xi_t,
\end{equation}
and again we have $G_s(\psi^*) = 0$ where $\psi^*$ is the true time-invariant kernel.
We assume that the true conditional intensity $\Lambda_t^*$ 
satisfies Assumption \ref{assump:c1-Lambda-on-data}(i), and similar to (ii) we assume that 

\vspace{5pt}
(ii')  There exists a non-empty, compact and convex set $\Theta_\psi \subset \R^{N'}$
such that the true kernel parameter  $\psi^* \in \Theta_\psi $, and for any $\psi \in \Theta_\psi$,
%\begin{equation*}%\label{eq:assump-Lamdbat-b-stationary} 
$b \le \Lambda_t( \psi )  \le B$, 
$\forall t= 1, \cdots, N$, a.s.
%\end{equation*}
\vspace{5pt}

Lemma \ref{lemma:c1-Lambda-on-data-implication} still applies. The following lemma is a counterpart of Lemma \ref{lemma:G-kappa-monotone}.

\begin{lemma}[Strong monotonicity of $G_s(z)$]\label{lemma:G-kappa-monotone-stationary}
Suppose the influence kernel is time-invariant,
under Assumption \ref{assump:c1-Lambda-on-data}(i) and the assumption (ii') on $\Theta_\psi$  as above,
$G_z(\psi)$ is $\kappa_s$-monotone on $\Theta_\psi$, where
\begin{equation*}%\label{eq:def-kappa-VI-stationary}
\kappa_s :=  e^{-hb} bT  h  e^{- B \tau_{\rm max}}.
\end{equation*}
\end{lemma}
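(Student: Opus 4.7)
The plan is to follow the template of Lemma \ref{lemma:G-kappa-monotone} for the non-stationary case, adapted to the stationary parametrization, and then observe a factor-$N$ gain that accumulates when the same time-invariant vector $\psi$ acts across all $t$. Concretely, I would compute
\[
\langle G_s(\psi_1) - G_s(\psi_2), \psi_1-\psi_2 \rangle
= \E \sum_{t=1}^N \bigl(\phi(h\Lambda_t(\psi_1)) - \phi(h\Lambda_t(\psi_2))\bigr)\, \langle \xi_t, \psi_1-\psi_2\rangle,
\]
then invoke the mean value theorem on $\phi$: since $\phi'(x) = e^{-x}$ and $\Lambda_t(\psi)$ is affine in $\psi$ with $\partial_\psi \Lambda_t = \xi_t$, one gets a factor $h\, e^{-h\tilde\Lambda_t}\, \langle \xi_t, \psi_1-\psi_2\rangle$ in each summand. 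Assumption (ii') on $\Theta_\psi$ guarantees $\tilde\Lambda_t \in [b,B]$ a.s., so $e^{-h\tilde\Lambda_t} \ge e^{-hB}$, producing the lower bound
\[
\langle G_s(\psi_1) - G_s(\psi_2), \psi_1-\psi_2 \rangle
\;\ge\; h\, e^{-hB}\, (\psi_1-\psi_2)^\top \Bigl( \E \sum_{t=1}^N \xi_t \otimes \xi_t \Bigr) (\psi_1-\psi_2).
\]

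The main substantive step is therefore a stationary analogue of Lemma \ref{lemma:eta-eta-positivity}, namely
$\E \sum_{t=1}^N \xi_t \otimes \xi_t \succeq \rho_s I_{N'}$
for an explicit $\rho_s$. In the non-stationary setting each $\eta_t$ lived in its own ``column block'' of the huge $NN'$-dimensional space, so the block-diagonal sum already had a lower bound $\rho$ per block. In the stationary setting, by contrast, all $\xi_t \in \R^{N'}$ lie in the same small space and add up, so the analogue gains a factor $N$. For each individual $t$ I would lower-bound $\E (\xi_t^\top v)^2$ for $v \in \R^{N'}$ by restricting to the disjoint events $E_{l^*,t} = \{y_{t-l^*}=1,\; y_{t-l}=0 \text{ for } l \ne l^*\}$, on which $\xi_t^\top v = v_{l^*}$. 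Using Lemma \ref{lemma:time-only-bernoulli-glm} together with Assumption \ref{assump:c1-Lambda-on-data}(i) and Lemma \ref{lemma:c1-Lambda-on-data-implication}, iterated conditioning in chronological order gives $\Pr(y_s=1\mid\text{past}) \ge 1-e^{-hb}$ and $\Pr(y_s=0\mid\text{past}) \ge e^{-hB}$, hence $\Pr(E_{l^*,t}) \ge (1-e^{-hb})\, e^{-hB(N'-1)}$. Summing the disjoint events over $l^*$ gives $\E \xi_t\otimes \xi_t \succeq (1-e^{-hb})\,e^{-hB(N'-1)} I_{N'}$ for each $t$, and summing over $t=1,\ldots,N$ yields
\[
\rho_s \;=\; N(1-e^{-hb})\,e^{-hB(N'-1)}.
\]

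Putting the pieces together, $\kappa_s \ge h\,e^{-hB}\,\rho_s = Nh(1-e^{-hb})\,e^{-hBN'} = T(1-e^{-hb})\, e^{-B\tau_{\rm max}}$, using $T=Nh$ and $\tau_{\rm max}=hN'$. Finally, to match the form displayed in the statement I would apply the elementary inequality $1-e^{-x} \ge x e^{-x}$ (equivalent to $e^x \ge 1+x$) at $x=hb$, obtaining $1-e^{-hb} \ge hb\,e^{-hb}$ and hence $\kappa_s \ge e^{-hb}\, bTh\, e^{-B\tau_{\rm max}}$, as claimed.

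The routine calculations (MVT, bounding $\phi'$, the elementary inequality, collecting exponents) are essentially the same as in the non-stationary proof. The one step that deserves care, and which I expect to be the main obstacle, is the lower bound on $\E \xi_t \otimes \xi_t$: one has to choose the ``one-hot history'' events $E_{l^*,t}$ correctly so that they are disjoint across $l^*$ (guaranteeing that squared cross terms cannot hurt), and the conditional-probability lower bounds must be applied in chronological order so that each $y_s$ is conditioned only on its past, matching the hypotheses of Lemma \ref{lemma:time-only-bernoulli-glm} and Assumption \ref{assump:c1-Lambda-on-data}(i).
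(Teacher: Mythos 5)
Your proposal is correct and follows essentially the same route as the paper's proof: the mean-value-theorem bound $\phi' \ge e^{-hB}$ on $[hb,hB]$ reduces the problem to a stationary analogue of Lemma \ref{lemma:eta-eta-positivity}, which the paper also establishes via the same one-hot-history events with probability at least $(1-e^{-hb})e^{-hB(N'-1)}$, summed over $t$ to pick up the factor $N$, and finished with $1-e^{-x}\ge x e^{-x}$ and $T=Nh$, $\tau_{\rm max}=hN'$. The only difference is cosmetic bookkeeping of where the elementary inequality is applied (the paper folds it into its definition of $\rho_s$), so there is nothing substantive to add.
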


Based on Lemma \ref{lemma:G-kappa-monotone-stationary}, one can prove a VI recovery result similar to Theorem \ref{thm:VI-convergence}. Specifically, let $\psi_k$ be the sequence computed from the stochastic VI scheme; we have
\[
\E \| \psi_k - \psi^*\|_2^2 \le \frac{4C_s^2}{ \kappa_s^2} \frac{1}{k+1}, \quad k = 0, 1, \cdots,
\]
where $C_s$ is an $O(1)$ constant involving $B$, $T$ and $\tau_{\rm max}$ and derived using a similar argument as Lemma \ref{lemma:boundedness-VI}. Our asymptotic notation here refers to the continuity limit per Remark \ref{rk:cont-limit-scaling}. 
Note that $\kappa_s \sim h e^{-B \tau_{\rm max}}$ while previously with time-varying kernel $\kappa \sim h^2 e^{-B \tau_{\rm max}}$.
As a result, the factor $C_s/\kappa_s$ in the convergence bound $\sim h^{-1}$, while previously in Theorem \ref{thm:VI-convergence}, the factor $C/\kappa \sim h^{-2}$.  
This improvement is essentially because with a time-invariant kernel, we have $N' \sim h^{-1}$  many parameters to recover, while with a time-varying kernel, there are $N N' \sim h^{-2}$ many parameters.

\paragraph{Low-rank structure.}
There are different possible ways to impose low-rank structures on the influence kernel, 
and in our discrete-time model, naturally on the kernel matrix. Recalling the representation of the kernel matrix as $K$ and $\Psi$ in Section \ref{subsec:kernel-est-parametrization} and Figure \ref{fig:time-only-K-diag}.
We think it is natural to impose the low-rank structure on $\Psi$, as previously adopted in \cite{dong2022spatio} for influence kernel functions. 
In particular, the time-invariant kernel corresponds to the case where
$\Psi = {\bf 1}_N \psi^T$ (after filling the entries outside the region of $NN'$ entries to be recovered from data),
that is, $\Psi$ is a rank-1 matrix.

Theoretically, the kernel matrix is constructed by a grid average of the kernel function, see \eqref{eq:def-Kij-discrete-time}. Thus, the low rankness of the kernel matrix can be a result of the discretization of a continuous kernel function $k(t',t)$, assumed to have certain regularity with respect to the varying times $t'$ and $t$. 
In Section \ref{sec:experiments}, we experimentally study several simulated examples where the kernel matrix is induced from some continuous function $k$, and numerically, the matrix $\Psi$ can be approximated by a low-rank matrix.

We need some additional tricks when applying the low-rank constraint in practice. 
As has been explained in Section \ref{subsec:kernel-est-parametrization}, the $NN'$ learnable parameters in $\theta_K$ form a parallelogram-shaped region in the matrix $\Psi$ which is of size $(N+N')\times N'$, see Figure \ref{fig:time-only-Phi-illus}. Thus, even the underlying $\Psi$ has low-rankness (after filling the $(N+N')\times N'$ matrix), the inferred $\Psi$ by optimization has no values (or zero values) outside the parallelogram region, and this interferes with the low-rankness. 
To overcome this issue, observe that when $\Psi$ is rank-$r$, then any $m\times N'$ submatrix $\Psi'$ of $\Psi$ (by retrieving rows) is at most rank-$r$,  and the rows of $\Psi'$ are spanned by the $r$ right singular vectors of $\Psi$. We choose the submatrix $\Psi' $ to be the ``middle chunk'' of $\Psi$ that has full rows inside the parallelogram region, indicated by dark blue entries in Figure \ref{fig:time-only-Phi-illus}. In practice, we impose low rankness on $\Psi'$ using a truncated SVD, and use the right singular vectors of $\Psi'$ to apply a projection of the inferred full matrix $\Psi$, see more in Section \ref{sec:algo}.

\section{Time-uncertainty point process on network}\label{sec:model-with-graph}

In this section, we extend the time uncertainty model to the network setting, namely when the event data also contains location information $u_i$ for the $i$-th event and $u_i$ is a node on a graph. 
All the proofs can be found in Appendix \ref{app:proofs-sec:5}.

\subsection{Continuous-time formulation}\label{subsec:model-graph-cont-time}

\paragraph{Filtration and conditional intensity.}
We consider the event data on a network $\calV = \{1, \cdots, V \}$ where the event time $t_i$ has uncertainty, and the event location $u_i \in \calV$.
The classical scenario with exact event time is provided in \eqref{eq:ti-ui-on-graph}, as a comparison. 
Following the formulation in Section \ref{sec:model-time-only}, we consider $t_i \in \Delta_i$ same as before. 
Again, let $\Delta_i$ also denote the event that $\{ t_i \in \Delta_i \}$, the filtration is defined as 
\[
\tilde{\calF}_k :=  \sigma\{ ( \Delta_i, u_i), \, i=1, \cdots, k \}.
\]
We define the survival function $S_k(t)$ and hazard function $h_k(t)$ same as in \eqref{eq:def-Sk-time-only} and \eqref{eq:def-hk-old-hawkes}, respectively, that is, $S_k(t)$ and $h_k(t)$ only addresses the time of the event regardless of the event location.
We then have that \eqref{eq:def-hk-time-uncert}\eqref{eq:Sk-expression-by-hk} also hold.

To be able to incorporate event location information $u_i$, we define $\tilde{f}_k(u|t)$ by
\begin{equation}\label{eq:def-tilde-fk}
\tilde{f}_k(u|t)  
:= \Pr [ u_k = u| t_k \in [t, t+dt), \, \tilde{\calF}_{k-1} ],
\end{equation}
and then, for $t > \Delta_{k-1}^r$,
\begin{align}
\Pr[ t_k \in [t, t+dt), u_k =u | \tilde{\calF}_{k-1} ]
& = \Pr[ t_k \in [t, t+dt)  | \tilde{\calF}_{k-1} ]\tilde{f}_k(u|t)  \nonumber \\
& = -S_k'(t) \tilde{f}_k(u|t) dt  \nonumber  \\
& = S_k(t) h_k(t) \tilde{f}_k(u|t) dt 
    \quad \text{(by \eqref{eq:def-hk-time-uncert})}\nonumber  \\
& = S_k(t) \lambda( t, u) dt, \label{eq:Pr-derivation-on-graph-1}
\end{align}
where we define 
\begin{equation}\label{eq:def-lambda-tu}
\lambda(t,u) := h_k(t) \tilde{f}_k(u|t).
\end{equation}
By that $\tilde{f}_k(u|t)$ is a conditional distribution of $u \in \calV$ and thus
$\sum_{u \in\calV} \tilde{f}_k(u|t) = 1 $,  we have
\begin{equation}\label{eq:equation-bar-lambda(t)}
\sum_{u \in\calV} \lambda(t,u) = h_k(t) =: \bar{\lambda}(t),
\end{equation}
where we define $\bar{\lambda}(t)$ to equal $h_k(t)$ piece-wisely on $(\Delta_{k-1}^r, \Delta_{k}^r]$, similarly as in 
\eqref{eq:def-lambdat-time-only}.
Back to \eqref{eq:Pr-derivation-on-graph-1},
by integrating $dt$ over the interval $\Delta_k$ on both sides,
we have
\begin{align}
\Pr [ t_k \in \Delta_k, \, u_k = u | \tilde{\calF}_{k-1}]
& = \int_{\Delta_k^l}^{\Delta_k^r} S_k(t) \lambda( t, u) dt  \nonumber  \\
& = \int_{\Delta_k^l}^{\Delta_k^r} \lambda( t, u) e^{ - \int_{\Delta_{k-1}^r}^t h_k(s) ds} dt  \quad \text{(by \eqref{eq:Sk-expression-by-hk})}
 	 \label{eq:Pr-tk-cont-time-uncert-on-graph-form-a}\\
& = e^{ - \int_{\Delta_{k-1}^r}^{\Delta_{k}^r}  \bar{\lambda} (s) ds}  
	\int_{\Delta_k^l}^{\Delta_k^r} \lambda( t, u) e^{ \int_t^{\Delta_{k}^r}  \bar{\lambda} (s) ds} dt.
	\label{eq:Pr-tk-cont-time-uncert-on-graph}
\end{align}

\paragraph{Likelihood model.}
With these notions in hand, in the following lemma, we derive the expression of the likelihood 
\[
\calL :=  \Pr [   (\Delta_1, u_1),  \cdots, ( \Delta_n, u_n), \, t_{n+1} > T] 
\]
\begin{lemma}\label{lemma:logL-cont-time-uncert-on-graph}
Recall that $\bar{\lambda}(t) = \sum_{u \in\calV} \lambda(t,u)$, 
\begin{equation}\label{eq:logL-cont-time-uncert-on-graph}
\log \calL = \sum_{k=1}^n \log \left( \int_{\Delta_k^l}^{\Delta_k^r} \lambda( t, u_k) e^{ \int_t^{\Delta_{k}^r}  \bar{\lambda} (s) ds} dt
		\right) - \int_0^T \bar{\lambda}(s) ds.
\end{equation}
\end{lemma}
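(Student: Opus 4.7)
The plan is to decompose $\calL$ along the filtration $\tilde{\calF}_k$ via the chain rule, writing
\begin{equation*}
\calL = \Pr[t_{n+1} > T \mid \tilde{\calF}_n]\,\prod_{k=1}^n \Pr[t_k \in \Delta_k,\, u_k = u_k \mid \tilde{\calF}_{k-1}],
\end{equation*}
with the convention that $\tilde{\calF}_0$ is trivial and $\Delta_0^r = 0$. Each single-event conditional probability in the product is already in closed form as \eqref{eq:Pr-tk-cont-time-uncert-on-graph}, so the remaining work is just to (i) express the trailing no-event factor and (ii) collect the logarithms.

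For the trailing factor, I would invoke the survival function for a hypothetical $(n+1)$-th event. By the piecewise identification of $\bar{\lambda}$ with $h_k$ (cf.\ \eqref{eq:equation-bar-lambda(t)} and the piecewise extension used for $\bar{\lambda}$ beyond $\Delta_n^r$), $h_{n+1}(t) = \bar{\lambda}(t)$ on $(\Delta_n^r, T]$ and $S_{n+1}(\Delta_n^r) = 1$, so \eqref{eq:Sk-expression-by-hk} gives
\begin{equation*}
\Pr[t_{n+1} > T \mid \tilde{\calF}_n] = S_{n+1}(T) = \exp\Bigl\{-\int_{\Delta_n^r}^T \bar{\lambda}(s)\,ds\Bigr\}.
\end{equation*}

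Plugging \eqref{eq:Pr-tk-cont-time-uncert-on-graph} into the product and taking logarithms, the $k$-th factor contributes the prefactor $-\int_{\Delta_{k-1}^r}^{\Delta_k^r}\bar{\lambda}(s)\,ds$ plus the logarithm of the integral appearing in the statement. Summing the prefactors across $k=1,\ldots,n$ and adding $-\int_{\Delta_n^r}^T \bar{\lambda}(s)\,ds$ from the trailing factor gives a telescoping sum that collapses to $-\int_0^T \bar{\lambda}(s)\,ds$, matching the last term of \eqref{eq:logL-cont-time-uncert-on-graph}, while the surviving log-of-integral terms reproduce the first sum verbatim.

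The substantive probabilistic work has essentially been done already in deriving \eqref{eq:Pr-derivation-on-graph-1}--\eqref{eq:Pr-tk-cont-time-uncert-on-graph}, where $\tilde{f}_k(u\mid t)$ in \eqref{eq:def-tilde-fk} serves as a regular conditional distribution making $\lambda(t,u)$ well-defined via \eqref{eq:def-lambda-tu}. The only step requiring care is the trailing survival factor and the piecewise extension of the hazard past $\Delta_n^r$; after that it is pure bookkeeping to check the telescoping and read off \eqref{eq:logL-cont-time-uncert-on-graph}.
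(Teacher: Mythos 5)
Your proposal is correct and follows essentially the same route as the paper's proof: chain-rule factorization of $\calL$ over the filtration $\tilde{\calF}_k$, substitution of \eqref{eq:Pr-tk-cont-time-uncert-on-graph} for each factor together with $S_{n+1}(T)=\exp\{-\int_{\Delta_n^r}^T \bar{\lambda}(s)\,ds\}$, and telescoping of the exponential prefactors into $-\int_0^T \bar{\lambda}(s)\,ds$ before taking logarithms. No gaps.
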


\begin{remark}[Reduction to classical scenario without time uncertainty]
We consider the limit where the length of the window $\Delta_i$ shrinks to zero, 
that is, $\Delta_i^{l}, \, \Delta_i^{r} \to t_i$  the certain event time for all $i$.
We claim that in this case, 
\begin{equation}\label{eq:Pr-tk-cont-time-uncert-on-graph-limit-tk}
\frac{1}{|\Delta_k|}\Pr [ t_k \in \Delta_k, \, u_k = u | \tilde{\calF}_{k-1}]
\to \lambda( t_k, u)  e^{ - \int_{t_{k-1}}^{t_{k}}  \bar{\lambda} (s) ds}.
\end{equation}
This is can be derived from \eqref{eq:Pr-tk-cont-time-uncert-on-graph-form-a}, which gives that the l.h.s. can be written as 
\[
\frac{1}{|\Delta_k|} \int_{\Delta_k^l}^{\Delta_k^r} \lambda( t, u) e^{ - \int_{\Delta_{k-1}^r}^t \bar{\lambda}(s) ds} dt, 
\]
and then using the continuity of $\lambda(t,u)$ (and $\bar{\lambda}(t)$) with respect to $t$.
The equation \eqref{eq:Pr-tk-cont-time-uncert-on-graph-limit-tk} leads to the expression
\[
\Pr [ (t_k, u) | \calF_{k-1}] = \lambda( t_k, u)  e^{ - \int_{t_{k-1}}^{t_{k}}  \bar{\lambda} (s) ds},
\]
where we recall that in the classical (time-certain) model, 
$\calF_{k}:=\sigma\{ (t_i, u_i), \, i=1,\cdots, k \}$. 
Then, the log-likelihood of the $n$ events can be shown to be
\[
\log \calL = \sum_{k=1}^n \log \lambda( t_k, u_k ) - \int_0^T  \bar{\lambda} (s) ds,
\]
which recovers the classical model without time uncertainty, see e.g. Eqn (8) in \cite{reinhart2018review}. 
\end{remark}

\begin{remark}[Reduction to time-only model]
Consider the special case where the network $\calV$ only has one node $u$.
In this case, $\bar{\lambda}(t)= \lambda(t,u)$ and define it to be $\lambda(t)$.
Then, one can verify that
\[
\int_{\Delta_k^l}^{\Delta_k^r} \lambda( t) e^{ \int_t^{\Delta_{k}^r}  {\lambda} (s) ds} dt
= e^{ \int_{\Delta_k^l}^{\Delta_{k}^r}  {\lambda} (s) ds} -1.
\]
This equality reduces the conditional probability \eqref{eq:Pr-tk-cont-time-uncert-on-graph} to \eqref{eq:Pr-Deltak-time-only},
and the log-likelihood \eqref{eq:logL-cont-time-uncert-on-graph} to \eqref{eq:l-lambda-cont-time-uncert}.
That is, the model is reduced to the time-only case in Section \ref{subsec:cont-time-time-only}.
\end{remark}

\paragraph{Influence kernel.}
We introduce the spatial-temporal influence kernel function $k(t', t; u', u)$ to parametrize the intensity function $\lambda(t,u)$.
By incorporating the spatial dependence in \eqref{eq:lambda-phi-cont-time-uncert}, we have
\begin{equation}\label{eq:lambda-phi-cont-time-on-graph}
    \lambda[k](t, u) = \mu(u) + \sum_{i, \Delta_i^r < t } \frac{1}{|\Delta_i|} \int_{\Delta_i^l}^{\Delta_i^r} k( t', t; u_i, u) dt',
\end{equation}
where $\mu(u) > 0$ for $u \in \calV$ is the base intensity on the network $\calV$.
Based on the continuous-time model \eqref{eq:logL-cont-time-uncert-on-graph} and \eqref{eq:lambda-phi-cont-time-on-graph}, we will derive a discrete-time model in the next subsection under additional assumptions.

\subsection{Discrete-time model and discrete parametrization}

We consider the time grid as in Section \ref{subsec:discrete-general-uncertainty} which has $N$ intervals $I_j$, $|I_j| = h=T/N$. 
For simplicity, we only consider the unit uncertainty setting as in Assumption \ref{assump:A3}(A3).

\paragraph{Time-quantized influence kernel.}
Suppose $\Delta_k = I_{j(k)}$, the log-likelihood \eqref{eq:logL-cont-time-uncert-on-graph} can be written as
\begin{equation}\label{eq:logL-cont-time-uncert-on-graph-A3}
\log \calL = \sum_{k=1}^n \log \left( \int_{I_{j(k)}^l}^{I_{j(k)}^r} \lambda( t, u_k) e^{ \int_t^{I_{j(k)}^r}  \bar{\lambda} (s) ds} dt
		\right) - \int_0^T \bar{\lambda}(s) ds.
\end{equation}
This, however, still cannot be expressed by the average of $\lambda(t,u)$ on $t\in I_j$ only,
due to that $\lambda( t, u) e^{ \int_t^{I_{j(k)}^r}  \bar{\lambda} (s) ds}$ takes different value over location $u$ and $t \in I_{j(k)}$.
To reduce to a discrete problem, we introduce the following assumption on the influence kernel.

\begin{assumption}[Time-quantized influence kernel on network]\label{assump:A4}
%(A4)
For any $u', u \in \calV$ and $ 1 \le i < j \le N $, 
$k(t', t; u', u)$ is constant on $(t',t) \in I_i \times I_j$, and we denote the value as $K_{i,j}(u', u)$.
\end{assumption}
A useful consequence of the assumption is that the influence function $\lambda(t,u)$ is also quantized in time, which is proved in the following lemma.

\begin{lemma}[Discrete-time intensity]\label{lemma:quant-time-lambda(t,u)}
Under (A3) and Assumption \ref{assump:A4}, %(A4), 
for any $u \in \calV$, $\lambda(t,u)$ as defined in \eqref{eq:lambda-phi-cont-time-on-graph} is constant over $t \in I_j$.
We denote the value as $\Lambda_j(u)$, and
\begin{equation}\label{eq:Lambdaj-base-on-graph}
\Lambda_j  (u) = \mu(u) +  \sum_{\{k: \, j(k) < j\} }  K_{j(k), j}(u_k, u).
\end{equation}
\end{lemma}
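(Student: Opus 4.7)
The plan is to substitute the two simplifying assumptions into the defining formula \eqref{eq:lambda-phi-cont-time-on-graph} and verify that the integrand and the index set of the sum both become independent of $t$ as $t$ varies over $I_j$.

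First I would expand the sum in \eqref{eq:lambda-phi-cont-time-on-graph} using (A3). Under unit-time uncertainty, each $\Delta_i$ equals $I_{j(i)}=((j(i)-1)h, j(i)h]$, so $|\Delta_i|=h$, $\Delta_i^l=(j(i)-1)h$, and $\Delta_i^r=j(i)h$. The prefactor $1/|\Delta_i|$ becomes $1/h$, and the integration variable $t'$ ranges over $I_{j(i)}$.

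Next I would pin down, for a fixed $j\in[N]$ and an arbitrary $t\in I_j=((j-1)h,jh]$, the set of indices $i$ entering the sum. The condition $\Delta_i^r<t$ reads $j(i)h<t$. Since $t>(j-1)h$, any $i$ with $j(i)\le j-1$ satisfies $j(i)h\le (j-1)h<t$; conversely, if $j(i)\ge j$ then $j(i)h\ge jh\ge t$, violating the strict inequality. Hence
\[
\{i:\Delta_i^r<t\}=\{i:j(i)<j\},
\]
a set that does not depend on which $t\in I_j$ we picked. This is the only step that requires a little attention to the half-open convention in \eqref{eq:def-h-time-step}, and I expect it to be the sole mild obstacle in an otherwise algebraic proof.

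Then I would apply Assumption \ref{assump:A4} to evaluate each surviving integral. For $i$ with $j(i)<j$, the rectangle $I_{j(i)}\times I_j$ lies in the region where $k(t',t;u_i,u)$ is declared constant with value $K_{j(i),j}(u_i,u)$. Therefore
\[
\frac{1}{|\Delta_i|}\int_{\Delta_i^l}^{\Delta_i^r} k(t',t;u_i,u)\,dt'
=\frac{1}{h}\int_{I_{j(i)}} K_{j(i),j}(u_i,u)\,dt'
=K_{j(i),j}(u_i,u),
\]
which is manifestly independent of $t\in I_j$. Plugging back into \eqref{eq:lambda-phi-cont-time-on-graph} yields
\[
\lambda(t,u)=\mu(u)+\sum_{\{k:\, j(k)<j\}} K_{j(k),j}(u_k,u),\qquad t\in I_j,
\]
which is precisely the claimed constant $\Lambda_j(u)$ in \eqref{eq:Lambdaj-base-on-graph}. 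Renaming the summation index $i\mapsto k$ completes the argument.
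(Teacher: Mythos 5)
Your proposal is correct and follows essentially the same route as the paper's proof: identify the index set $\{i:\Delta_i^r<t\}=\{i:j(i)<j\}$ for $t\in I_j$ (which you justify in slightly more detail than the paper, using the half-open interval convention), then use Assumption \ref{assump:A4} to collapse each averaged integral to $K_{j(i),j}(u_i,u)$, yielding a right-hand side independent of $t$.
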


\paragraph{Likelihood model and learnable parameters.}
According to \eqref{eq:equation-bar-lambda(t)}, we define
\begin{equation}\label{eq:def-bar-Lambda}
\bar{\Lambda}_j = \sum_{u \in \calV} {\Lambda}_j(u).
\end{equation}
The following lemma derives the log-likelihood which only involves $\{ {\Lambda}_j(u), \, j \in [N], u \in \calV \}$.
\begin{lemma}[Discrete-time likelihood on network]
\label{lemma:l-lambda-base-on-graph}
Under Assumption \ref{assump:A3}(A3) and Assumption \ref{assump:A4}, %(A3)(A4)
the log-likelihood defined in  \eqref{eq:logL-cont-time-uncert-on-graph}  can be written as
\begin{equation}\label{eq:l-lambda-base-on-graph}
\ell := \log \calL 
= \sum_{k=1}^n \log \left(  (e^{  h \bar{\Lambda}_{j(k)}} -1) \frac{ \Lambda_{j(k)} (u_k)}{ \bar{\Lambda}_{j(k)}}
		\right) -   h  \sum_{j=1}^N  \bar{\Lambda}_j,
\end{equation}
\end{lemma}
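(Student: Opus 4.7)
The plan is to start from the continuous-time expression in Lemma \ref{lemma:logL-cont-time-uncert-on-graph} and discretize each of its pieces using (A3) and Assumption \ref{assump:A4}. Under (A3) we have $\Delta_k = I_{j(k)}$, so $\Delta_k^l = (j(k)-1)h$ and $\Delta_k^r = j(k)h$. By Lemma \ref{lemma:quant-time-lambda(t,u)}, the function $\lambda(t,u)$ is piecewise-constant in $t$ with value $\Lambda_j(u)$ on $I_j$, and therefore $\bar{\lambda}(t) = \bar{\Lambda}_j$ is also piecewise-constant on $I_j$ by \eqref{eq:def-bar-Lambda}. These are precisely the two ingredients that let us turn the nested integral in Lemma \ref{lemma:logL-cont-time-uncert-on-graph} into a closed-form expression, so no new structural argument is needed beyond substitution and integration.

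I would first simplify the compensator term $\int_0^T \bar{\lambda}(s)\, ds$. Splitting the integral over the $N$ intervals $I_j$ and using $\bar{\lambda}|_{I_j} = \bar{\Lambda}_j$ gives immediately $\int_0^T \bar{\lambda}(s)\, ds = h \sum_{j=1}^N \bar{\Lambda}_j$, which matches the second term in \eqref{eq:l-lambda-base-on-graph}.

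Next I would handle the $k$-th summand inside the log in Lemma \ref{lemma:logL-cont-time-uncert-on-graph}. Write $j = j(k)$ for brevity. For $t \in I_j = ((j-1)h, jh]$, the exponent becomes
\begin{equation*}
\int_t^{jh} \bar{\lambda}(s)\, ds = \bar{\Lambda}_j \, (jh - t),
\end{equation*}
since $\bar{\lambda}$ is constant on $I_j$. Pulling $\lambda(t,u_k) = \Lambda_j(u_k)$ out of the integral, the $k$-th inner integral is
\begin{equation*}
\Lambda_j(u_k) \int_{(j-1)h}^{jh} e^{\bar{\Lambda}_j (jh - t)}\, dt
= \Lambda_j(u_k) \int_0^h e^{\bar{\Lambda}_j r}\, dr
= \frac{\Lambda_j(u_k)}{\bar{\Lambda}_j} \bigl(e^{h\bar{\Lambda}_j} - 1\bigr),
\end{equation*}
after the change of variable $r = jh - t$. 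Taking the logarithm yields the $k$-th summand in \eqref{eq:l-lambda-base-on-graph}, and summing over $k=1,\ldots,n$ and combining with the compensator term completes the proof.

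There is no substantive obstacle: the derivation is a direct discrete reduction enabled by Assumption \ref{assump:A4}. The only minor subtlety is verifying that the change of variable and the factorization $\Lambda_j(u_k)/\bar{\Lambda}_j$ are well defined, which follows from $\bar{\Lambda}_j > 0$ (a consequence of $\mu(u) > 0$ in \eqref{eq:lambda-phi-cont-time-on-graph}). One should also confirm that $\bar{\lambda}(t)$ is indeed piecewise-constant with the claimed values on every $I_j$, not only on the intervals that contain an event; this follows immediately from \eqref{eq:equation-bar-lambda(t)} together with Lemma \ref{lemma:quant-time-lambda(t,u)} applied coordinate-wise in $u \in \calV$.
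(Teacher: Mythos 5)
Your proposal is correct and follows essentially the same route as the paper's proof: substitute $\Delta_k = I_{j(k)}$ into the continuous-time log-likelihood, use Lemma \ref{lemma:quant-time-lambda(t,u)} (and its summed consequence for $\bar{\lambda}$) to make both intensities piecewise constant, evaluate the inner integral in closed form to get $\frac{\Lambda_{j(k)}(u_k)}{\bar{\Lambda}_{j(k)}}\bigl(e^{h\bar{\Lambda}_{j(k)}}-1\bigr)$, and reduce the compensator to $h\sum_{j=1}^N \bar{\Lambda}_j$. Your added remarks on $\bar{\Lambda}_j>0$ and the piecewise constancy of $\bar{\lambda}$ on every $I_j$ are fine and consistent with the paper.
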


% learnable parameters
The parameters in the model are the baseline intensities $\mu(u)$ and the influence kernel $K_{i,j}(u',u)$, both of which are on the network, which determines the network conditional intensities $\Lambda_j(u)$ as in \eqref{eq:Lambdaj-base-on-graph}. Adopting the same time-grid setup in Section \ref{subsec:kernel-est-parametrization}, we consider the extended time grid from $-N'+1$ to $N$, where the max time lag for influence is $\tau_{\rm max} = N' h$.
Denoting the learnable parameters as $\theta$,  we have
\begin{equation}\label{eq:def-theta-network-Omega}
\theta = \{ \mu(u), \, u\in \calV \} \cup \{ K_{i,j}(u', u), \, u' , u \in \calV, \, 
	0 <j-i \le N', \,  -N' < i \le N, \, 0 < j  \le N\}.
\end{equation}
We similarly introduce the rearranged kernel matrix $\Psi$, where
\begin{equation}
\label{eq:def-Psi-tensor}
\Psi_{i,l}(u',u) = K_{i, i+l} (u',u), \quad \forall u', u \in \calV, \quad l=1, \cdots, N'.
\end{equation}
We will discuss the estimation of parameters from data in Section \ref{subsec:kernel-recovery-network}.

\paragraph{Prediction on future interval.}
Extending the argument in Section \ref{subsec:bernoulli-time-only}, we can predict the probability of having the next event happen during a future interval and at a location on $\calV$ once the parameter $\theta$ is learned from data. 
Specifically, recall the notations in the continuous-time setting in Section \ref{subsec:model-graph-cont-time}, we are given $n$ historical events and the goal is to predict the probability of the $({n+1})$-th event happening on the interval $( l,r]$, $r > l \ge  \Delta_{n}^r$.
Using the discrete-time model under (A3), $\Delta_n = I_{j(n)}$, and again suppose $l$ and $r$ are endpoints on the time grid, we have
\[
l = j_l h , \quad r = j_r h , \quad j_r > j_l \ge j(n).
\]
We then have
\begin{align}
& \Pr [ t_{n+1}\in  ( l= j_l h  , r = j_r h  ], \, u_{n+1} = u | \tilde{\calF}_{n}]
 = \int_l^r  \lambda(t,u) e^{- \int_{\Delta_{n}^r}^t \bar{\lambda}(s) ds} dt
	\quad \text{(by \eqref{eq:Pr-tk-cont-time-uncert-on-graph-form-a})} 	
	 \nonumber \\
&~~~
 = \sum_{j = j_l+1}^{j_r}  
	\frac{\Lambda_j(u)}{\bar{\Lambda}_j} ( 1- e^{- h \bar{\Lambda}_j} ) 
	e^{ - h \sum_{j' = j(n)+1}^{j-1}   \bar{\Lambda}_{j'}},
	\label{eq:predict-future-interval-network}
\end{align}
where the second equality is derived similarly to the proof of Lemma \ref{lemma:l-lambda-base-on-graph}, making use of Lemma \ref{lemma:quant-time-lambda(t,u)} under Assumption \ref{assump:A4}.
The probability of having no event till $l$ is 
\begin{align}
\Pr [ t_{n+1} > l = j_lh  | \tilde{\calF}_{n}]
& = 	e^{ - h \sum_{j' = j(n)+1}^{j_l }   \bar{\Lambda}_{j'}},
\end{align}
which is similar to the time-only case
where we replace ${\Lambda}_{j'}$ with  $\bar{\Lambda}_{j'}$ in the expression \eqref{eq:predict-future-2}.

\subsection{Spatial-temporal Bernoulli process on network}\label{subsec:bernoulli-graph}

We have derived the discrete time model  \eqref{eq:l-lambda-base-on-graph} and \eqref{eq:Lambdaj-base-on-graph} under 
(A3) and Assumption \ref{assump:A4}.
%(A3)(A4).
Here, we introduce an equivalent formulation of the model using a spatial-temporal  Bernoulli process on the graph $\calV$.
Below, we use the boldface notation to emphasize vectors. For example, 
$\boldsymbol{\mu} = \{ \mu(u), u \in \calV \} \in \R^{V}$ 
and $\mathbf{\Lambda}_j = \{ {\Lambda}_j(u), u \in \calV \} \in \R^V$
are length-$V$ vectors.
We use the boldface $\mathbf{x}$ to denote the vector, and $x(u)$ or $x_u$ stands for the $u$-th entry.
When to emphasize the dependence on parameter $\theta$, we write $\mathbf{\Lambda}_j [\theta]$ for the vector and ${\Lambda}_j [\theta](u)$ for the $u$-th entry.

Given a trajectory of discrete-time unit-uncertainty event data, define a vector-valued binary sequence $(\mathbf{y}_1, \cdots, \mathbf{y}_N)$, and for $t\in [N]$, $\mathbf{y}_t \in \{ 0,1 \}^V$, where
$y_t( u) = 1$ if there exists an event $(\Delta_i ,u_i)$  s.t. $\Delta_i = I_{t}$ and $u_i = u$;
$y_t( u) = 0$ otherwise. 
We also define 
\[
\bar{y}_t : = \sum_{u \in \calV} y_t(u), \quad t \in [N],  
\]
and $\bar{y}_t = 1$ indicates that within time interval $I_t$, there is an event happening somewhere on the network. 
Using the binary vectors $\mathbf{y}_t$, 
\eqref{eq:Lambdaj-base-on-graph} is equivalent to 
\begin{equation}\label{eq:Lambdaj-base-on-graph-bernoulli}
\Lambda_t [\theta] (u) = \mu(u) +  \sum_{ i < t}  \sum_{u' \in \calV } y_i(u')  K_{i, t}(u', u),
\end{equation}
and the log-likelihood \eqref{eq:l-lambda-base-on-graph} can be expressed as 
\begin{equation}\label{eq:l-lambda-base-on-graph-beroulli}
\ell 
= \sum_{t=1}^N \left\{ 
		\bar{y}_t \log \left(  (1- e^{- h \bar{\Lambda}_{t}} )  
					    \sum_{u \in \calV} y_t(u) \frac{ \Lambda_{t} (u) }{ \bar{\Lambda}_{t}} \right) 
		-   (1- \bar{y}_t)h \bar{\Lambda}_t \right\}.
\end{equation}

To ensure that the conditional intensity is physical, i.e. $\Lambda_t [\theta](u) > 0$ for any $u \in \calV$ and all time $t$, one can introduce an assumption similar to Assumption \ref{assump:c1-Lambda-on-data}
and require that, for positive constants $b < B$,
$\Lambda_t[\theta](u) \in [b,B]$ for all $u$ and $t$ a.s. when model parameter $\theta$ lies in certain domain,
including the true intensity $\Lambda_t^*(u)$.
As a result, $\mathbf{\Lambda}_t^* \in [b, B]^V \subset \R_+^V$ for all $t$.

Use the vector form of $\mathbf{\Lambda}_t$ and $\mathbf{y}_t$ which are in $\R^V$, we have the following lemma which characterizes the general linear model of predicting $y_t$ given history using $\mathbf{\Lambda}_t$, and this gives a counterpart of Lemma \ref{lemma:time-only-bernoulli-glm} on network.
In this lemma, $\mathbf{\Lambda}_t$ refers to the true intensity $\mathbf{\Lambda}_t^*$ induced by the true model parameters, which determine the law of the sequence $\{\mathbf{y}_t\}_t$.

\begin{lemma}\label{lemma:on-graph-bernoulli-glm}
Under Assumption \ref{assump:A3}(A3) and Assumption \ref{assump:A4}, 
%(A3)(A4) 
and using the formulation of the Bernoulli process $\mathbf{y}_t$ on $\calV$,
for any $ t\in [N]$, $ \mathbf{\Lambda}_t  \in \sigma\{ \mathbf{y}_i, i\le  t-1 \}$ and 
\begin{equation}\label{eq:on-graph-bernoulli-glm}
\E  [ \mathbf{y}_t | \mathbf{y}_i, i \le  t-1 ]
= \left( \phi( h \bar{\Lambda}_{t} )\frac{ \Lambda_{t} (u) }{ \bar{\Lambda}_{t}} \right)_{u \in \calV}
= \Phi(  h \mathbf{\Lambda}_t  ), 
\end{equation}
where $\Phi: \R_+^V \to \R_+^V$ is defined by 
\begin{equation}\label{eq:def-Phi-mapping}
\Phi( \boldsymbol x ) := \frac{1- \exp\{ - \boldsymbol x^T \mathbf{ 1} \}}{ \boldsymbol x^T \mathbf{ 1} }  \boldsymbol x.
\end{equation}
Because $\boldsymbol x \in \R_+^V$ satisfies  $\boldsymbol x^T \mathbf{ 1} > 0$, \eqref{eq:def-Phi-mapping} is well-defined.
\end{lemma}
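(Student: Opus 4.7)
The plan is to adapt the proof of Lemma \ref{lemma:time-only-bernoulli-glm} to the network setting. The measurability claim $\mathbf{\Lambda}_t \in \sigma\{\mathbf{y}_i, i \le t-1\}$ is immediate from the explicit linear expression \eqref{eq:Lambdaj-base-on-graph-bernoulli}, since each summand is $y_i(u') K_{i,t}(u', u)$ with $i < t$ plus the deterministic baseline $\mu(u)$. Since $y_t(u) \in \{0, 1\}$, the conditional expectation reduces to a conditional probability, so it suffices to compute $\Pr[y_t(u) = 1 \mid \mathbf{y}_i, i \le t-1]$ for each $u \in \calV$.

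Given $\{\mathbf{y}_i, i \le t-1\}$, we read off the number $n$ of past events, their window indices $j(1) < \cdots < j(n) \le t-1$, and their spatial marks $u_1, \ldots, u_n$, which together determine $\tilde{\calF}_n$; in addition, $\mathbf{y}_i = \mathbf{0}$ for $j(n) < i \le t-1$ carries the extra ``no-event'' information $\{t_{n+1} > (t-1)h\}$. Under (A1) and (A3), $\{y_t(u) = 1\}$ is precisely $\{t_{n+1} \in I_t,\, u_{n+1} = u\}$, and this event already implies $t_{n+1} > (t-1)h$. Bayes' rule therefore gives
\[
\Pr[y_t(u) = 1 \mid \mathbf{y}_i, i \le t-1] = \frac{\Pr[t_{n+1} \in I_t,\, u_{n+1} = u \mid \tilde{\calF}_n]}{\Pr[t_{n+1} > (t-1)h \mid \tilde{\calF}_n]}.
\]

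The numerator is given directly by \eqref{eq:Pr-tk-cont-time-uncert-on-graph} with $\Delta_{n+1} = I_t$, and the denominator follows from \eqref{eq:Sk-expression-by-hk} combined with \eqref{eq:equation-bar-lambda(t)}. Applying the time-quantization Lemma \ref{lemma:quant-time-lambda(t,u)} (so $\lambda(s,u) = \Lambda_j(u)$ and $\bar{\lambda}(s) = \bar{\Lambda}_j$ on $I_j$), the inner $s$-integral on $I_t$ evaluates to $\Lambda_t(u)(e^{h\bar{\Lambda}_t} - 1)/\bar{\Lambda}_t$, while the outer exponentials become $\exp\{-h\sum_{j=j(n)+1}^{t} \bar{\Lambda}_j\}$ and $\exp\{-h\sum_{j=j(n)+1}^{t-1} \bar{\Lambda}_j\}$, whose ratio is $e^{-h\bar{\Lambda}_t}$. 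Multiplying gives $(1-e^{-h\bar{\Lambda}_t})\Lambda_t(u)/\bar{\Lambda}_t = \phi(h\bar{\Lambda}_t)\Lambda_t(u)/\bar{\Lambda}_t$, the claimed $u$-th entry. Substituting $\boldsymbol{x} = h\mathbf{\Lambda}_t$, for which $\boldsymbol{x}^T \mathbf{1} = h\bar{\Lambda}_t$, into the definition \eqref{eq:def-Phi-mapping} of $\Phi$ identifies this vector with $\Phi(h\mathbf{\Lambda}_t)$. The main obstacle is the conditioning bookkeeping: $\sigma\{\mathbf{y}_i, i \le t-1\}$ strictly refines $\tilde{\calF}_n$ by carrying the extra no-event information on $I_{j(n)+1}, \ldots, I_{t-1}$, and this must be folded in via Bayes rather than absorbed implicitly; boundary cases ($n=0$ or $t=1$) are handled by the convention $\Delta_0^r = 0$ together with the strict positivity of the denominator (the intensity is finite on the bounded horizon).
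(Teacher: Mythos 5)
Your proposal is correct, but it proves the identity by a different route than the paper. The paper works entirely at the discrete level: it reads the conditional law off the already-derived Bernoulli log-likelihood \eqref{eq:l-lambda-base-on-graph-beroulli}, factoring it in two stages — first $\Pr[\bar y_t = 1 \mid \mathbf{y}_i, i\le t-1] = 1-e^{-h\bar\Lambda_t}$, then $\Pr[y_t(u)=1 \mid \mathbf{y}_i, i\le t-1, \bar y_t =1] = \Lambda_t(u)/\bar\Lambda_t$ — and multiplies, exactly as in the time-only Lemma \ref{lemma:time-only-bernoulli-glm}. You instead return to the continuous-time model: you compute $\Pr[y_t(u)=1\mid \mathbf{y}_i, i\le t-1]$ as a ratio of $\tilde\calF_n$-conditional probabilities via \eqref{eq:Pr-tk-cont-time-uncert-on-graph} (numerator with $\Delta_{n+1}=I_t$) and the survival function \eqref{eq:Sk-expression-by-hk} (denominator accounting for the no-event information on $I_{j(n)+1},\dots,I_{t-1}$), then use the time-quantization Lemma \ref{lemma:quant-time-lambda(t,u)} to evaluate the integrals; your algebra ($\Lambda_t(u)(e^{h\bar\Lambda_t}-1)/\bar\Lambda_t$ times the exponential ratio $e^{-h\bar\Lambda_t}$) is right, and your Bayes step correctly handles the fact that the Bernoulli filtration refines $\tilde\calF_n$ — a point the paper's proof leaves implicit inside the likelihood factorization. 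The trade-off: the paper's proof is shorter because the continuous-to-discrete computation was already absorbed into Lemma \ref{lemma:l-lambda-base-on-graph}, whereas your derivation essentially redoes that computation in conditional (prediction) form — it is in effect the same calculation that yields \eqref{eq:predict-future-interval-network} — but it makes the conditioning bookkeeping and the role of the no-event intervals fully explicit, which the paper's ``one can verify'' glosses over.
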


\subsection{Influence kernel recovery by VI and GD for networks}\label{subsec:kernel-recovery-network}

\paragraph{Log-likelihood of kernel parameter.}

Suppose we are given $M$ i.i.d. sequence of observed Bernoulli processes $\{ \mathbf{y}_t^{(m)}\}_{m=1}^M$ on the network $\calV$, which are of the same length indexed by $ -N' < t \le N$.
For simplicity, we assume that the baseline intensity vector $\boldsymbol \mu$ is known and fixed, 
and we will explain the learning of $\boldsymbol \mu$ from data in Section \ref{sec:algo}. 

We denote model parameter $\theta$ in \eqref{eq:def-theta-network-Omega} as $\theta = \{ \boldsymbol \mu\} \cup \theta_K$, 
where $\theta_K$ consists of the $NN' V^2$ many influence kernel parameters, again denoted by $z$. 
We write $\mathbf{\Lambda}_t[z] = \mathbf{\Lambda}_t [\boldsymbol \mu, z]$, i.e., with $\theta = [\boldsymbol \mu, z]$, and similarly for other quantities depending on $\theta$. 
Next, we introduce the data history vector $\eta_{t,u}$, doubly indexed by $t$ and $u$ and  in $\R^{NN' V^2}$, s.t.
\begin{equation}\label{eq:def-innerprod-etatu-K-graph}
\langle \eta_{t, u}, z \rangle = \sum_{ i = t-N'}^{t-1}  \sum_{u' \in \calV } y_i(u')  K_{i, t}(u', u).
\end{equation}
This allows \eqref{eq:Lambdaj-base-on-graph-bernoulli} to be represented as 
$\Lambda_t [z] (u)  = \mu(u) + \langle \eta_{t, u}, z \rangle$,
 and $\bar \Lambda_t [z] = \sum_{u \in \calV} \Lambda_t [z] (u)$. 
With this notation, by \eqref{eq:l-lambda-base-on-graph-beroulli}, the log-likelihood of parameter $\theta$ on the $m$-the sequence can be written as 
\begin{align}
\ell^{(m)}[z] 
& = \sum_{t=1}^N \left\{ 
		\bar{y}^{(m)}_t \log \left(  (1- e^{- h \bar{\Lambda}^{(m)}_{t}[z]} )  
		\sum_{u \in \calV} y_t^{(m)}(u) \frac{ \Lambda^{(m)}_{t}[z] (u) }{ \bar{\Lambda}^{(m)}_{t}[z]} \right) 
		-   (1- \bar{y}^{(m)}_t)h \bar{\Lambda}^{(m)}_t[z] \right\},  
		\label{eq:lm-z-graph}\\
& \Lambda_t^{(m)} [z] (u) 
 = \mu(u) + \langle \eta^{(m)}_{t, u}, z \rangle, \quad 		
 	\bar \Lambda_t^{(m)} [z] = \sum_{u \in \calV} \Lambda_t^{(m)} [z] (u),
	\label{eq:lambdatm-z-graph}
\end{align}
and the total likelihood is averaged over the $M$ trajectories.

\paragraph{VI scheme on network.}
With the mapping $\Phi$ defined as in \eqref{eq:def-Phi-mapping}, we can again conceptually define the per-trajectory $L^2$ loss as
\[
\ell_{\rm VI}^{(m)}( z )
  = \frac{1}{2 h }\sum_{t=1}^N  \| \Phi( h \mathbf{\Lambda}_t^{(m)}[z] ) - \mathbf{y}_t^{(m)} \|_2^2.
\]
For a vector $y \in \R^V$, we use both bracket $y(u)$ or subscript $y_u$ to denote the $u$-th entry of the vector. 
Following the stochastic update scheme in Section \ref{subsec:VI-recovery-timeonly}, the per-trajectory VI field can be written as 
\begin{align}
\hat G^{(m)}[ z ] 
& =  \sum_{t=1}^N \sum_{u \in \calV } \left(\Phi ( h \mathbf{\Lambda}_t^{(m)}[ z ]) -  \mathbf{y}_t^{(m)} \right)_u \eta_{t,u}^{(m)} 
		\nonumber \\
& = \sum_{t=1}^N \sum_{u \in \calV } \left( \phi( h \bar \Lambda_t^{(m)}[ z ]) \frac{\Lambda_t^{(m)}[ z ](u)}{\bar \Lambda_t^{(m)}[ z ]}  -  y_t^{(m)} (u) \right) \eta_{t,u}^{(m)}.
	\label{eq:formula-Gtheta-graph}
\end{align}
We update the solution $z_k$ at the $k$-th iteration by looping over the training trajectories similarly to before. 
It is possible to extend the recovery guarantee theory in Section  \ref{subsec:VI-recovery-timeonly} to the network case, which we leave for future work.

\paragraph{GD scheme on network.}
The GD dynamic solves for the MLE based on the likelihood model \eqref{eq:lm-z-graph}\eqref{eq:lambdatm-z-graph}.  Again, we focus on the stochastic GD update, 
and by differentiating \eqref{eq:lm-z-graph}, the per-trajectory gradient field can be computed directly to be
\begin{equation}\label{eq:expression-hatFm-graph}
\hat F^{(m)}(z) = 
\sum_{t=1}^N  \sum_{u \in \calV}
 	\left( \frac{ h ( \phi( h  \bar \Lambda_t^{(m)}[ z] ) - \bar y_t^{(m)} )}{ \phi( h \bar \Lambda_t^{(m)} [z] )}     
	+ \frac{\bar y^{(m)}_t}{\bar \Lambda_t^{(m)}[ z] } 
	- \frac{y^{(m)}_t(u)}{ \Lambda_t^{(m)}[ z](u)}
	\right)
	\eta_{t,u}^{(m)},
\end{equation}
where we used that when $y_t(u) =1$, $\sum_{u' \in \calV} y_t(u') \Lambda_t(u') = \Lambda_t(u)$ because when time $t$ is even, then only one node can have an event on it.

We see that \eqref{eq:expression-hatFm-graph} contains terms like 
${y_t(u)}/{ \Lambda_t(u)}$ which will be ${ \Lambda_t(u)}^{-1}$ at an event location $u$, 
and thus can be large when ${ \Lambda_t(u)}$ is small. 
In comparison, the VI field in \eqref{eq:formula-Gtheta-graph} does not have such terms
(the ratio $\Lambda_t(u)/\bar \Lambda_t$ is always between 0 and 1, which will not cause divergent magnitude).
This suggests that the VI training dynamic can be more stable than the GD dynamic, echoing the same phenomenon in the time-only case (see comments at the end of Section \ref{subsec:GD-shceme-timeonly}).

\paragraph{Special structures for spatial-temporal kernels.}

Similar to Section \ref{subsec:special-structure-psi}, one can incorporate special structures of the kernel structure into the inference procedure when such prior knowledge is available. For example, assuming time stationarity, one will have a time-invariant kernel 
$K_{i,j}(u',u) = \Psi_{i, j-i}(u', u) = \psi_{j-i}(u', u)$, $\forall u, u' \in \calV$,
and there are $N' V^2$ many kernel parameters to learn from data.
 The likelihood model \eqref{eq:lm-z-graph}\eqref{eq:lambdatm-z-graph} still holds, and the VI and GD optimization methods can also be extended. 
 
% low-rank
In the network case, because the spatial-temporal kernel parameters $K$ or $\Psi$ are four-way tensors, there are different ways to impose low-rank structures on the kernel. 
Motivated by the underlying temporal continuity of the influence kernel function, as has been explained in Section \ref{subsec:special-structure-psi}, one can expect approximate low-rankness of the discrete kernel over time, e.g., the matrix $[ \Psi_{i,l}(u',u) ]_{i,l}$ for each pair of $(u', u)$.
As an additional simplification, we concatenate these matrices over all pairs of $(u', u)$ to form a tall matrix of size $(N-N'+1)V^2 \times N'$, using the middle-chunk trick in Section \ref{subsec:special-structure-psi}, and impose low-rankness on the resulting matrix. 

% over graph topology
The above two techniques exploit the temporal structures of the kernel, and it is also possible to impose constraints or regularizations based on the spatial information, i.e. graph topology of the network. 
In the real data application of this work, to discover a network structure using the Sepsis-Associated Derangements (SADs) data (Section \ref{sec:experiments}), we do not assume any prior knowledge of the network where graph nodes represent medical status. Instead, we will use the inferred spatial-temporal kernel from event data to reveal influence relations among the nodes. 
Thus, in this work, we do not explore any spatial structure or regularization of the kernel, which is possible and would certainly be useful in other application scenarios.

\section{Algorithm}\label{sec:algo}

The estimation methods using VI and GD have been introduced for the time-only case in Section \ref{sec:inference-GD-VI} and extended to the on-network case in Section \ref{subsec:kernel-recovery-network}. We provide additional algorithmic details in this section.

\subsection{Time-only case}\label{subsec:algo-time-only}

\begin{algorithm}[t!]
\DontPrintSemicolon
\SetAlgoNoLine
\caption{Stochastic batch-based training for time-only case}
\label{algorithm:train-time-only}

\SetKwInOut{Input}{Input}
\SetKwInOut{Output}{Output}

\Input{
Training trajectories $\{ y_t^{(m)} \}_{m=1}^{M}$, 
$\mu_0$ as initial value of $\mu$;
Parameters: 
intensity lower bound $b$, barrier weight $\delta_{b}$ ; 
batch size $M_B$, 
maximum number of epochs $k_{\rm max}$, 
learning rate schedule $\{ \gamma_k \}_{k}$;
Optional: singular value threshold $\tau_\SVD$, smoothness weight $\delta_s$
}

\Output{
Learned kernel $\theta_K$ and baseline $\mu$ }

\vspace{+5pt}

Initialize kernel $\theta_K \leftarrow 0$ 
and baseline $\mu \leftarrow \mu_0$

\For{$k = 1,\ldots, k_{\rm max}$}{

%    Shuffle $\{1,\ldots,M_\tr\}$ to get $\{m_1,\ldots,m_{M_\tr}\}$

    \While{loop over the batches}{

    $\theta \leftarrow [\mu, \theta_K]$

    Load $M_B$ many training trajectories to form batch $Y_B =\{y^{(m)} \}_{m}$

    Compute intensities $\Lambda_t^{(m)}( \theta )$ in \eqref{eq:lambdatm-theta-timeonly} at $t=1, \cdots, N$ for each $y^{(m)}$ in $Y_B$, and let 
    $b^{(m)} \leftarrow \min_{ 1 \le t \le N} \Lambda_t^{(m)}$

    Split $Y_B = Y_{B,1}\cup Y_{B,2}$, where $y^{(m)}$ in $Y_{B,1}$ if $b^{(m)} < b$, and otherwise in $Y_{B,2}$

   \uIf{using VI update}{
    $g_B \leftarrow \frac{1}{M_B}
            \sum_{y^{(m)} \in Y_{B,2}} \hat G^{(m)}(\theta)$,
    $\hat G^{(m)}$ as in \eqref{eq:formula-Gtheta-timeonly}             
            }
    \ElseIf{using GD update}{
    $g_B \leftarrow \frac{1}{M_B}
            \sum_{y^{(m)} \in Y_{B,2}} \hat F^{(m)}(\theta)$,
     $\hat F^{(m)}$ as in \eqref{eq:expression-hatFm}
            }
            
    \If{$Y_{B,1}$ not empty}{
    $g_B \gets g_B + \delta_b \sum_{y^{(m)} \in Y_{B,1}} \partial_{\theta_K} B^{(m)}(\theta)$, $B^{(m)}$ as in \eqref{eq:def-Bm-barrier}
    } 

    Update 
    $\theta_K \gets \theta_K - \gamma_k g_B$
    
    Solve $\tilde \mu$ as root of \eqref{eq:eqn-elim-mu-timeonly} on $Y_B$ using bisection search 

    Update $\mu \gets 0.9\mu + 0.1\tilde \mu$
    }
    
    (Optional)
    \If{do smoothness regularization of kernel}{
    $\theta_K \gets \theta_K - \gamma_k \delta_s \partial_{\theta_K} S(\theta_K)$, $S$ as in \eqref{eq:def-Stheta-smoothness}
    }
    
}

(Optional) \If{do low-rank truncation of kernel}{
    Rearrange entries in $\theta_K$ into $\Psi$, and
    $\Psi' \leftarrow \Psi_{(N'+1):N,1:N'}$
    
   $r \leftarrow$ number of singular values of $\Psi'$ larger than $\tau_\SVD$

   $V \leftarrow $ first $r$ right singular vectors of $\Psi'$

   $\Psi \gets \Psi{V}{V}^\top$, and rearrange $\Psi$ into $\theta_K$
}
\KwRet{$\mu$, $\theta_K$}
\end{algorithm}

% SGD details
\paragraph{Stochastic optimization by batches.}
We theoretically proved in Section \ref{subsec:VI-recovery-timeonly} the (in-expectation) convergence of the stochastic scheme \eqref{eq:VI-SGD-scheme}, where one training trajectory is used in each step of updating the model parameter, i.e., the batch size is one. This choice is for the convenience of exposition for the theory.
In practice, the batch size in a stochastic optimization algorithm balances between the efficiency of parameter adaptation versus random fluctuation of the solution. When the batch size is small, the solution can develop large variations across batches. 

In our algorithm, we use a finite batch size $M_B$, which can be up to a few hundred. 
In each step of parameter update, we load $M_B$ many training trajectories, called a ``batch'' and denoted by $Y_B$. The batches are non-overlapping and loop over the training set, and when one pass of the training set is finished, it is called an ``epoch''. We random shuffle the training set indices at the beginning of each epoch. 
Following the convention of stochastic optimization literature, the parameter update step size $\gamma_k$ is called ``learning rate. Instead of decreasing $\gamma_k$ per batch (with batch size one) as theoretically in Theorem \ref{thm:VI-convergence}, we set a learning rate schedule indexed by the epochs till the maximum number of epochs. 
The stochastic optimization algorithm is summarized in Algorithm \ref{algorithm:train-time-only}.

% learning of mu by elimination
\paragraph{Inferenece of baseline intensity $\mu$.}
To be able to infer the baseline $\mu$ from data, our idea is to eliminate the variable $\mu$ in the process of learning $\theta_K$, making use of the first-order condition of the log-likelihood objective.
Specifically, from \eqref{eq:lm-theta-timeonly} we have that, with $\theta = [\mu, \theta_K]$,
% the bisection
\begin{equation*}
\partial_\mu \ell^{(m)}( \theta)
= \sum_{t=1}^N  h \left(  \frac{   y_t^{(m)}   }{ 1- e^{-  h  \Lambda_t^{(m)}( \theta) }}   -1 \right),
\end{equation*}
and for fixed $\theta_K$, one can solve for the value of $\mu$ by the following 1D non-linear equation
\begin{equation}\label{eq:eqn-elim-mu-timeonly}
0 
= \sum_{m, \, y^{(m)} \in Y_B} \partial_\mu \ell^{(m)}( \theta)
=  \sum_{m, \, y^{(m)} \in Y_B}  \sum_{t=1}^N  h \left(  \frac{   y_t^{(m)}   }{ 1- e^{-  h  \Lambda_t^{(m)}( \theta) }}  -1 \right).
\end{equation}
The equation can be solved by {\it bisection search} due to the monotonicity of the function $1- e^{-  h  \Lambda_t^{(m)}( \theta) } $ with respect to $\mu$, recalling that $\Lambda_t^{(m)}( \theta)  = \mu + \langle \eta_t^{(m)}, \theta_K \rangle$. 

Strictly speaking, the first-order condition corresponds to averaging over all the $M$ training trajectories, and in \eqref{eq:eqn-elim-mu-timeonly} we use the batch average to approximate solving for $\mu$ given the current $\theta_K$ and update the baseline more diligently. 
In our algorithm, we update $\mu$ by a convex combination of the current $\mu$ with the solution $\tilde \mu$ from the batch $Y_B$ so as to implement an average over the batches. 
The initial value $\mu_0$ of $\mu$ is set to be the empirical frequency of having an event averaged over the time horizon, i.e., 
$\mu_0 = \sum_{m=1}^M n^{(m)}/(MNh)$ where $n^{(m)} = \sum_{t=1}^N y^{(m)}_t$ is the count of events on $ 1 \le t \le N$ of the $m$-th trajectory.
We observe the numerical convergence of this scheme to learn $\mu$ in practice, 
see Figure~\ref{fig:TULIK-nonstationary-kernel-time-only-mu-dynamics}.

% barrier, smoothness, lowrank
\paragraph{Constraint and regularization of the kernel.}
%barrier: b, delta_b
Motivated by Assumption \ref{assump:c1-Lambda-on-data}(ii), 
we inforce the lower-boundedness of the model intensity $\Lambda^{(m)}_t \ge b$ for all $1 \le t \le N$  and all training trajectories, where the minimum intensity threshold $b$ is an algorithmic parameter. 
We introduce a (per-trajectory) barrier function $B^{(m)}$ to penalize the violations, defined as 
\begin{equation}\label{eq:def-Bm-barrier}
B^{(m)}(\theta) = \sum_{t=1}^N {\bf 1}_{  \{ \Lambda^{(m)}_t (\theta) < b \}} l_b( \Lambda^{(m)}_t (\theta) ),
\end{equation}
and is weighted by a penalty strength factor $\delta_b$. 
The barrier function $l_b: \R \to \R$ is differentiable, and then $\partial_{\theta_K}B^{(m)}(\theta) 
	= \sum_{t=1}^N {\bf 1}_{  \{ \Lambda^{(m)}_t (\theta) < b \}} l_b'( \Lambda^{(m)}_t (\theta) ) \eta^{(m)}_t$.
In experiments, we find that several choices of $l_b$ can work well, and in our experiments, we use
(i) log-barrier $l_b(x) = -b \log (x/b) $
or 
(ii) quadratic barrier $l_b(x) = \frac{1}{ 0.2   b}(x - b)^2 $.

% tau_SVD
The underlying continuity of the kernel function suggests the low-rankness of the kernel matrix, as has been explained in Section \ref{subsec:special-structure-psi}. 
To leverage such prior structural information, we can apply a low-rank truncation of the learned kernel matrix by the stochastic optimization using the submatrix trick in Section \ref{subsec:special-structure-psi}, see the end of Algorithm \ref{algorithm:train-time-only}. We only keep singular vectors for which the singular value is greater than  $\tau_{\rm SVD}$, and this threshold is an algorithmic parameter.

Another optional regularization of the kernel matrix is by regularizing the variation of the kernel matrix $\Psi$ across time. Specifically, writing $\theta_K$ into the form of matrix $\Psi$, one can adopt the smoothness penalty 
\begin{equation}\label{eq:def-Stheta-smoothness}
S(\theta_K) 
	= \frac{1}{2 h^2} \left( \sum_{i=-N'+1}^{N-1} \sum_{l=1}^{N'}  (\Psi_{i,l} - \Psi_{i+1,l})^2 
	+ \sum_{i=-N'+1}^{N} \sum_{l=1}^{N'-1} (\Psi_{i,l} - \Psi_{i,l+1})^2 \right)
\end{equation}
 weighted by a strength factor $\delta_s$. In our algorithm, we update the kernel using the gradient of $\partial_{\theta_K} S$ after each epoch. 
 The choices of the parameters $b$, $\delta_b$, $\delta_s$ 
 together with the optimization hyperparameters 
 depend on the problem and will be detailed in our experiments (Appendix \ref{apdx:additional-exp}).

\subsection{On-network case}\label{subsec:algo-on-network}

The methodology in Section \ref{subsec:algo-time-only} can be extended to the on-network model, implementing the VI and GD inference as has been introduced in Section \ref{subsec:kernel-recovery-network}. 
The setup of batch-based stochastic optimization, the constraint enforcement, and the regularization of the kernel matrices are similar, and a main difference lies in the learning of baseline vector $\boldsymbol \mu$, which we again utilize the first-order condition.
This is summarized in Algorithm \ref{apdx:algorithm:train-graph} in the Appendix, and we give technical details of the extensions below.

Similar to \eqref{eq:eqn-elim-mu-timeonly}, we propose to solve for $\boldsymbol \mu$ given kernel $\theta_K$ by requiring the derivative of the log-likelihood with respect to each $\mu(u)$ to vanish. That is, with $\theta = [\boldsymbol \mu, \theta_K]$ and again averaged on a batch of trajectories $Y_B$, 
%\begin{equation*}
%\frac{\partial \ell^{(m)}[ \theta ]}{\partial \mu(u)}
%= \sum_{t=1}^N  \left( 
%	h \left(  \frac{  \bar y_t^{(m)}   }{ 1- e^{-  h  \bar \Lambda_t^{(m)}[ \theta] }}   -1 \right)
%	- \frac{\bar y^{(m)}_t}{ \bar \Lambda^{(m)}_t [ \theta] }
%	+ \frac{y^{(m)}_t(u)}{\Lambda^{(m)}_t[ \theta] (u)}
%	\right),
%\end{equation*}
\begin{align*}
0 & = \sum_{m, \, \mathbf{y}^{(m)} \in Y_B} \frac{\partial \ell^{(m)}[ \theta ]}{\partial \mu(u)} \nonumber \\
& =  \sum_{m, \, \mathbf{y}^{(m)} \in Y_B}  
	\sum_{t=1}^N  \left( 
	h \left(  \frac{  \bar y_t^{(m)}   }{ 1- e^{-  h  \bar \Lambda_t^{(m)}[ \theta] }}   -1 \right)
	- \frac{\bar y^{(m)}_t}{ \bar \Lambda^{(m)}_t [ \theta] }
	+ \frac{y^{(m)}_t(u)}{\Lambda^{(m)}_t[ \theta] (u)}
	\right),
\quad \forall u \in \calV,
\end{align*}
and we recall that  $\Lambda_t^{(m)} [\theta] (u) = \mu(u) + \langle \eta^{(m)}_{t, u}, \theta_K \rangle$.
These $V$ equations are non-linear because all three terms inside the summation involve $\boldsymbol \mu$ through
$\bar \Lambda_t^{(m)}$ and $\Lambda_t^{(m)}$. 
To simplify the algorithm, we propose a Gauss-Seidel-type iteration: given the current $\boldsymbol \mu$ and $\theta_K$ fixed, 
we solve for \textcolor{red}{$\tilde \mu(u)$} for each $u \in \calV$ by the equation
\begin{equation}\label{eq:eqn-elim-mu-graph}
0 = \sum_{m, \, \mathbf{y}^{(m)} \in Y_B}  
	\sum_{t=1}^N  \left( 
	h \left(  \frac{  \bar y_t^{(m)}   }{ 1- e^{-  h  \bar \Lambda_t^{(m)}[ \boldsymbol \mu, \theta_K] }}   -1 \right)
	- \frac{\bar y^{(m)}_t}{ \bar \Lambda^{(m)}_t [ \boldsymbol \mu, \theta_K ] }
	+ \frac{y^{(m)}_t(u)}{ \textcolor{red}{\tilde \mu(u)} + \langle \eta^{(m)}_{t, u}, \theta_K \rangle }
	\right).
\end{equation}
These $V$ equations are then decoupled, each one is a one-dimensional non-linear equation and bisection search can be applied due to the monotonicity with respect to $\tilde \mu(u)$. 
After solving the vector $\{\tilde \mu (u) \}_{u \in \calV}$ on a batch, we do a convex combination with the current $\boldsymbol \mu$, same as before. 
% init value
The vector $\mu$ is initialized by
$\mu_0(u) = \sum_{m=1}^M n^{(m)}(u)/(MNh)$ where $n^{(m)}(u) = \sum_{t=1}^N y^{(m)}_t(u)$ is the count of events happening at location $u$ along the $m$-th trajectory.
We empirically observe the convergence of our iteration scheme to learn $\boldsymbol \mu$ as shown in Figure~\ref{fig:TULIK-on-network-mu-dynamics}.%\xc{refer to experiment}.

We also extend the constraints and regularizations on the time-only kernel to the spatial-temporal kernel on a network.
Specifically, the penalty to enforce lower boundedness of the model intensities applies to each node, i.e.
\begin{equation}\label{eq:def-Bm-barrier-network}
B^{(m)}[\theta] = \sum_{t=1}^N \sum_{u \in \calV} {\bf 1}_{  \{ \Lambda^{(m)}_t [\theta](u) < b \}} l_b( \Lambda^{(m)}_t [\theta](u) ).
\end{equation}
The optional low-rank truncation is applied to the tall concatenated $\Psi$ matrix across all pairs of nodes, as has been explained in Section \ref{subsec:kernel-recovery-network}.
The optional smoothness regularization is by applying the penalty $S$ in \eqref{eq:def-Stheta-smoothness} to the time-only kernel on each pair of nodes. 

Depending on the problem, e.g., the size of the problem, namely $N$, $N'$, and the graph size $V$, 
as well as the inference approach (GD or VI),
we may need different parameters $b$, $\delta_b$, $\delta_s$ 
as well as optimization hyperparameters like batch size and learning rate schedule.
The choices will be detailed in the experiments
(Appendix \ref{apdx:additional-exp}).

\section{Experiments}\label{sec:experiments}

In this section, we compare the proposed model with various baselines on simulated and real data.

\subsection{Simulated event data: Time-only}\label{subsec:time-only-example}

\paragraph{Dataset.} For time-only event data, we set a time horizon $[-\tau_{\rm max}, T]$ with $\tau_{\rm max}=4$ and $T = 16$. 
% kernel
The model adopts a latent influence kernel function $k( t', t)$ defined as
\begin{equation}\label{eq:time-only-kernel}
   k(t',t) = \sum_{j=1}^{13} 0.3(2)^{-j}(\cos(2+1.3\pi(j+1)((t'-9)/15))+0.6)e^{-8((t-t') j)^2/25},
    \quad t > t', 
\end{equation}
which, after specifying $N$ and $N'$, will induce a kernel matrix $K$ in \eqref{eq:def-Kij-discrete-time}.
In this example, the influence kernel matrix $K$ will have both positive and negative entries, indicating that the history may elicit both excitatory and inhibitory effects between events.
Additionally, the influence kernel is time-varying, indicating that these effects are non-stationary over time.

 We use two types of discrete time grids: 
 (a)  $N'=8,N=32$,
 and 
 (b) $N'=80, N=320$. 
 The corresponding kernel matrices are visualized in Figure~\ref{fig:true-small-kernel-time-only} and Figure~\ref{fig:true-large-kernel-time-only} respectively
 in the form of $\Psi$ defined as in \eqref{eq:def-Psi-matrix}.  
 We use $\mu = 0.2$,
 and the event data are simulated by the law of a Bernoulli process as introduced in Section \ref{subsec:bernoulli-time-only}. 
 The training and testing sets consist of $16,000$ and $500$  trajectories, respectively.

\begin{figure}[b!]
\centering 
\begin{subfigure}[h]{0.3\linewidth}
\includegraphics[width=\linewidth]{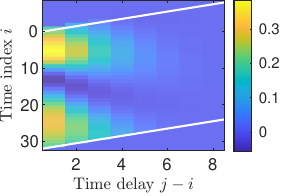}
\caption{true kernel}
\label{fig:true-small-kernel-time-only}
\end{subfigure}
\hspace{+0pt}
\begin{subfigure}[h]{0.3\linewidth}
\includegraphics[width=\linewidth]{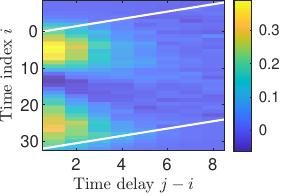}
\caption{{\texttt{TULIK-VI}}}
\label{fig:TULIK-VI-small-kernel-time-only}
\end{subfigure}
\hspace{+0pt}
\begin{subfigure}[h]{0.3\linewidth}
\includegraphics[width=\linewidth]{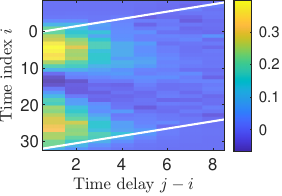}
\caption{{\texttt{TULIK-GD}}}
\label{fig:TULIK-GD-small-kernel-time-only}
\end{subfigure}
\vspace{-5pt}
\caption{Recovered kernels on simulated time-only event data with kernel function defined in \eqref{eq:time-only-kernel}, $N'=8$ and $N=32$. 
(a) The true kernel. (b) The kernel recovered by \texttt{TULIK-VI}. (c) The kernel recovered by \texttt{TULIK-GD}.
The white lines indicate the estimable parameters.
}
\label{fig:TULIK-small-kernel-time-only}
\end{figure}

\begin{figure}[b!]
\centering 
\begin{subfigure}[h]{0.80\linewidth}
\includegraphics[width=\linewidth]{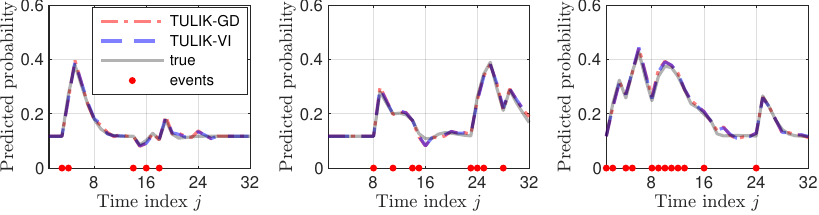}
\end{subfigure}
\vspace{-5pt}
\caption{
The probability predictions on three randomly selected testing sequences.
Data is the simulated time-only event data with kernel function defined in \eqref{eq:time-only-kernel}, $N'=8$ and $N=32$.
}
\label{fig:prob-pred-time-only-smallkernel}
\end{figure}

\begin{table}[th!]
    \caption{
    Relative Errors (REs) of learned $\mu$, 
    kernel matrix, 
    and probability predictions,
    on simulated time-only event data with the kernel function defined in \eqref{eq:time-only-kernel}, $N'=8$ and $N=32$.
    We report means of REs over 10 replicas, and standard deviations are in parentheses.  
    For probability predictions, the best performance is in boldface.  
    Note that $\mu$ is a scalar in the time-only example and its $\ell_1$, $\ell_2$, and $\ell_\infty$ errors are the same. 
    }
    \label{tab:time-only-smallkernel}
    \vspace{-5pt}
    \begin{subtable}[h]{1\textwidth}
        \centering
        \resizebox{1\columnwidth}{!}{%
	\begin{tabular}{c cc cc ccccc}
        \toprule
        \multicolumn{10}{c}{time-only event data with discrete time grids $N'=8$ and $N=32$}\\ 
        \cmidrule(lr){2-10}
        RE $\times 10^{-2}$ & \multicolumn{2}{c}{$\mu$} & \multicolumn{2}{c}{kernel} & \multicolumn{5}{c}{prediction}\\
        \cmidrule(lr){2-3}  \cmidrule(lr){4-5}  \cmidrule(lr){6-10} 
        & {\texttt{TULIK-VI}} & {\texttt{TULIK-GD}} & {\texttt{TULIK-VI}} & {\texttt{TULIK-GD}} & {\texttt{TULIK-VI}} & {\texttt{TULIK-GD}} 
        & {\texttt{GLM-L}} & {\texttt{GLM-S}} & {\texttt{HP-E}} \\
			 \cmidrule(lr){2-3} \cmidrule(lr){4-5}  \cmidrule(lr){6-10} 
\multirow{2}{*} {$\ell_1$} & 0.40 &0.47 &16.37 &18.39 &{\bf 2.85} &3.20 &3.93 &6.13 &37.08\\

 & (0.38) & (0.41) & (0.82) & (0.86) & (0.11) & (0.15) & (0.19) & (0.19) & (0.65)\\[3pt]

\multirow{2}{*} {$\ell_2$} & -- & -- &12.07 &13.46 &{\bf 3.96} &4.43 &5.25 &7.99 &53.06\\

 &  &  & (0.70) & (0.66) & (0.16) & (0.20) & (0.21) & (0.14) & (1.14)\\[3pt]

\multirow{2}{*} {$\ell_\infty$} & -- & -- &11.26 &12.35 &{\bf 6.02} &6.73 &8.09 &12.85 &92.83\\

 &  &  & (1.40) & (1.32) & (0.35) & (0.38) & (0.41) & (0.30) & (2.49)
   			\\\bottomrule
		\end{tabular}
  }%
    \end{subtable}
\end{table}

 \begin{figure}[t!]
\centering 
\begin{subfigure}[h]{0.3\linewidth}
\includegraphics[width=\linewidth]{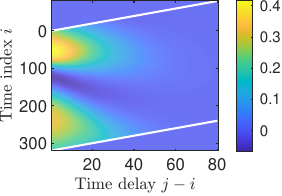}
\caption{true kernel}
\label{fig:true-large-kernel-time-only}
\end{subfigure}
\hspace{+0pt}
\begin{subfigure}[h]{0.3\linewidth}
\includegraphics[width=\linewidth]{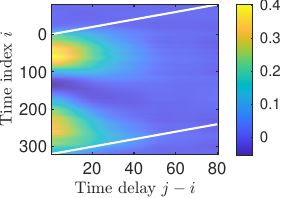}
\caption{{\texttt{TULIK-VI}}}
\label{fig:TULIK-VI-large-kernel-time-only}
\end{subfigure}
\hspace{+0pt}
\begin{subfigure}[h]{0.3\linewidth}
\includegraphics[width=\linewidth]{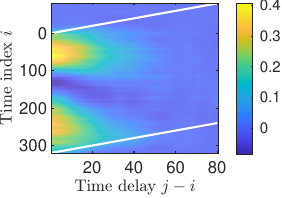}
\caption{{\texttt{TULIK-GD}}}
\label{fig:TULIK-GD-large-kernel-time-only}
\end{subfigure}
\vspace{-5pt}
\caption{
Recovered kernels on simulated time-only event data with kernel function defined in \eqref{eq:time-only-kernel},  $N'=80$ and $N=320$.
Same plots as in Figure~\ref{fig:TULIK-small-kernel-time-only}. 
The recovered kernel is after a low-rank truncation. }
\label{fig:TULIK-large-kernel-time-only}
\end{figure}

\begin{figure}[t!]
\centering 
\begin{subfigure}[h]{0.8\linewidth}
\includegraphics[width=\linewidth]{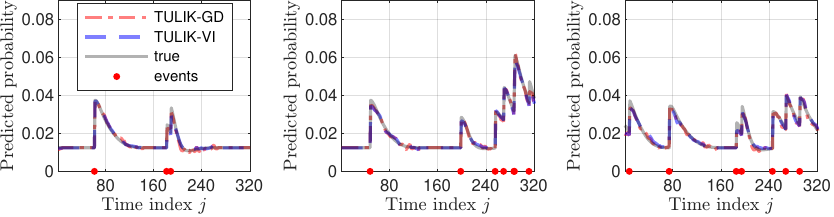}
\end{subfigure}
\vspace{-5pt}
\caption{
Same plots as in Figure \ref{fig:prob-pred-time-only-smallkernel}.
Data is the simulated time-only event data with kernel function defined in \eqref{eq:time-only-kernel}, $N'=80$ and $N=320$.}
\label{fig:prob-pred-time-only-largekernel}
\end{figure}

\begin{table}[t!]
    \caption{Relative Errors (REs) of learned $\mu$, 
    kernel matrix, 
    and probability predictions,
    on simulated time-only event data with the kernel function defined in \eqref{eq:time-only-kernel}, $N'=80$ and $N=320$.
    The format are the same as in Table~\ref{tab:time-only-smallkernel}.}
    \label{tab:time-only-largekernel}
    \vspace{-5pt}
    \begin{subtable}[h]{1\textwidth}
        \centering
        \resizebox{1\columnwidth}{!}{%
	\begin{tabular}{c cc cc ccccc}
        \toprule
        \multicolumn{10}{c}{time-only event data with discrete time grids $N'=80$ and $N=320$}\\ 
        \cmidrule(lr){2-10}
        RE $\times 10^{-2}$ & \multicolumn{2}{c}{$\mu$} & \multicolumn{2}{c}{Kernel} & \multicolumn{5}{c}{Prediction}\\
        \cmidrule(lr){2-3}  \cmidrule(lr){4-5}  \cmidrule(lr){6-10} 
        & {\texttt{TULIK-VI}} & {\texttt{TULIK-GD}} & {\texttt{TULIK-VI}} & {\texttt{TULIK-GD}} & {\texttt{TULIK-VI}} & {\texttt{TULIK-GD}} 
        & {\texttt{GLM-L}} & {\texttt{GLM-S}} & {\texttt{HP-E}} \\
			 \cmidrule(lr){2-3} \cmidrule(lr){4-5}  \cmidrule(lr){6-10} 
			\multirow{2}{*} {$\ell_1$} &1.46 &0.72 &14.49 &18.23 &{\bf3.34} &3.89 &37.11 &27.96 &72.88\\

 & (0.80) & (0.37) & (0.53) & (0.67) & (0.23) & (0.19) & (0.59) & (0.49) & (1.74)\\[3pt]

\multirow{2}{*} {$\ell_2$} & -- & -- &11.58 &13.39 &{\bf4.50} &5.31 &51.65 &39.79 &149.82\\

 &  &  & (0.50) & (0.52) & (0.19) & (0.25) & (0.48) & (0.79) & (5.44)\\[3pt]

\multirow{2}{*} {$\ell_\infty$} & -- & -- &17.50 &17.04 &{\bf8.64} &9.44 &98.82 &101.33 &393.29\\

 &  &  & (1.52) & (2.27) & (0.59) & (0.50) & (0.91) & (2.57) & (17.25)
			\\\bottomrule
		\end{tabular}
  }%
    \end{subtable}
\end{table}

\paragraph{Method.}

% our method
In the experimental results, we refer to our model as the Time Uncertain Latent Influence Kernel (\texttt{TULIK}) point processes, and the model estimated using VI and GD (following Algorithm \ref{algorithm:train-time-only} as introduced in Section~\ref{subsec:algo-time-only}) are referred to as \texttt{TULIK-VI} and \texttt{TULIK-GD}, respectively.
The choice of training hyperparameters is detailed in Appendix \ref{apdx:choice-hyperparameters-time-only}.
The training dynamic on one training set, including the evolution of the training (negative) log-likelihood and the convergence of the learned $\mu$ are illustrated in Figure \ref{fig:TULIK-nonstationary-kernel-time-only-dynamics}.

For comparison, we consider three alternative baselines: 
(1) Generalized Linear Model (GLM) \cite{juditsky2020convex}
with linear link function (\texttt{GLM-L}), 
where the link function is an identity function with range $[0,1]$; 
(2) GLM with Sigmoid link function (\texttt{GLM-S}); 
(3) Continuous-time Hawkes Process with Exponential kernel (\texttt{HP-E}) \cite{hawkes1971point,hawkes1971spectra,hawkes1974cluster}, see more in Section~\ref{subsec:prelim-hawkes}. 
The implementation details of the alternative baselines can be found in Appendix \ref{apdx:detail-baselines-time-only}.

\paragraph{Results.}
For data with $N'=8$ and $N=32$, both \texttt{TULIK-VI} and \texttt{TULIK-GD} can learn the true model parameters ($\mu$ and kernel matrix) well, resulting in good prediction performance, as shown quantitatively in Table \ref{tab:time-only-smallkernel}.
The proposed TULIK models outperform the other three baselines, where  \texttt{TULIK-VI} gains slightly better performance than  \texttt{TULIK-GD}. 
The recovered kernel matrices are plotted in Figure~\ref{fig:TULIK-small-kernel-time-only}, showing a good agreement with the true kernel. 
The predicted probability is also close to the ground truth, as reflected by the almost overlapping curves in Figure~\ref{fig:prob-pred-time-only-smallkernel}.

For the data with $N'=80$ and $N=320$, which results in much higher dimensional trainable parameters, the proposed TULIK model can still achieve reasonable recovery of kernel matrices and $\mu$'s and accurate probability predictions.
Figure~\ref{fig:TULIK-large-kernel-time-only} shows that both \texttt{TULIK-VI} and \texttt{TULIK-GD} produce correct shapes and magnitudes of kernel matrices, where we applied low-rank truncation to the learned kernel matrix. 
Figure~\ref{fig:prob-pred-time-only-largekernel} visualizes the probability predictions in good agreement with the ground truth. 
From the quantitative errors in Table~\ref{tab:time-only-largekernel}, our proposed model again achieves the best relative prediction errors, and VI slightly outperforms GD.   

\paragraph{Example of a stationary process.}
As has been introduced in Section \ref{subsec:special-structure-psi}, the proposed TULIK model can be applied to stationary processes with time uncertainty, where the kernel is time-invariant.
This is a special case of the more general time-varying kernel. 
We provide experiments on such a simplified case in Appendix \ref{adpx:time-only-stationary}. The results show that the proposed approach successfully recovers the model parameters and outperforms the alternative baselines.

\subsection{Simulated event data on networks}\label{subsec:on-network-example}

\begin{figure}[t]
\centering 
\begin{subfigure}[h]{0.25\linewidth}
\includegraphics[width=\linewidth]{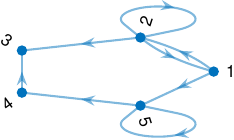}
\end{subfigure}
\caption{The visualization for the network structure of the on-network event data in Section~\ref{subsec:on-network-example}. } 
\label{fig:network-structure}
\end{figure}

\begin{figure}[b!]
\centering 
\begin{subfigure}[h]{0.3\linewidth}
\includegraphics[width=\linewidth]{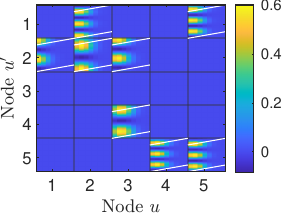}
\caption{true kernel}
\label{fig:true-kernel-on-network}
\end{subfigure}
\hspace{+6pt}
\begin{subfigure}[h]{0.305\linewidth}
\includegraphics[width=\linewidth]{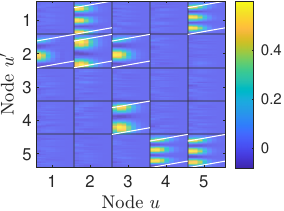}
\caption{{\texttt{TULIK-VI}}}
\label{fig:TULIK-VI-kernel-on-network}
\end{subfigure}
\hspace{+6pt}
\begin{subfigure}[h]{0.3\linewidth}
\includegraphics[width=\linewidth]{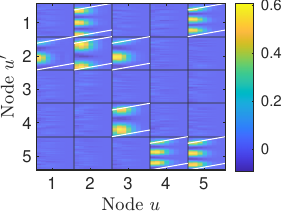}
\caption{{\texttt{TULIK-GD}}}
\label{fig:TULIK-GD-kernel-on-network}
\end{subfigure}
\caption{
Recovered kernel tensors on simulated on-network event data with kernel function defined in \eqref{eq:on-network-kernel}, $V=5$, $N'=8$ and $N=32$. 
(a) The true kernel. (b) The kernel recovered by \texttt{TULIK-VI}. (c) The kernel recovered by \texttt{TULIK-GD}.
The learned kernels are after low-rank truncation. 
The $(u',u)$-th block in the plot shows the kernel matrix $[\Psi_{i,l}(u',u)]_{i,l}$, for $u',u = 1,\cdots, 5$.
The white slashes box out the trainable parameters of the kernel matrices corresponding to the directed edges.} 
\label{fig:TULIK-kernel-on-network}
\end{figure}

\begin{figure}[b!]
\centering 
\begin{subfigure}[h]{0.80\linewidth}
\includegraphics[width=\linewidth]{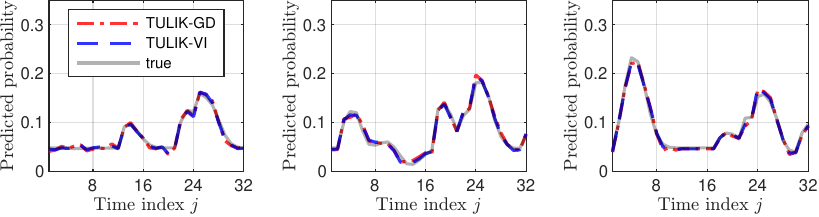}
\end{subfigure}
\caption{
The probability predictions on 3 randomly selected testing sequences on one node (node 3 in the graph in Figure \ref{fig:network-structure}).
Data is the simulated on-network event data with kernel function defined in \eqref{eq:on-network-kernel}, $V=5$, $N'=8$, and $N=32$.} 
\label{fig:prob-pred-on-network}
\end{figure}

\paragraph{Dataset.} For the on-network event data, we set a time horizon $[-\tau_{\rm max}, T]$ with $\tau_{\rm max}=0.8$ and $T = 3.2$. The network has $V=5$ nodes, in which 8 pairs of successors and predecessors are randomly sampled from $\calV$ to form 8 directed edges. The network structure is visualized in Figure~\ref{fig:network-structure}. For a directed edge $u^\prime \rightarrow u$, the marginal kernel (temporal pattern) is defined as 
\begin{equation}
\label{eq:on-network-kernel}
k(t',t;u',u) = 0.35\left(\cos\left(\omega_{u^\prime, u}(t+2)\right)+0.75\right)e^{-20\left(t-t'-h_{u^\prime,u}\right)^2},
\end{equation}
where $\omega_{u^\prime, u}$ independently follows $\Unif(2,6)$ and $h_{u^\prime, u}$ is independently drawn from $\Unif(0,0.2)$. Given $N'=8$ and $N=32$, we can induce the four-way kernel tensor $\Psi$ as in \eqref{eq:def-Psi-tensor} and visualize it in Figure~\ref{fig:true-kernel-on-network}. The non-zero block matrix on $u'$-th row and $u$-th column corresponds to a directed edge $u'\to u$. The zero block matrices indicate disconnected pairs of nodes. 

Given the kernel and $\mu(u)$ independently sampled from $\Unif(0.25,0.35)$ for $u\in\calV$, we can generate on-network Bernoulli process data as in Section \ref{subsec:bernoulli-graph}. The training data and testing data contain 40000 and 500 trajectories, respectively.

\paragraph{Method.} 
We train the proposed model using Algorithm \ref{apdx:algorithm:train-graph} (in Appendix) as introduced in Section~\ref{subsec:algo-on-network}. The choice of training hyperparameters is discussed in Appendix \ref{apdx:choice-hyperparameters-on-network}.
The convergence of the training (negative) log-likelihood and the learned $\mu$ vector is illustrated in Figure \ref{fig:TULIK-on-network-dynamics}. 
We compare the proposed method with the baselines, including \texttt{GLM-L}, \texttt{GLM-S}, and \texttt{HP-E}. The implementation details of the alternative baselines in the on-network setting are given in Appendix \ref{apdx:detail-baselines-on-network}.

\begin{table}[t!]
    \caption{
    Relative Errors (REs) of learned $\mu$, 
    kernel tensor, 
    and probability predictions,
    on simulated on-network event data with the kernel function defined in \eqref{eq:on-network-kernel}, $V=5$, $N'=8$, and $N=32$.
    We report means of REs over 10 replicas, and standard deviations are in parentheses.  
    For probability predictions, the best performance is in boldface.  
    }
    \label{tab:on-graph-10-node}
    \vspace{-5pt}
    \begin{subtable}[h]{1\textwidth}
        \centering
        \resizebox{1\columnwidth}{!}{%
\begin{tabular}{c cc cc ccccc}
        \toprule
        \multicolumn{10}{c}{on-network event data with discrete time grids $N'=8$ and $N=32$}\\ 
        \cmidrule(lr){2-10}
        RE $\times 10^{-2}$ & \multicolumn{2}{c}{$\mu$} & \multicolumn{2}{c}{Kernel} & \multicolumn{5}{c}{Prediction}\\
        \cmidrule(lr){2-3}  \cmidrule(lr){4-5}  \cmidrule(lr){6-10} 
        & {\texttt{TULIK-VI}} & {\texttt{TULIK-GD}} & {\texttt{TULIK-VI}} & {\texttt{TULIK-GD}} & {\texttt{TULIK-VI}} & {\texttt{TULIK-GD}} 
        & {GLM-L} & {GLM-S} & {HP-E} \\
			 \cmidrule(lr){2-3} \cmidrule(lr){4-5}  \cmidrule(lr){6-10} 
			\multirow{2}{*} {$\ell_1$} &7.28 &6.04 &19.11 &21.07 &{\bf6.36} &7.47 &9.29 &10.29 &31.55\\

 & (0.52) & (0.57) & (0.57) & (0.44) & (0.09) & (0.09) & (0.11) & (0.11) & (0.45)\\[3pt]

\multirow{2}{*} {$\ell_2$} &7.40 &6.19 &13.49 &14.70 &{\bf7.42} &8.76 &11.32 &13.94 &45.97\\

 & (0.56) & (0.60) & (0.33) & (0.24) & (0.09) & (0.09) & (0.12) & (0.14) & (0.41)\\[4pt]

\multirow{2}{*} {$\ell_\infty$} &8.15 &7.09 &17.22 &17.22 &{\bf9.62} &11.09 &15.01 &25.97 &72.12\\

 & (0.82) & (0.85) & (2.00) & (2.16) & (0.07) & (0.19) & (0.15) & (0.53) & (0.33)
			\\\bottomrule
		\end{tabular}
  }%
    \end{subtable}
\end{table}

\paragraph{Results.} 
Both \texttt{TULIK-VI} and \texttt{TULIK-GD} can learn the vector $\mu$ and the kernel tensor, leading to accurate prediction as quantitatively revealed in Table \ref{tab:on-graph-10-node}.
Similar to the time-only examples, the proposed TULIK models yield smaller prediction errors than the other baselines, where \texttt{TULIK-VI} is slightly better than \texttt{TULIK-GD}. 
After optimization loops, we apply low-rank truncation to the matrix $\mathbf{\Psi}$, and the learned kernel tensors are visualized in Figure~\ref{fig:TULIK-kernel-on-network}.
It can be seen that the kernel is well recovered, and in particular, the learned kernel reveals the correct causal network:  when $(u',u)$ is a missing (directed) edge in the network (Figure \ref{fig:network-structure}), the corresponding block in the learned kernel also takes almost zero values.  
The predicted probability is also consistent with the ground truth, as shown in Figure~\ref{fig:prob-pred-on-network} (the prediction is successful on all the nodes, where the plots on one node are shown).

\section{Real data}

\subsection{Sepsis-Associated-Derangements (SADs) data}
\label{subsec:more-on-sepsis}

\begin{table}[b!]
    \centering
        \caption{Selected 13 medical indices in SADs data}
    \label{tab:medical indices}
    \resizebox{0.55\columnwidth}{!}{%
\begin{tabular}{ccc}
\toprule Node & Abbreviation & Full Name \\
\midrule[0.3pt] 1 & RI Kernel & Renal Injury \\
 2 & EI & Electrolyte Imbalance \\
 3 & OCD & Oxygen Carrying Dysfunction\\
 4 & Shock & Shock \\
 5 & DCO & Diminished Cardiac Output \\
 6 & Cl & Coagulopathy\\
 7 & Chl & Cholestatsis \\
 8 & HI & Hepatocellular Inury\\
 9 & OD(Lab) & Oxygenation Dysfunction (Lab)\\
 10 & Inf(Lab) & Inflammation (Lab)\\
 11 & OD(vs) & Oxygenation Dysfunction (vital sign)\\
 12 & Inf(vs) & Inflammation (vital sign)\\
 13 & Sepsis & Sepsis \\
\bottomrule
\end{tabular}
}%

\end{table}

\paragraph{Dataset.} We consider the data to be publicly available through the 2019 PhysioNet Challenge \cite{physionetChallenge}, which consists of hospitalization records of Intensive Care Unit (ICU) patients. Each patient record includes three main types of variables: vital signs, laboratory (lab) measurements, and demographic information. The goal is to use this data to predict the onset of sepsis—a condition that arises when the body's response to infection causes tissue damage, organ failure, or death, and can quickly become life-threatening \cite{singer2016third}.

We use a processed version of this dataset, referred to as SADs \cite{wei2021inferringb,wei2023granger,10366499}, which was created by grouping all raw observations—including vital signs and lab measurements—into 13 medical indices using a standard procedure \cite{ABIM_ref_range}. These indices represent common physiological derangements associated with sepsis and are treated as nodes in our graph. Table~\ref{tab:medical indices} lists the abbreviations and names of the indices. Vital signs are recorded hourly, introducing a natural one-hour time uncertainty in the data. Lab test results are typically returned within 24 hours, leading to greater time uncertainty; that is, the test result indicates the patient's condition within 24 hours around the test time. For simplicity, we assume the test time uncertainty is also 1 hour.  Appendix~\ref{apdx:data-processing-sepsis} contains details on data pre-processing.

Given the 13 medical indices, the processed SAD data can be viewed as a point process on a network with $ V=13$ nodes, with a time uncertainty of 1 hour. Each patient’s data trajectory is a Boolean process on this network, which we refer to as a ``signal'' that takes values 0 or 1 at each node ($y_t(u) \in {0, 1}$). We consider the 24 hours following ICU admission and set the influence time lag, $\tau_{\max}$, to 4 hours. As a result, the time horizon is $[-\tau_{\rm max}, T] = [-4, 20]$, covering a total of 24 hours. The time grid has $N = 20$ and $N' = 4$, with unit uncertainty of one hour. 

The node ``Sepsis'' marks the onset of sepsis for each patient, according to the Sepsis-3 definition. For time points more than 6 hours before sepsis onset, the observation on the sepsis node is always 0; after sepsis onset, the observation remains positive. To support early prediction of sepsis onset, the positive labels are shifted 6 hours earlier, meaning that prediction within 6 6-hour period before and after the initial onset of sepsis is meaningful, consistent with the definition in Physionet Challenge \cite{physionetChallenge}.

In the experiments below, the processed dataset comprises 1,141 patient records, all of whom were diagnosed with sepsis, with varying onset times. We randomly select 1,070 trajectories for training and 71 trajectories for testing.

\begin{table}[t!]
     \caption{Sepsis prediction: True Positive Rate (TPR), True Negative Rate (TNR), and Balanced Accuracy (BA) from \texttt{TULIK-VI}. The standard deviations of TPR, TNR, and BA over 10 training replicas are in parentheses. The highest metrics are bolded. The training data and testing data consist of 1070 and 71 patient trajectories, respectively. }
    \label{tab:sepsis-metric}
        \begin{subtable}[h]{1\textwidth}
        \centering
%        \resizebox{0.75\columnwidth}{!}{%
        \begin{scriptsize}
		\begin{tabular}{cccccc}\toprule
     & {\texttt{TULIK-VI}} & {\texttt{TULIK-VI} (Stationary)} & {\texttt{GLM-L}} & {\texttt{GLM-S}} & {\texttt{HP-E}}\\
     \cmidrule(lr){2-6}
\multirow{2}{*} {TPR} & {\bf 0.6842} &0.4632 &0.5789 &0.6316 &0.0000\\

 & (0.0000) & (0.0211) & (0.0000) & (0.0000) & (0.0000)\\[3pt]

\multirow{2}{*} {TNR} & {\bf 0.6904} &0.4519 &0.5769 &0.6731 &0.0192\\

 & (0.0058) & (0.0197) & (0.0000) & (0.0000) & (0.0000)\\[3pt]

\multirow{2}{*} {BA} & {\bf 0.6873} &0.4575 &0.5779 &0.6523 &0.0096\\

 & (0.0029) & (0.0203) & (0.0000) & (0.0000) & (0.0000)\\
            
   \bottomrule
		\end{tabular}
%	}  %
\end{scriptsize}
    \end{subtable}
    \end{table}

\begin{figure}[t!]
\centering 
\begin{subfigure}[h]{0.24\linewidth}
\includegraphics[width=\linewidth]{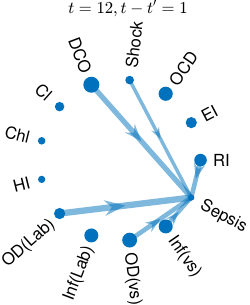}
\end{subfigure}
\begin{subfigure}[h]{0.24\linewidth}
\includegraphics[width=\linewidth]{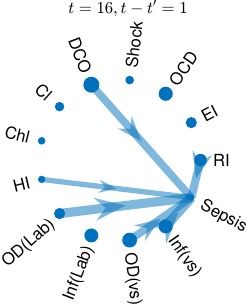}
\end{subfigure}
\begin{subfigure}[h]{0.24\linewidth}
\includegraphics[width=\linewidth]{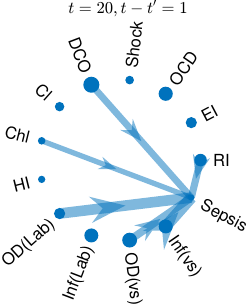}
\end{subfigure}

\caption{Temporally evolving influence networks. For clarity of visualization, only the top five strongest edges are shown. The time lag $t - t'$ is fixed at 1, corresponding to one-hour effects. The future time $t$ progresses from 12 to 20 in increments of 4. Node radii are proportional to their baseline intensities. Blue arrows indicate excitation effects, with arrow widths scaled to reflect their magnitudes. Marker sizes and line widths are consistent across all plots.
}
\label{fig:causal-networks}
\end{figure}

\paragraph{Method.} 
We use \texttt{TULIK-VI} on the SADs dataset, as it exhibits more stable training behavior than \texttt{TULIK-GD} in our case. As in Section~\ref{subsec:on-network-example}, we compare the proposed model with baseline methods, including \texttt{GLM-L}, \texttt{GLM-S}, and \texttt{HP-E}. Experimental details are provided in Appendices~\ref{apdx:choice-hyperparameters-sepsis} and \ref{apdx:eva-sepsis}. 

\paragraph{Results.} 
The performance of all methods is reported in Table~\ref{tab:sepsis-metric}. We evaluate True Positive Rate (TPR), True Negative Rate (TNR), and Balanced Accuracy (BA), on the testing data. These evaluation metrics are standard for sepsis prediction \cite{physionetChallenge}. As shown in the table, \texttt{TULIK-VI} consistently outperforms the baseline methods. 

In particular, \texttt{TULIK}, which allows for non-stationary modeling, significantly outperforms its stationary counterpart, indicating that the sepsis data likely contain substantial non-stationarity, which our model effectively captures. \texttt{TULIK}, with its modeling of time uncertainty and flexible triggering pattern modeling, also outperforms methods that do not account for time uncertainty, such as \texttt{GLM-L} and \texttt{GLM-S}, which use standard link functions, and continuous time \texttt{HP-E}, which employs a parametric exponential decaying triggering kernel. These results highlight the importance of incorporating both time uncertainty and nonparametric triggering structures in modeling sepsis dynamics.

To further interpret the results, we visualize the temporally evolving causal networks extracted from the kernel tensor learned by \texttt{TULIK-VI} in Figure~\ref{fig:causal-networks}, which indicates the inferred Granger causal relationships between medical indices. Node radii are proportional to the estimated baseline intensities $\mu(u)$; arrows represent excitatory effects, with arrow widths indicating the magnitude of influence. We present the five strongest edges at each time snapshot of the network.

The learned network structure aligns with medical knowledge. For example, the node ``DCO'' (Diminished Cardiac Output) consistently exhibits a strong excitatory influence on sepsis onset, an observation well-established in sepsis research \cite{10366499}. Additionally, as time progresses, more medical indices show a strong influence on sepsis onset, reflecting the development of multiple organ dysfunction syndrome in severe cases. Figure~\ref{fig:sepsis-non-linear-effects} shows that the effects of biological processes are nonlinear, suggesting that medical indices cannot deteriorate indefinitely, nor can their influence on sepsis onset grow without bound.

\begin{figure}[t!]
\centering 
\begin{subfigure}[h]{0.24\linewidth}
\includegraphics[width=\linewidth]{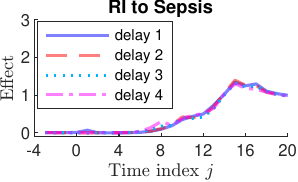}
\end{subfigure}
\begin{subfigure}[h]{0.24\linewidth}
\includegraphics[width=\linewidth]{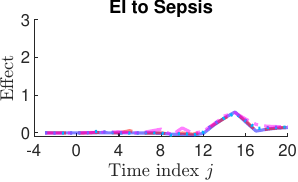}
\end{subfigure}
\begin{subfigure}[h]{0.24\linewidth}
\includegraphics[width=\linewidth]{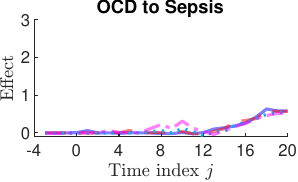}
\end{subfigure}
\begin{subfigure}[h]{0.24\linewidth}
\includegraphics[width=\linewidth]{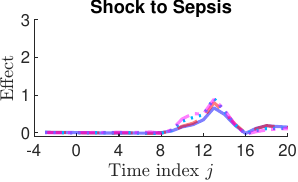}
\end{subfigure}

\vspace{+5pt}
\begin{subfigure}[h]{0.24\linewidth}
\includegraphics[width=\linewidth]{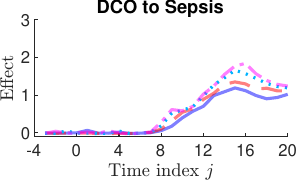}
\end{subfigure}
\begin{subfigure}[h]{0.24\linewidth}
\includegraphics[width=\linewidth]{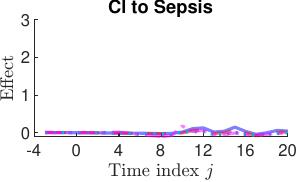}
\end{subfigure}
\begin{subfigure}[h]{0.24\linewidth}
\includegraphics[width=\linewidth]{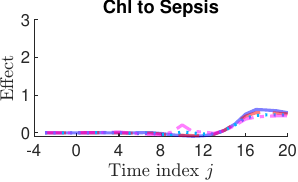}
\end{subfigure}
\begin{subfigure}[h]{0.24\linewidth}
\includegraphics[width=\linewidth]{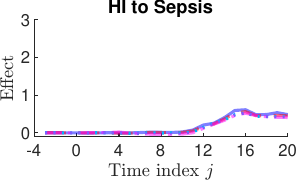}
\end{subfigure}

\vspace{+5pt}
\begin{subfigure}[h]{0.24\linewidth}
\includegraphics[width=\linewidth]{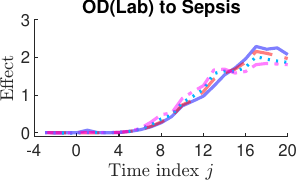}
\end{subfigure}
\begin{subfigure}[h]{0.24\linewidth}
\includegraphics[width=\linewidth]{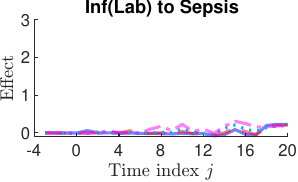}
\end{subfigure}
\begin{subfigure}[h]{0.24\linewidth}
\includegraphics[width=\linewidth]{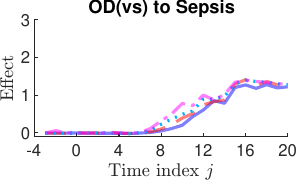}
\end{subfigure}
\begin{subfigure}[h]{0.24\linewidth}
\includegraphics[width=\linewidth]{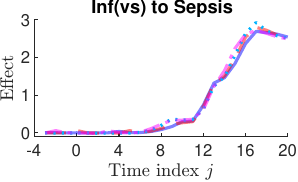}
\end{subfigure}

\vspace{-5pt}
\caption{The effects from all medical indices to the sepsis onset over time across different delays.}
\label{fig:sepsis-non-linear-effects}
\end{figure}

\subsection{Burglary crime data}\label{subsec:burglary-crime}
%\tn{Mar 5: all text related to burglary crime data are new, not made blue}

\paragraph{Dataset.}  To validate our method on another dataset, we consider a proprietary dataset provided by the Atlanta Police Department (APD), including over 213,000 burglary events recorded from the beginning of 2013 to the end of 2019, with timestamps and geographic coordinates (longitude and latitude). We focus on the downtown area of Atlanta, defined by the longitude range $[-84.400, -84.375]$ and latitude range $[33.745, 33.770]$. Figure~\ref{fig:atl-downtown-burglary} displays the spatial distribution of burglary events in this area.
Following the setup in \cite{juditsky2020convex}, we uniformly divide the downtown area into 16 sub-regions, resulting in $V = 16$ nodes for the point process model. We treat each 24-hour period, from 12:00 AM to 11:59 PM, as one trajectory. The time horizon is defined as $(-\tau_{\rm max}, T] = (-8, 16]$, incorporating a one-hour uncertainty. The time grid includes $N = 16$ and $N' = 8$ intervals, each representing one hour.
The processed dataset consists of 2,548 daily trajectories. We randomly select 2,250 for training and use the remaining 298 for testing.

\begin{figure}[b!]
\centering 
\begin{subfigure}[h]{0.6\linewidth}
\includegraphics[width=\linewidth]{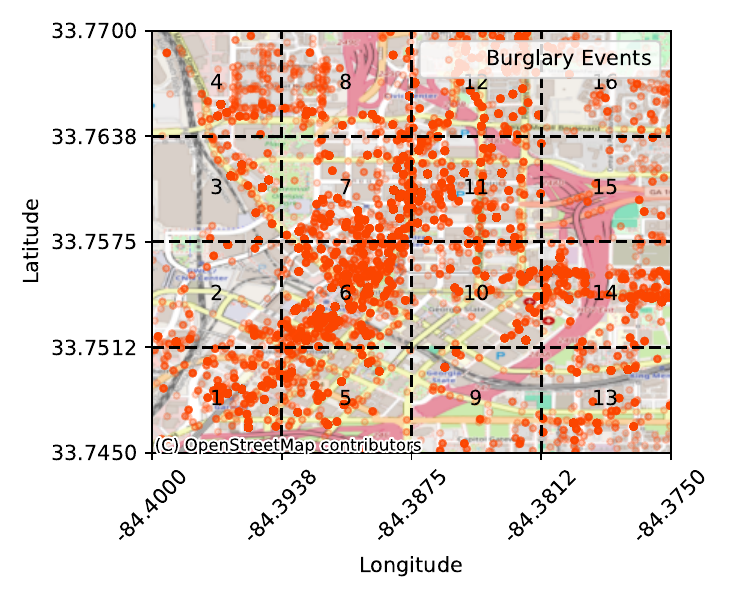}
\end{subfigure}

\caption{The distribution of the burglary crime events from 2013 to 2019 in the downtown Atlanta area. 
}
\label{fig:atl-downtown-burglary}
\end{figure}

\paragraph{Method.} We compared \texttt{TULIK-VI} with \texttt{GLM-L}, \texttt{GLM-S}, and \texttt{HP-E}. Details of the hyperparameter tuning procedure are provided in Appendix~\ref{apdx:choice-hyperparameters-burglary}.

\begin{table}[t!]
     \caption{Crime data modeling in downtown Atlanta: TPR, TNR, and BAs from TULIK-VI. The time horizon is 1 day with an uncertainty of 1 hour. The standard deviations of TPR, TNR, and BA over 10 training replicas are in parentheses. The largest metrics are bolded. }
    \label{tab:burglary-metric}
        \begin{subtable}[h]{1\textwidth}
        \centering
%        \resizebox{0.75\columnwidth}{!}{%
        \begin{scriptsize}
		\begin{tabular}{cccccc}\toprule
     & {TULIK-VI} & {TULIK-VI (Stationary)} & {GLM-L} & {GLM-S} & {HP}\\
     \cmidrule(lr){2-6}
\multirow{2}{*} {TPR} &{\bf 0.6705} &0.6686 &0.6186 &0.6594 &0.4533\\

 & (0.0008) & (0.0009) & (0.0004) & (0.0002) & (0.0031)\\[3pt]

\multirow{2}{*} {TNR} &{\bf0.6710} &0.6683 &0.6187 &0.6532 &0.4733\\

 & (0.0010) & (0.0008) & (0.0004) & (0.0000) & (0.0098)\\[3pt]

\multirow{2}{*} {BA} &{\bf0.6707} &0.6684 &0.6187 &0.6563 &0.4633\\

 & (0.0002) & (0.0004) & (0.0003) & (0.0001) & (0.0036)\\[3pt]
   \bottomrule
		\end{tabular}
%	}  %
\end{scriptsize}
    \end{subtable}
    \end{table}

%\begin{figure}[t!]
%\centering 
%\begin{subfigure}[h]{0.24\linewidth}
%\includegraphics[width=\linewidth]{plots/test_f19_realVI_Graph_t8lag1VI.pdf}
%\end{subfigure}
%\begin{subfigure}[h]{0.24\linewidth}
%\includegraphics[width=\linewidth]{plots/test_f19_realVI_Graph_t16lag1VI.pdf}
%\end{subfigure}
%\begin{subfigure}[h]{0.24\linewidth}
%\includegraphics[width=\linewidth]{plots/test_f19_realVI_Graph_t24lag1VI.pdf}
%\end{subfigure}

%\caption{Atlanta burglary data analysis: visualization of temporally evolving causal networks. We visualize time lag $t-t'$ at $1$ hour, and increase from $0$ to $16$ by a gap $8$. The radii of nodes are proportional to the base intensities of nodes.
%Blue (red) arrows represent excitations (inhibitions), whose widths are proportional to the magnitudes. 
%%The marker size and line widths are comparable across plots. 
%}
%\label{fig:burglary-causal-networks}
%\end{figure}

\paragraph{Results.} 
Table~\ref{tab:burglary-metric} presents the prediction performance of all methods. We report TPR, TNR, and BA on the testing data. For the burglary crime data, TPR and TNR are computed by averaging across all nodes for $t \in (0, 16]$, treating event occurrences as positives and non-events as negatives. \texttt{TULIK-VI} achieves noticeably higher prediction accuracy compared to the baseline methods.

Again, we observe that \texttt{TULIK}, allowing for non-stationary modeling, outperforms its stationary counterpart. However, in this example, the performance gain is less pronounced than in the sepsis prediction case, suggesting that the burglary data may exhibit less non-stationarity. Nevertheless, \texttt{TULIK} still outperforms the alternative methods—\texttt{GLM-L}, \texttt{GLM-S}, and \texttt{HP-E}, the latter of which employs a parametric triggering kernel, highlighting the value of incorporating time uncertainty and flexible triggering structures.

%To examine the interactions between sub-regions, we visualize the learned Granger causal graph in Figure~\ref{fig:burglary-causal-networks}. The plots are similar to those in Figure~\ref{fig:causal-networks}, with the addition of red arrows representing negative inhibitory effects. We display the top 10 strongest effects between sub-regions in each causal network. Overall, \texttt{TULIK-VI} serves as an effective tool for uncovering the causal structure of crime activities across different regions.

%The inferred causal relationships align with several observations in the data. During the daytime, sub-region 6 exhibits the strongest self-exciting effect, consistent with the high event density observed in Figure~\ref{fig:atl-downtown-burglary}. During $t \in [0, 8]$ (corresponding to 8 AM to 4 PM), most effects are strong and excitatory, indicating frequent burglary activity, likely due to residents being away from home during this period. Around $t = 16$ (midnight), there is a presence of inhibitory effects suggest a decline in burglary activity, possibly because people are typically at home. (??)

\section{Discussion}

The work can be extended in several directions. 
% uncertainty
First, although we focused on the unit-time uncertainty case in our algorithm development, our method can be extended to handle the arbitrary event uncertainty case in Section \ref{subsec:discrete-general-uncertainty}, applicable to both the time-only and on-network scenarios.
% mu(t)
Second, it would be useful to allow the baseline intensity $\mu$ to vary with time $t$, which can be handled within our discrete-time framework by introducing additional learnable parameters. 
% poison process
Meanwhile, it would be interesting to relax the requirement that each uncertainty window contains exactly one event (Assumption \ref{assump:disjoint_Delta}), which would yield a Poisson process rather than a Bernoulli process and necessitate the development of a new time-uncertainty model. 
Finally, one can apply the approach to real-world data applications, and practical challenges may call for further development of computational techniques, e.g., the use of a barrier to enforce positive intensity and the tuning of optimization hyperparameters. 

\section*{Acknowledgments}
The work was partially supported by NSF DMS-2134037.
YX was also partially supported by 
NSF DMS-2220495.
XC was also partially supported by 
NSF DMS-2237842
and Simons Foundation (grant ID: 814643).

%\bibliographystyle{plain}
%\bibliography{proposal}

\begin{thebibliography}{10}

\bibitem{ABIM_ref_range}
ABIM.
\newblock American board of internal medicine: Laboratory reference ranges.

\bibitem{bacry2012non}
Emmanuel Bacry, Khalil Dayri, and Jean-Fran{\c{c}}ois Muzy.
\newblock Non-parametric kernel estimation for symmetric {H}awkes processes.
  {A}pplication to high frequency financial data.
\newblock {\em The European Physical Journal B}, 85(5):1--12, 2012.

\bibitem{bacry2013some}
Emmanuel Bacry, Sylvain Delattre, Marc Hoffmann, and Jean-Fran{\c{c}}ois Muzy.
\newblock Some limit theorems for {H}awkes processes and application to
  financial statistics.
\newblock {\em Stochastic Processes and their Applications}, 123(7):2475--2499,
  2013.

\bibitem{daley2003introduction}
Daryl~J. Daley and David Vere-Jones.
\newblock {\em An Introduction to the Theory of Point Processes. Volume I:
  Elementary Theory and Methods}.
\newblock Springer, 2003.

\bibitem{dallachiesa2012uncertain}
Michele Dallachiesa, Besmira Nushi, Katsiaryna Mirylenka, and Themis Palpanas.
\newblock Uncertain time-series similarity: Return to the basics.
\newblock {\em Proceedings of the VLDB Endowment}, 5(11):1662--1673, 2012.

\bibitem{dong2022spatio}
Zheng Dong, Xiuyuan Cheng, and Yao Xie.
\newblock Spatio-temporal point processes with deep non-stationary kernels.
\newblock In {\em Proceedings of the Eleventh International Conference on
  Learning Representations}, 2022.

\bibitem{dubey2021bayesian}
Manisha Dubey, Ragja Palakkadavath, and PK~Srijith.
\newblock Bayesian neural {H}awkes process for event uncertainty prediction.
\newblock {\em International Journal of Data Science and Analytics}, pages
  1--15, 2023.

\bibitem{farajtabar2014shaping}
Mehrdad Farajtabar, Nan Du, Manuel Gomez~Rodriguez, Isabel Valera, Hongyuan
  Zha, and Le~Song.
\newblock Shaping social activity by incentivizing users.
\newblock In {\em Advances in Neural Information Processing Systems},
  volume~27, 2014.

\bibitem{hall2016tracking}
Eric~C Hall and Rebecca~M Willett.
\newblock Tracking dynamic point processes on networks.
\newblock {\em IEEE Transactions on Information Theory}, 62(7):4327--4346,
  2016.

\bibitem{hansen2015lasso}
Niels~Richard Hansen, Patricia Reynaud-Bouret, and Vincent Rivoirard.
\newblock Lasso and probabilistic inequalities for multivariate point
  processes.
\newblock {\em Bernoulli}, 21(1):83--143, 2015.

\bibitem{hawkes1971point}
Alan~G Hawkes.
\newblock Point spectra of some mutually exciting point processes.
\newblock {\em Journal of the Royal Statistical Society: Series B
  (Methodological)}, 33(3):438--443, 1971.

\bibitem{hawkes1971spectra}
Alan~G Hawkes.
\newblock Spectra of some self-exciting and mutually exciting point processes.
\newblock {\em Biometrika}, 58(1):83--90, 1971.

\bibitem{hawkes1974cluster}
Alan~G Hawkes and David Oakes.
\newblock A cluster process representation of a self-exciting process.
\newblock {\em Journal of Applied Probability}, 11(3):493--503, 1974.

\bibitem{he2020point}
Niao He, Zaid Harchaoui, Yichen Wang, and Le~Song.
\newblock Point process estimation with mirror prox algorithms.
\newblock {\em Applied Mathematics \& Optimization}, 82(3):919--947, 2020.

\bibitem{juditsky2020convex}
Anatoli Juditsky, Arkadi Nemirovski, Liyan Xie, and Yao Xie.
\newblock Convex parameter recovery for interacting marked processes.
\newblock {\em IEEE Journal on Selected Areas in Information Theory},
  1(3):799--813, 2020.

\bibitem{juditsky2019signal}
Anatoli~B Juditsky and AS~Nemirovski.
\newblock Signal recovery by stochastic optimization.
\newblock {\em Automation and Remote Control}, 80(10):1878--1893, 2019.

\bibitem{lewis2011nonparametric}
Erik Lewis and George Mohler.
\newblock A nonparametric {EM} algorithm for multiscale {H}awkes processes.
\newblock {\em Journal of Nonparametric Statistics}, 1(1):1--20, 2011.

\bibitem{maia2008forecasting}
Andr{\'e} Luis~S Maia, Francisco de~AT de~Carvalho, and Teresa~B Ludermir.
\newblock Forecasting models for interval-valued time series.
\newblock {\em Neurocomputing}, 71(16-18):3344--3352, 2008.

\bibitem{marsan2008extending}
David Marsan and Olivier Lengline.
\newblock Extending earthquakes' reach through cascading.
\newblock {\em Science}, 319(5866):1076--1079, 2008.

\bibitem{mccullagh2019generalized}
Peter McCullagh.
\newblock {\em Generalized linear models}.
\newblock Routledge, 2019.

\bibitem{mei2017neural}
Hongyuan Mei and Jason~M Eisner.
\newblock The neural {H}awkes process: A neurally self-modulating multivariate
  point process.
\newblock In {\em Advances in Neural Information Processing Systems},
  volume~30, 2017.

\bibitem{ogata1988statistical}
Yosihiko Ogata.
\newblock Statistical models for earthquake occurrences and residual analysis
  for point processes.
\newblock {\em Journal of the American Statistical Association}, 83(401):9--27,
  1988.

\bibitem{ogata1999estimating}
Yosihiko Ogata.
\newblock Estimating the hazard of rupture using uncertain occurrence times of
  paleoearthquakes.
\newblock {\em Journal of Geophysical Research: Solid Earth},
  104(B8):17995--18014, 1999.

\bibitem{ogata1999seismicity}
Yosihiko Ogata.
\newblock Seismicity analysis through point-process modeling: A review.
\newblock {\em Pure \& Applied Geophysics}, 155(2-4):471, 1999.

\bibitem{okawa2021dynamic}
Maya Okawa, Tomoharu Iwata, Yusuke Tanaka, Hiroyuki Toda, Takeshi Kurashima,
  and Hisashi Kashima.
\newblock Dynamic {H}awkes processes for discovering time-evolving communities'
  states behind diffusion processes.
\newblock In {\em Proceedings of the 27th ACM SIGKDD Conference on Knowledge
  Discovery \& Data Mining}, pages 1276--1286, 2021.

\bibitem{osama2019prediction}
Muhammad Osama, Dave Zachariah, and Peter Stoica.
\newblock Prediction of spatial point processes: Regularized method with
  out-of-sample guarantees.
\newblock In {\em Advances in Neural Information Processing Systems},
  volume~32, 2019.

\bibitem{reinhart2018review}
Alex Reinhart.
\newblock A review of self-exciting spatio-temporal point processes and their
  applications.
\newblock {\em Statistical Science}, 33(3):299--318, 2018.

\bibitem{physionetChallenge}
Matthew~A Reyna, Christopher~S Josef, Russell Jeter, Supreeth~P Shashikumar,
  M~Brandon Westover, Shamim Nemati, Gari~D Clifford, and Ashish Sharma.
\newblock Early prediction of sepsis from clinical data: The
  {PhysioNet/Computing in Cardiology Challenge} 2019.
\newblock {\em Critical Care Medicine}, 48(2):210--217, 2020.

\bibitem{reynaud2010adaptive}
Patricia Reynaud-Bouret and Sophie Schbath.
\newblock Adaptive estimation for {H}awkes processes; application to genome
  analysis.
\newblock {\em The Annals of Statistics}, 38(5):2781--2822, 2010.

\bibitem{rizoiu2022interval}
Marian-Andrei Rizoiu, Alexander Soen, Shidi Li, Pio Calderon, Leanne~J Dong,
  Aditya~Krishna Menon, and Lexing Xie.
\newblock Interval-censored hawkes processes.
\newblock {\em Journal of Machine Learning Research}, 23(338):1--84, 2022.

\bibitem{schneider2023estimation}
Philipp~J Schneider and Thomas~A Weber.
\newblock Estimation of self-exciting point processes from time-censored data.
\newblock {\em Physical Review E}, 108(1):015303, 2023.

\bibitem{singer2016third}
Mervyn Singer, Clifford~S Deutschman, Christopher~Warren Seymour, Manu
  Shankar-Hari, Djillali Annane, Michael Bauer, Rinaldo Bellomo, Gordon~R
  Bernard, Jean-Daniel Chiche, Craig~M Coopersmith, et~al.
\newblock The third international consensus definitions for sepsis and septic
  shock (sepsis-3).
\newblock {\em Jama}, 315(8):801--810, 2016.

\bibitem{wang2020uncertainty}
Haoyun Wang, Liyan Xie, Alex Cuozzo, Simon Mak, and Yao Xie.
\newblock Uncertainty quantification for inferring {H}awkes networks.
\newblock In {\em Advances in Neural Information Processing Systems},
  volume~33, pages 7125--7134, 2020.

\bibitem{wei2023causal}
Song Wei and Yao Xie.
\newblock Causal structural learning from time series: A convex optimization
  approach.
\newblock In {\em 2023 57th Asilomar Conference on Signals, Systems, and
  Computers}. IEEE, 2023.

\bibitem{wei2021inferringb}
Song Wei, Yao Xie, Christopher~S Josef, and Rishikesan Kamaleswaran.
\newblock Causal graph recovery for sepsis-associated derangements via
  interpretable hawkes networks.
\newblock In {\em International Conference on Machine Learning}, Workshop on
  Interpretable Machine Learning in Healthcare (IMLH), 2021.

\bibitem{10366499}
Song Wei, Yao Xie, Christopher~S. Josef, and Rishikesan Kamaleswaran.
\newblock Causal graph discovery from self and mutually exciting time series.
\newblock {\em IEEE Journal on Selected Areas in Information Theory},
  4:747--761, 2023.

\bibitem{wei2023granger}
Song Wei, Yao Xie, Christopher~S. Josef, and Rishikesan Kamaleswaran.
\newblock Granger causal chain discovery for sepsis-associated derangements via
  continuous-time {H}awkes processes.
\newblock In {\em Proceedings of the 29th ACM SIGKDD Conference on Knowledge
  Discovery \& Data Mining}, pages 2536--2546, 2023.

\bibitem{yan2015machine}
Junchi Yan, Chao Zhang, Hongyuan Zha, Min Gong, Changhua Sun, Jin Huang,
  Stephen Chu, and Xiaokang Yang.
\newblock On machine learning towards predictive sales pipeline analytics.
\newblock {\em Proceedings of the AAAI Conference on Artificial Intelligence},
  29(1), Feb. 2015.

\bibitem{yang2015predicting}
Jian Yang, Peter~J Weisberg, Thomas~E Dilts, E~Louise Loudermilk, Robert~M
  Scheller, Alison Stanton, and Carl Skinner.
\newblock Predicting wildfire occurrence distribution with spatial point
  process models and its uncertainty assessment: a case study in the {Lake
  Tahoe Basin, USA}.
\newblock {\em International Journal of Wildland Fire}, 24(3):380--390, 2015.

\bibitem{yang2019uncertain}
Xiangfeng Yang and Baoding Liu.
\newblock Uncertain time series analysis with imprecise observations.
\newblock {\em Fuzzy Optimization and Decision Making}, 18(3):263--278, 2019.

\bibitem{zhang2016modeling}
Changyong Zhang.
\newblock Modeling high frequency data using {H}awkes processes with power-law
  kernels.
\newblock {\em Procedia Computer Science}, 80:762--771, 2016.

\bibitem{zhao2015seismic}
Qingyuan Zhao, Murat~A Erdogdu, Hera~Y He, Anand Rajaraman, and Jure Leskovec.
\newblock Seismic: A self-exciting point process model for predicting tweet
  popularity.
\newblock In {\em Proceedings of the 21th ACM SIGKDD International Conference
  on Knowledge Discovery \& Data Mining}, pages 1513--1522, 2015.

\bibitem{zhou2013learninga}
Ke~Zhou, Hongyuan Zha, and Le~Song.
\newblock Learning social infectivity in sparse low-rank networks using
  multi-dimensional {H}awkes processes.
\newblock In {\em Proceedings of the Sixteenth International Conference on
  Artificial Intelligence and Statistics}, volume~31, pages 641--649, 2013.

\bibitem{zhou2013learningb}
Ke~Zhou, Hongyuan Zha, and Le~Song.
\newblock Learning triggering kernels for multi-dimensional {H}awkes processes.
\newblock In {\em Proceedings of the 30th International Conference on Machine
  Learning}, volume~28, pages 1301--1309, 2013.

\bibitem{zhuang2011next}
Jiancang Zhuang.
\newblock Next-day earthquake forecasts for the {Japan} region generated by the
  {ETAS} model.
\newblock {\em Earth, Planets and Space}, 63(3):207--216, 2011.

\bibitem{zhuang2002stochastic}
Jiancang Zhuang, Yosihiko Ogata, and David Vere-Jones.
\newblock Stochastic declustering of space-time earthquake occurrences.
\newblock {\em Journal of the American Statistical Association},
  97(458):369--380, 2002.

\end{thebibliography}

\appendix

\setcounter{figure}{0} \renewcommand{\thefigure}{A.\arabic{figure}}
\setcounter{table}{0} \renewcommand{\thetable}{A.\arabic{table}}

\setcounter{equation}{0} \renewcommand{\theequation}{A.\arabic{equation}}
\setcounter{remark}{0} \renewcommand{\theremark}{A.\arabic{remark}}

\clearpage

\section{Proofs in Section \ref{sec:model-time-only}}\label{app:proofs-sec:3}

\begin{proof}[Proof of Lemma \ref{lemma:logL-time-only}]
By Assumption (A1), we know that $\Delta_k^r > \Delta_k^l \ge \Delta_{k-1}^r$, and then we have
\begin{align}
\Pr[ \Delta_k | \tilde{\calF}_{k-1} ] 
&= S_k (\Delta_k^l) - S_k( \Delta_k^r ) \nonumber  \\
&= \exp \left\{ - \int_{\Delta_{k-1}^r}^{\Delta_k^l} h_k(s) ds \right\}
 - \exp \left\{ - \int_{\Delta_{k-1}^r}^{\Delta_k^r} h_k(s) ds \right\} \nonumber  \\
& =  \exp\left\{ - \int_{\Delta_{k-1}^r}^{\Delta_k^r} h_k(s) ds \right\} 
  \left( \exp\left\{ \int_{\Delta_{k}^l}^{\Delta_k^r} h_k(s) ds\right\}  - 1 \right). \label{eq:Pr-Deltak-time-only}
\end{align}

This leads to the expression of the likelihood $\calL$ as follows, denoting $\Delta_{0}^r = 0$,
%\tn{in (19), change $\ldots$ to $\cdots$?}
\begin{align}
\calL 
& = \Pr[ t_1\in\Delta_1, \ldots, t_n \in \Delta_n, t_{n+1} > T ] \\
% & = \Pr[t_1 \in \Delta_1 ]     \Pr[t_2\in \Delta_2| t_1 \in \Delta_1] \cdots \Pr[t_n \in \Delta_{n}| t_1\in \Delta_1, \ldots, t_{n-1} \in \Delta_{n-1}  ]
%     \nonumber \\
%&~~~~~~    
%     \Pr[ t_{n+1} > T | t_1 \in \Delta_1, \ldots, t_n \in \Delta_{n}]      \\
& = \Pr[ \Delta_1  ]   
    \Pr[ \Delta_2 | \tilde{\calF}_{1}  ]
    \cdots 
    \Pr[ \Delta_n | \tilde{\calF}_{n-1}  ]
    S_{n+1}(T) \nonumber \\
& =
 \left(\prod_{k=1}^n
  \exp\{ - \int_{\Delta_{k-1}^r}^{\Delta_k^r} h_k(s) ds \} 
  \left( \exp\{ \int_{\Delta_{k}^l}^{\Delta_{k}^r} h_k(s) ds\}  - 1 \right) \right)     
  \exp\{ - \int_{\Delta_{n}^r}^T h_{n+1}(s) ds \}.
    \label{eq:likelihood-1}
\end{align}
Recall the definition of $\lambda(t)$ as in \eqref{eq:def-lambdat-time-only},
then \eqref{eq:likelihood-1} can be written by integrals of $\lambda(t)$, which leads to the expression of  $\calL$ as
\begin{align*}
\calL
& =  \left(\prod_{k=1}^n
  \exp\{ - \int_{\Delta_{k-1}^r}^{\Delta_k^r}  \lambda(u) du \} 
  \left( \exp\{ \int_{\Delta_{k}^l}^{\Delta_{k}^r}  \lambda(u) du\}  - 1 \right) \right)     
  \exp\{ - \int_{\Delta_{n}^r}^T \lambda(u) du \} \nonumber \\
& = \left(  \prod_{k=1}^n
  \left( \exp\{ \int_{\Delta_{k}^l}^{\Delta_{k}^r}  \lambda(u) du\}  - 1 \right) \right) 
  \exp\{ - \int_{0}^{T}  \lambda(u) du \},
\end{align*}
Taking the log proves \eqref{eq:l-lambda-cont-time-uncert}.
%\tn{Minor: maybe change $\lambda(u)du$ to $\lambda(s)ds$ given $u$ is node later}
\end{proof}

\begin{proof}[Proof of Lemma \ref{lemma:Lambdaj-Kij-lemma}]
By \eqref{eq:lambda-phi-cont-time-uncert} and under Assumption (A2), we have
\begin{equation*}
    \lambda(t) = \mu + \sum_{k, t_k^r < j } \frac{1}{ h (t_k^r - t_k^l+1)} \sum_{i=t_k^l}^{t_k^r} \int_{I_{i}} k( t', t) dt',
    \quad \forall t \in I_j.
\end{equation*} 
By the definition of $\Lambda_j$ in \eqref{eq:def-Lambdaj-discrete-time}, 
\begin{equation*}
    \Lambda_j 
    = \mu + \sum_{k, \, t_k^r < j } \frac{1}{ (t_k^r - t_k^l+1)}  \sum_{i=t_k^l}^{t_k^r}  \frac{1}{ h^2 }\int_{I_{i}} \int_{I_j} k( t', t) dt' dt.
\end{equation*}
The expression \eqref{eq:Lambdaj-Kij-lemma} then follows by the definition of $K_{i,j}$ in \eqref{eq:def-Kij-discrete-time}.
\end{proof}

\begin{proof}[Proof of Lemma \ref{lemma:time-only-bernoulli-glm}]
Under (A3),  the definition of $\Lambda_t$ in  \eqref{eq:model-base-uncert-bernoulli} shows that $\Lambda_t \in \sigma\{ y_i, i\le t-1 \}$.
The expression of log-likelihood as in \eqref{eq:model-base-uncert-bernoulli} gives that 
\[
\Pr[ y_1, \cdots, y_N] =  e^{\ell} 
=  \prod_{j=1}^N  
\left( 
(1-y_j) e^{- h  \Lambda_j } 
+ y_j   ( 1 - e^{ - h  \Lambda_j} ) 
\right).
\]
This gives that, for any $t \in [ N]$,
\[
\Pr[ y_1, \cdots, y_t]
 =   \prod_{j=1, \, y_j = 1}^t  ( 1 - e^{ - h  \Lambda_j} ) 
   \prod_{j=1, \, y_j = 0}^t e^{- h  \Lambda_j }. 
\]
Thus, by definition, $\Pr [y_t = 1 |  y_i, i\le t-1 ] = 1 - e^{ - h  \Lambda_t}$. 
\end{proof}

\section{Proofs in Section \ref{sec:inference-GD-VI}}\label{app:proofs-sec:4}

\subsection{Proofs in Section \ref{subsec:VI-recovery-timeonly}}
\begin{proof}[Proof of Lemma \ref{lemma:c1-Lambda-on-data-implication}]
Consider when $t= -N'+1$, since there is no history, $\Lambda_{-N'+1}^* = \mu$, and then  \eqref{eq:assump-Lamdbat-star} implies that $ b \le \mu \le B$.

The claim that any realization of the binary sequence $\{y_t, -N' <  t \le N\}$ happens w.p. $> 0$ is equivalent to that 
any realization of the binary sequence
$\{y_j, -N'+1 \le j \le t\}$, for any $ -N' < t \le N$, happens w.p. $> 0$.
We prove this claim by induction. 
First, for $t= -N'+1$, $\Pr [ y_{-N'+1}  =0] = e^{-h \mu} \in (e^{-hB}, e^{-hb})$. Thus both $\Pr [ y_{-N'+1}  =0]$ and $\Pr [ y_{-N'+1}  =1] >0$. 
Suppose the claim holds for $t-1$, 
\begin{align*}
\Pr [ (y_{ -N'+1 }, \cdots, y_{t}) ]
& = \Pr [ (y_{ -N'+1 }, \cdots, y_{t-1}) ] \Pr [ y_t | y_i, \,  i \le t-1 ] \\
& = \begin{cases}
\Pr [ (y_{ -N'+1 }, \cdots, y_{t-1}) ] e^{-h \Lambda_t^*}, & y_t =0 \\
\Pr [ (y_{ -N'+1 }, \cdots, y_{t-1}) ] (1-e^{-h \Lambda_t^*}), &  y_t =1,
\end{cases}
\end{align*}
where the second inequality is by Lemma \ref{lemma:time-only-bernoulli-glm}.
By induction hypothesis, $\Pr [ (y_{ -N'+1 }, \cdots, y_{t-1}) ]  > 0 $, 
then \eqref{eq:assump-Lamdbat-star} holds a.s. means that 
$ \Lambda_t^*$ as a function evaluated at the outcome $(y_{ -N'+1 }, \cdots, y_{t-1})$ needs to satisfy that $ b \le \Lambda_t^* \le B$. This means that both $e^{-h \Lambda_t^*}$ and $1-e^{-h \Lambda_t^*} > 0$. 
As a result, $\Pr [ (y_{ -N'+1 }, \cdots, y_{t}) ] > 0$ in both cases of $y_t = 0$ or 1. 
This proves the claim for $t$ based on the induction hypothesis on $t-1$.
We thus have proved that the claim holds for any $t$.

Finally, because  the conditional intensity $ \Lambda_t^*$ is a random variable determined by the outcome $(y_{ -N'+1 }, \cdots, y_{t-1})$, any any realization of this binary sequence up to time $t-1$ happens w.p. $>0$, then $ b \le \Lambda_t^* \le B$ a.s. means that for any realization of $(y_{ -N'+1 }, \cdots, y_{t-1})$ this inequality holds. 
\end{proof}

\begin{proof}[Proof of Lemma \ref{lemma:eta-eta-positivity}]
For any $v \in \R^{N N'}$, we also write $v$ in the pattern of matrix $K$ indexed by $v_{i,t}$, 
$ -N' < i \le N$, $1 \le t \le N$, cf. \eqref{eq:def-theta-timeonly-Omega}, then
\begin{equation}\label{eq:eta-eta-pos-proof-1}
\left\langle v,  \left[ \E  \sum_{t= 1}^N \eta_t \otimes \eta_t  \right] v \right\rangle
= \E \sum_{t=1}^N \langle \eta_t, v\rangle^2
= \E \sum_{t=1}^N \left(  \sum_{ i = t- N' }^{t-1}  y_i  v_{i, t} \right)^2,
\end{equation}
where the second equality is by  \eqref{eq:def-innerprod-etat-K}.
For each $t$, we lower bound $\E \left(  \sum_{ i = t- N' }^{t-1}  y_i  v_{i, t} \right)^2$ by enumerating over the realizations of
$(y_{t-N'}, \cdots, y_{t-1} )$, a binary sequence of length $N'$, that equals 
$(0, \cdots, 1, \cdots, 0)$, namely only $y_i = 1$ for some $i$ and the other $y_j$'s are all zero.
By the derivation in the proof of Lemma \ref{lemma:time-only-bernoulli-glm}, we know that
%\tn{maybe replace $i$ with another notation in the condition below}
\begin{align*}
 \Pr[ (y_{t-N'}, \cdots, y_{t-1} ) = (0, \cdots, \underset{i-th}{1}, \cdots, 0) | y_{i}, i< t-N']
=   ( 1 - e^{ - h  \Lambda_i^*} ) 
   \prod_{j=t-N', \, j \neq i}^{t-1} e^{- h  \Lambda_j^* },
\end{align*}
and then, applying Lemma \ref{lemma:c1-Lambda-on-data-implication}, we have
\begin{equation}\label{eq:Pr-ysubseq-lower-bound-proof1}
 \Pr[ (y_{t-N'}, \cdots, y_{t-1} ) = (0, \cdots, \underset{i-th}{1}, \cdots, 0)] 
 \ge e^{-hB(N'-1)} (1- e^{-hb}) = \rho.
\end{equation}
As a result, for each $t$,
\begin{align*}
\E \left(  \sum_{ i = t- N' }^{t-1}  y_i  v_{i, t} \right)^2
& \ge \sum_{i = t-N'}^{t-1} v_{i,t}^2 \Pr[  (y_{t-N'}, \cdots, y_{t-1} ) = (0, \cdots, \underset{i-th}{1}, \cdots, 0) ] \\
& \ge \rho \sum_{i = t-N'}^{t-1} v_{i,t}^2,
\end{align*}
and then 
\[
 \E \sum_{t=1}^N \left(  \sum_{ i = t- N' }^{t-1}  y_i  v_{i, t} \right)^2 
 \ge   \rho \sum_{t=1}^N  \sum_{i = t-N'}^{t-1} v_{i,t}^2 
 = \rho  \|v\|_2^2.
 \]
 This proves the lemma by putting back to \eqref{eq:eta-eta-pos-proof-1}.
\end{proof}

\begin{proof}[Proof of Lemma \ref{lemma:G-kappa-monotone}]
By definition of $G(z)$ in \eqref{eq:def-Gz-VI}, for any $z, \tilde z \in \Theta_K$,
\begin{align} 
&  \langle G( z) - G(\tilde z), z - \tilde z \rangle
 = \E \sum_{t=1}^N \left( \phi( h \Lambda_t( z)) - \phi( h \Lambda_t(\tilde z)) \right) \langle \eta_t , z - \tilde z \rangle 
	 \nonumber \\
&~~~
=  \E \sum_{t=1}^N \left( \phi( h \Lambda_t( z)) - \phi( h \Lambda_t(\tilde z)) \right)  (\Lambda_t(z) - \Lambda_t(\tilde z)),
\label{eq:proof-G-mono-1} 
\end{align}
where in the second equality we used that $\Lambda_t ( z ) = \mu + \langle \eta_t , z \rangle$.
Observe an elementary fact about the function $\phi(x) = 1-e^{-x}$: for any $\beta >0 $,
\begin{equation}
(\phi(x) - \phi(x'))(x-x') \ge e^{-\beta} (x-x')^2, \quad \forall x, x' \in (0, \beta).
\end{equation}
The claim can be verified by the Mean Value Theorem.
Using this fact and that $\Lambda_t(z), \Lambda_t(\tilde z)  \in [b,B]$ a.s. under Assumption \ref{assump:c1-Lambda-on-data}(ii), \eqref{eq:proof-G-mono-1} continues as
\[
 \langle G(z) - G(\tilde z), z - \tilde z \rangle
 \ge \E \sum_{t=1}^N e^{-hB} h \langle  \eta_t , z - \tilde z \rangle^2
 = e^{-hB} h \left\langle  z - \tilde z,  \left[ \E \sum_{t=1}^N   \eta_t  \otimes \eta_t \right] (z - \tilde z) \right\rangle.
 \]
By Lemma \ref{lemma:eta-eta-positivity} which holds under Assumption \ref{assump:c1-Lambda-on-data}(i), we have
\[
 \langle G(z) - G(\tilde z), z - \tilde z \rangle
 \ge e^{-hB} h \rho \| z - \tilde z \|^2.
\]
It remains to show that $ e^{-hB} h \rho  \ge \kappa$ to finish the proof of the lemma.  By definition, 
\[
e^{-hB} h \rho 
=  h  (1-e^{-hb}) e^{-hB N'}
\ge h^2 b e^{-hb} e^{-hB N'} = \kappa,
\]
by that $hN' = \tau_{\rm max}$, 
where in the inequality we used the fact that $ x e^{-x} \le 1-e^{-x}$, $\forall x \ge 0$.
\end{proof}

\vspace{5pt}
\noindent
$\bullet$ \textbf{ Proof of Theorem \ref{thm:VI-convergence} }
\vspace{5pt}

The proof of Theorem \ref{thm:VI-convergence}  uses the following technical lemma on the boundedness of the (per-trajectory) VI vector field.
\begin{lemma}\label{lemma:boundedness-VI}
Under Assumption \ref{assump:c1-Lambda-on-data}, 
let  $C$ be as defined in Theorem \ref{thm:VI-convergence},
\begin{itemize}
\item[(i)] Define $G_1(z) :=  \E \sum_{t=1}^N \phi( h \Lambda_t(z)) \eta_t $, 
then $\forall z \in \Theta_K$, $\| G_1(z) \|_2 \le C$.
\item[(ii)] $\forall z \in \Theta_K$, 
$\E \| \hat G^{(m)}( z )  \|_2^2 =  \E \| \sum_{t=1}^N \left(\phi( h \Lambda_t (z)) - y_t \right) \eta_t \|_2^2 
\le 4 C^2$.
\end{itemize}
\end{lemma}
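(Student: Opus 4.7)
The plan is to exploit a simple structural observation from the proof of Lemma~\ref{lemma:eta-eta-positivity}: the vectors $\{\eta_t^{(m)}\}_{t=1}^N$ have pairwise disjoint supports in $\R^{NN'}$, since $\eta_t$ is supported on the indices $(i,t)$ for $i \in \{t-N',\dots,t-1\}$ (cf.~\eqref{eq:def-innerprod-etat-K}). This gives the identity
\begin{equation*}
\Bigl\| \sum_{t=1}^N a_t \eta_t \Bigr\|_2^2 \;=\; \sum_{t=1}^N a_t^2 \|\eta_t\|_2^2
\end{equation*}
for any scalars $a_1,\dots,a_N$. Because the entries of $\eta_t$ are $0/1$-valued, $\|\eta_t\|_2^2 = \sum_{i=t-N'}^{t-1} y_i \le N'$ a.s. On $\Theta_K$, Assumption~\ref{assump:c1-Lambda-on-data}(ii) yields $\Lambda_t(z) \le B$, so $\phi(h\Lambda_t(z)) \le \min(1, hB)$, and by Lemma~\ref{lemma:c1-Lambda-on-data-implication} the same bound holds for $\phi(h\Lambda_t^*)$. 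Finally, $(N+N') h = T + \tau_{\rm max}$, so $C = (N+N') \min(1, hB)$.

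For part (i), the argument is essentially a dimension count: each entry $(G_1(z))_{(i,t)} = \E[\phi(h\Lambda_t(z))\, y_i]$ is nonnegative and bounded by $\min(1, hB)$, and $G_1(z)$ has $NN'$ entries, so $\|G_1(z)\|_2 \le \sqrt{NN'}\, \min(1, hB)$. AM-GM gives $\sqrt{NN'} \le (N+N')/2$, and hence $\|G_1(z)\|_2 \le C/2 \le C$.

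For part (ii), I will apply the disjoint-support identity to obtain
\begin{equation*}
\|\hat G^{(m)}(z)\|_2^2 \;=\; \sum_{t=1}^N \bigl(\phi(h\Lambda_t^{(m)}(z)) - y_t^{(m)}\bigr)^2 \|\eta_t^{(m)}\|_2^2 .
\end{equation*}
Writing $\phi_t := \phi(h\Lambda_t^{(m)}(z))$ and using $\phi_t, y_t \ge 0$ with $y_t^2 = y_t$, the elementary inequality $(\phi_t - y_t)^2 \le \phi_t^2 + y_t$ splits the bound into two pieces. The deterministic piece $\sum_t \phi_t^2 \|\eta_t\|_2^2 \le NN' \min(1, hB)^2$ follows from $\phi_t \le \min(1, hB)$ and $\|\eta_t\|_2^2 \le N'$. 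For the stochastic piece, $\E[y_t \|\eta_t\|_2^2] = \sum_{i=t-N'}^{t-1} \E[y_t y_i]$, and conditioning on $\{y_j : j \le t-1\}$ and invoking Lemma~\ref{lemma:time-only-bernoulli-glm} gives $\E[y_t y_i] = \E[y_i\,\phi(h\Lambda_t^*)] \le \min(1, hB)\,\E[y_i] \le \min(1, hB)^2$, where the last step uses the same identity for $\E[y_i]$. Summing produces the same $NN'\min(1,hB)^2$ bound, so
\begin{equation*}
\E \|\hat G^{(m)}(z)\|_2^2 \;\le\; 2\,NN'\,\min(1, hB)^2 \;\le\; \tfrac{1}{2}(N+N')^2 \min(1, hB)^2 \;=\; \tfrac{1}{2}C^2 \;\le\; 4 C^2,
\end{equation*}
using AM-GM $4NN' \le (N+N')^2$ once more.

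The one delicate point is the cross-term bound $\E[y_t y_i] \le \min(1, hB)^2$, which crucially invokes the general linear model of Lemma~\ref{lemma:time-only-bernoulli-glm} together with the tower property to decouple $y_t$ from the earlier $y_i$; the remainder is bookkeeping based on the disjoint-support identity, the pointwise bounds on $\phi_t$ and $\|\eta_t\|_2^2$, and AM-GM. Note that the bound is in fact tighter than $4C^2$, but the stated constant is convenient for the downstream recovery argument.
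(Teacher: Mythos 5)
Your proof is correct, and it rests on the same two mechanisms as the paper's: the pairwise-disjoint supports of the $\eta_t$ (so all cross terms vanish) and the bound $\phi(h\Lambda_t)\le 1-e^{-hB}\le (hB)\wedge 1$ on $\Theta_K$ (and for $\Lambda_t^*$ via Assumption \ref{assump:c1-Lambda-on-data}(i)), combined with Lemma \ref{lemma:time-only-bernoulli-glm} and the tower property to control the stochastic $y_t$ term. The execution differs in useful ways. For (i), the paper applies Jensen's inequality and then the probabilistic estimate $\E\sum_{t=1}^N\sum_{i=t-N'}^{t-1} y_i \le NN'\,((hB)\wedge 1)$, which it reuses in (ii); you instead bound each of the $NN'$ entries $\E[\phi(h\Lambda_t(z))\,y_i]$ deterministically by $(hB)\wedge 1$ and invoke AM--GM, avoiding the GLM altogether in that part. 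For (ii), the paper splits with $(\phi-y)^2\le 2(\phi^2+y)$ and bounds both pieces by $C^2$ via the same expectation estimate; you use the sharper $(\phi-y)^2\le \phi^2+y$ (valid since $\phi, y\ge 0$ and $y^2=y$), bound the $\phi^2$ piece deterministically using $\|\eta_t\|_2^2\le N'$, and treat the $y_t$ piece through $\E[y_t y_i]\le ((hB)\wedge 1)^2$, which is the same GLM/tower argument the paper uses. Your route yields slightly sharper constants ($C/2$ and $\tfrac12 C^2$ instead of $C$ and $4C^2$), while the paper's intermediate bounds retain an extra factor of $(hB)\wedge 1$ that would matter in the continuity limit but is discarded in the stated conclusion, so either version suffices for Theorem \ref{thm:VI-convergence}. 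One implicit point, shared with the paper: the bound $\E[y_i]\le (hB)\wedge 1$ is also applied for indices $i\le 0$, which is legitimate because Assumption \ref{assump:c1-Lambda-on-data}(i) controls $\Lambda_t^*$ down to $t=-N'+1$.
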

\begin{proof}[Proof of Lemma \ref{lemma:boundedness-VI}]
We first prove (i), which provides useful intermediate estimates to prove (ii).
The norm $\| \cdot \|$ always denotes 2-norm.

\vspace{5pt}
Proof of (i): $\| G_1(z) \|^2 = \| \E \sum_{t=1}^N \phi( h \Lambda_t(z)) \eta_t \|^2 \le \E \| \sum_{t=1}^N \phi( h \Lambda_t(z)) \eta_t \|^2 $ by Jensen's inequality.
By definition of $\eta_t$ as a vector in $\R^{NN'}$, we know that 
$\langle \eta_t, \eta_{t'} \rangle =0$ when $t \neq t'$.
Meanwhile,
$\|\eta_t\|^2 = \sum_{i = t-N'}^{t-1} y_i$.
Thus,
\[
\E \| \sum_{t=1}^N \phi( h \Lambda_t(z)) \eta_t \|^2 
= \E \sum_{t=1}^N \phi( h \Lambda_t(z))^2  \sum_{i = t-N'}^{t-1} y_i
\le ( (hB) \wedge 1)^2 \E \sum_{t=1}^N  \sum_{i = t-N'}^{t-1} y_i, 
\]
where the 2nd inequality is by that, under  Assumption \ref{assump:c1-Lambda-on-data}(ii),
\[
0 \le \phi( h \Lambda_t(z)) = 1-e^{- h \Lambda_t(z)} 
\le  1- e^{-hB} \le  (hB) \wedge 1 \quad a.s.,
\]
using the elementary relation that $1-e^{-x} \le x$, $\forall x \ge 0$.
We claim that 
\begin{equation}\label{eq:claim-bound-Eyi-proof(i)}
\E \sum_{t=1}^N  \sum_{i = t-N'}^{t-1} y_i
\le N N'   ( (hB) \wedge 1).
\end{equation}
If true, then putting together, we have
\begin{align*}
\| G_1(z) \|^2
& \le  \E \| \sum_{t=1}^N \phi( h \Lambda_t(z)) \eta_t \|^2
\le ( (hB) \wedge 1)^2 \E \sum_{t=1}^N  \sum_{i = t-N'}^{t-1} y_i \\
& \le  ( (hB) \wedge 1)^3 N N'
\le ( (hB) \wedge 1)^2 (N+N')^2,
\end{align*}
namely $\| G_1(z) \| \le  ( (hB) \wedge 1) (N + N')$ which equals $C$ since 
$ hB(N + N')  = B (T+ \tau_{\rm max})$.

\vspace{5pt}
Proof of Claim \eqref{eq:claim-bound-Eyi-proof(i)}:
Observe that for any $t$, we always have
\begin{equation}\label{eq:Eyt-upperbound-as-proof1}
\E [ y_t | y_i, i <t ] = 1-e^{ - h \Lambda_t^*}
\le   1- e^{- hB}  \le (hB) \wedge 1
\end{equation} 
 by Lemma \ref{lemma:time-only-bernoulli-glm} and Lemma \ref{lemma:c1-Lambda-on-data-implication},
 %Assumption \ref{assump:c1-Lambda-on-data}(i), 
 and then
\begin{align*}
\E y_t 
 = \E \left(  \E[ y_t | y_{i}, i < t]  \right)
%= \E( 1-e^{ - h \Lambda_t(z^*)} ) 
% \le \E ( 1- e^{- hB} ) 	
 \le  (hB) \wedge 1.
\end{align*}
By linearity of expectation, we have 
\[
\E \sum_{t=1}^N  \sum_{i = t-N'}^{t-1} y_i
\le \sum_{t=1}^N  \sum_{i = t-N'}^{t-1}    (hB) \wedge 1,
\]
which proves the claim.

\vspace{5pt}
Proof of (ii): 
Again using that $\langle \eta_t, \eta_{t'} \rangle =0$ when $t \neq t'$, 
and $\|\eta_t\|^2 = \sum_{i = t-N'}^{t-1} y_i$,
we have
%\tn{in the inequality in the second row below, the factor ``2" might not be needed}
\begin{align}
&  \E \| \sum_{t=1}^N \left(\phi( h \Lambda_t (z)) - y_t \right) \eta_t \|^2 
=   \E  \sum_{t=1}^N  \left(\phi( h \Lambda_t (z)) - y_t  \right)^2  \sum_{i = t-N'}^{t-1} y_i  \nonumber \\
& ~~~
\le  \E  \sum_{t=1}^N  2 \left( \phi( h \Lambda_t (z))^2 + y_t  \right)  \sum_{i = t-N'}^{t-1} y_i 
	\quad \text{(by that $y_t^2 = y_t$)}  \nonumber \\
& ~~~
= 2 \left(  \E  \sum_{t=1}^N  \phi( h \Lambda_t (z))^2  \sum_{i = t-N'}^{t-1} y_i 
  + 	       \E  \sum_{t=1}^N   y_t  \sum_{i = t-N'}^{t-1} y_i   \right).
   \label{eq:to-bound-proof(ii)-1}
\end{align}
In the proof of (i), we have already shown that $\E  \sum_{t=1}^N  \phi( h \Lambda_t (z))^2  \sum_{i = t-N'}^{t-1} y_i  \le C^2$.
To handle the second term in the bracket,
by \eqref{eq:Eyt-upperbound-as-proof1} we have
\begin{align*}
 \E  \sum_{t=1}^N   y_t  \sum_{i = t-N'}^{t-1} y_i 
&  =  \E  \sum_{t=1}^N   \E [ y_t | y_i, i <t ]   \sum_{i = t-N'}^{t-1} y_i   \\
& \le ( (hB) \wedge 1)  \E  \sum_{t=1}^N   \sum_{i = t-N'}^{t-1} y_i  \\
& \le ( (hB) \wedge 1)^2 N N', 
	\quad \text{(by Claim \eqref{eq:claim-bound-Eyi-proof(i)})} 
\end{align*}
which again is upper bounded by $C^2$ as has been shown in the proof of (i). 
Putting together, we have that \eqref{eq:to-bound-proof(ii)-1} is upper bounded by $4 C^2$.
\end{proof}

\begin{proof}[Proof of Theorem \ref{thm:VI-convergence}]
Due to the projection operator in \eqref{eq:VI-SGD-scheme}, we have that $z_k \in \Theta_K$ for all $k$.
Let $\| \cdot \|$ always denote the 2-norm.
By definition of the projection operator \eqref{eq:def-proj-operator}, the convexity of $\Theta_K$ and that $z^* \in \Theta_K$,
we have
\[
\| z_k - z^* \|^2 
\le \| z_{k-1} - \gamma_k \hat G^{(k)}(z_{k-1}) - z^* \|^2,
\]
and by expanding the r.h.s. we have
\[
\| z_k - z^* \|^2 
\le \| z_{k-1} -z^*\|^2 
	- 2  \gamma_k \langle \hat G^{(k)}(z_{k-1}), z_{k-1} - z^* \rangle
	+ \gamma_k^2 \|\hat G^{(k)}(z_{k-1}) \|^2.
\]
Taking expectation on both sides, and defining $d_k := \frac{1}{2} \E \| z_k - z^*\|^2$, we have
\begin{equation}\label{eq:dk-VI-converge-proof-1}
d_k \le d_{k-1} - \gamma_k \E \langle \hat G^{(k)}(z_{k-1}), z_{k-1} - z^* \rangle 
	+ \frac{1}{2}\gamma_k^2 \E \|\hat G^{(k)}(z_{k-1}) \|^2.
\end{equation}
The third term on the r.h.s. is upper bounded by $2 \gamma_k^2 C^2$ by Lemma \ref{lemma:boundedness-VI}(ii).

To handle the second term on the r.h.s. of \eqref{eq:dk-VI-converge-proof-1}, first note that $z_{k-1}$ is in the $\sigma$-algebra generated by up to ($k-1$)-th trajectories, 
and then
\begin{align*}
\E \langle \hat G^{(k)}(z_{k-1}), z_{k-1} - z^* \rangle
& =  \E \langle  \E [ \hat G^{(k)}(z_{k-1}) | y^{(m)}, m=1, \cdots k-1 ] , z_{k-1} - z^* \rangle \\
& =  \E  \langle G(z_{k-1}), z_{k-1} - z^* \rangle,
\end{align*}
where we used that $\E [ \hat G^{(k)}(z_{k-1}) | y^{(m)}, m=1, \cdots k-1 ]  = G(z_{k-1})$ because $y^{(k)}$ is independent from the previous $k-1$ trajectories. 
Next, recall \eqref{eq:G(zstar)=0}, we have
\begin{align*}
 \langle G(z_{k-1}), z_{k-1} - z^* \rangle 
 =  \langle G(z_{k-1}) - G(z^*), z_{k-1} - z^* \rangle  
 \ge \kappa \| z_{k-1} - z^*\|^2,
\end{align*}
where the last inequality is by Lemma \ref{lemma:G-kappa-monotone} and that both $z_{k-1}$ and $z^*$ are in $\Theta_K$. Taking expectations on both sides gives
\[
\E \langle G(z_{k-1}), z_{k-1} - z^* \rangle  \ge 2 \kappa d_{k-1}.
\]

Putting back to \eqref{eq:dk-VI-converge-proof-1}, we have
\begin{equation}\label{eq:bound-dk-VI-converge-proof1}
d_k \le ( 1- 2 \kappa \gamma_k )  d_{k-1}
	+ 2 \gamma_k^2 C^2,
\end{equation}
where, by our choice of $\gamma_k$, $\kappa \gamma_k = 1/(k+1) \le 1/2$ for all $k\ge 1$, 
and thus, the factor $1- 2 \kappa \gamma_k$ is non-negative. 
We now claim that  for all $k$,
\begin{equation}\label{eq:claim-VI-converge-proof}
d_k \le \frac{2 C^2}{ \kappa^2} \frac{1}{k+1},
\end{equation}
which, if true, proves the theorem. The proof of \eqref{eq:claim-VI-converge-proof} follows the argument to prove the same claim in the proof of \cite[Proposition 2]{juditsky2019signal}, and we include the details for completeness.

We prove  \eqref{eq:claim-VI-converge-proof} by induction. 
When $k=0$,  we have 
$d_0 \le D^2/2$ where $D > 0$ is the diameter of the domain $\Theta_K$.
Suppose $D = \|z_+ - z_-\|$, where $z_+, z_- \in \Theta_K$.
We then have
\begin{align*}
\kappa D^2 
& = \kappa \| z_+ - z_-\|^2
\le \langle G(z_+) - G(z_-), z_+ - z_- \rangle  
	\quad \text{(by Lemma \ref{lemma:G-kappa-monotone})} \\
& = \langle \E \sum_{t=1}^N (\phi( h \Lambda_t(z_+ ) ) - \phi(h \Lambda_t( z_- )  )) \eta_t, z_+ - z_- \rangle \\
& =  \langle G_1(z_+) - G_1(z_-), z_+ - z_- \rangle  
	\quad \text{(by definition of $G_1$ in Lemma \ref{lemma:boundedness-VI}(i))} \\
& \le ( \| G_1(z_+)\| + \| G_1(z_-)\| )  \|  z_+ - z_-  \| \\
& \le 2C D. 
	\quad \text{(by Lemma \ref{lemma:boundedness-VI}(i))}
\end{align*}
This gives that $D \le 2C/\kappa$, and then we have 
\[
d_0 \le \frac{D^2}{2} \le  \frac{2 C^2}{ \kappa^2} =: S,
\]
 which proves  \eqref{eq:claim-VI-converge-proof}  when $k=0$.
 
 Suppose the claim holds up to $k-1$; we prove it also holds for $k$. 
 Because $k \ge 1$, we have $1- 2 \kappa \gamma_k \ge 0$,  
 then, by \eqref{eq:bound-dk-VI-converge-proof1},
 and  that $d_{k-1} \le S/k$ (the induction hypothesis),
\begin{align*}
d_k 
& \le ( 1- 2 \kappa \gamma_k )  \frac{S}{k}
	+ 2 \gamma_k^2 C^2 \\
& \le \frac{S}{k} (1- \frac{2}{k+1}) + \frac{S}{(k+1)^2} 
	\quad \text{(by definitions of $\gamma_k$ and $S$)}\\
& = \frac{S}{k+1}( \frac{k-1}{k} + \frac{1}{k+1} ) \le \frac{S}{k+1}.
\end{align*}
This proves the claim  \eqref{eq:claim-VI-converge-proof} by induction. 
\end{proof}

\subsection{Proofs in Section \ref{subsec:GD-shceme-timeonly}}
\begin{proof}[Proof of Proposition \ref{prop:GD-strongly-convex}]
%\begin{equation*}
%- \partial^2_{z z} L(z) = \frac{1}{M} \sum_{m=1}^M
%\sum_{t=1}^N \frac{h^2}{ \phi( h  \Lambda_t^{(m)}( z) )^2 } e^{-h    \Lambda_t^{(m)}( z) } y_t^{(m)} \eta_t^{(m)} \otimes \eta_t^{(m)}.
%\end{equation*}
Direct computation gives that
\begin{equation} \label{eq:expression-E-Lzz}
\partial^2_{z z} \E (-  L(z) ) =
\E \sum_{t=1}^N \frac{h^2}{ \phi( h  \Lambda_t( z) )^2 } e^{-h    \Lambda_t( z) } y_t \eta_t \otimes \eta_t.
\end{equation}
For any $z \in \Theta_K$, by Assumption \ref{assump:c1-Lambda-on-data}(ii),
$b \le  \Lambda_t( z) \le B$ a.s.,
and then $
\frac{e^{- h \Lambda_t (z)}}{(1- e^{- h \Lambda_t (z)})^2}  \ge \frac{e^{-hB }}{(1- e^{-hB })^2}$ a.s.
This gives that 
\[
\partial^2_{z z} \E (-  L(z) ) 
\succeq  h^2  \frac{e^{-hB }}{(1- e^{-hB })^2} \E \sum_{t=1}^N  y_t \eta_t \otimes \eta_t.
\]
Meanwhile, because $\eta_t \in \sigma\{ y_i, i<t\}$, 
\begin{align*}
\E \sum_{t=1}^N  y_t \eta_t \otimes \eta_t
& =  \E \sum_{t=1}^N  \E [y_t | y_i, i< t] \eta_t \otimes \eta_t \\
& =  \E \sum_{t=1}^N  (1-e^{- h \Lambda_t^*}) \eta_t \otimes \eta_t 
	\quad \text{(by Lemma \ref{lemma:time-only-bernoulli-glm})} \\
& \succeq (1- e^{-hb})   \E \sum_{t=1}^N 	 \eta_t \otimes \eta_t,
	\quad \text{(by Lemma \ref{lemma:c1-Lambda-on-data-implication})} \\
& \succeq (1- e^{-hb})    \rho I_{NN'}.
		\quad \text{(by Lemma \ref{lemma:eta-eta-positivity})}
\end{align*}
This means that $\E (-  L(z) ) $ is $\lambda$-strongly convex on $\Theta_K$ where
$\lambda =  h^2  \frac{e^{-hB }}{(1- e^{-hB })^2} (1- e^{-hb})    \rho$.
To prove the proposition, it suffices to show that $\lambda \ge \kappa'$.
 This can be verified by the definition of $\rho$ in Lemma \ref{lemma:eta-eta-positivity}
 and using that $ x e^{-x} \le 1-e^{-x} \le x$, $\forall x \ge 0$. 
\end{proof}

\subsection{Proofs in Section \ref{subsec:special-structure-psi}}
\begin{proof}[Proof of Lemma \ref{lemma:G-kappa-monotone-stationary}]
We claim that under Assumption \ref{assump:c1-Lambda-on-data}(i),
\begin{equation}\label{eq:claim-xi-xi-positivity-stationray}
\E  \sum_{t= 1}^N \xi_t \otimes \xi_t \succeq  \rho_s I_{N'},
\quad \rho_s := bT e^{-hb} e^{-hB (N'-1)}.
\end{equation}
If true, then using the same argument as in the proof of Lemma \ref{lemma:G-kappa-monotone} one can show that for any $\psi, \tilde \psi \in \Theta_\psi$,
\begin{align*}
\langle G_s(\psi) - G_s(\tilde \psi), \psi - \tilde \psi \rangle
& \ge e^{-hB} h  \left\langle  \psi - \tilde \psi,  \left[ \E \sum_{t=1}^N   \xi_t  \otimes \xi_t \right] (\psi - \tilde \psi) \right\rangle \\
& \ge e^{-hB} h  \rho_s \| \psi - \tilde \psi \|_2^2,
\end{align*}
which proves that $ G_s$ is $(e^{-hB} h  \rho_s)$-monotone on $\Theta_\psi$, and that is the same as declared in the lemma.

It remains to prove the claim \eqref{eq:claim-xi-xi-positivity-stationray} to prove the lemma. For any $v \in \R^{N'}$,
\[
\left\langle v,  \left[ \E  \sum_{t= 1}^N \xi_t \otimes \xi_t  \right] v \right\rangle
= \E \sum_{t=1}^N \langle \xi_t, v\rangle^2
= \E \sum_{t=1}^N \left(  \sum_{ i = t- N' }^{t-1}  y_i  v_{t-i} \right)^2.
\]
To lower bound $\E \left(  \sum_{ i = t- N' }^{t-1}  y_i  v_{t-i} \right)^2$ for each $t$,
again we have \eqref{eq:Pr-ysubseq-lower-bound-proof1} due to Lemma \ref{lemma:c1-Lambda-on-data-implication},
and then
\[
\E \left(  \sum_{ i = t- N' }^{t-1}  y_i  v_{t-i} \right)^2
\ge \rho \sum_{ i = t- N' }^{t-1} v_{t-l}^2
= \rho \| v\|_2^2,
\quad \rho = e^{-hB(N'-1)} (1- e^{-hb}) .
\]
Thus,
$\E \sum_{t=1}^N \left(  \sum_{ i = t- N' }^{t-1}  y_i  v_{t-i} \right)^2
\ge N \rho \| v\|_2^2 $.
This proves that $\E  \sum_{t= 1}^N \xi_t \otimes \xi_t \succeq \lambda I_{N'}$ where
\[
\lambda 
= N \rho 
= N e^{-hB(N'-1)} (1- e^{-hb}) 
\ge N e^{-hB(N'-1)} hb  e^{-hb} 
=  \rho_s,  
\]
where in the inequality we used 
 $ x e^{-x} \le 1-e^{-x} $, $\forall x \ge 0$. 
\end{proof}

\section{Proofs in Section \ref{sec:model-with-graph}}\label{app:proofs-sec:5}
\begin{proof}[Proof of Lemma \ref{lemma:logL-cont-time-uncert-on-graph}]
Using the expression \eqref{eq:Pr-tk-cont-time-uncert-on-graph}, we have that, denoting $\Delta_{0}^r = 0$,
\begin{align*}
\calL 
& = \Pr[ ( \Delta_1, u_1)  ]   
    \Pr[ ( \Delta_2, u_2) | \tilde{\calF}_{1}  ]
    \cdots 
    \Pr[ (\Delta_n, u_n) | \tilde{\calF}_{n-1}  ]
    S_{n+1}(T) \nonumber  \\
&=    \prod_{k=1}^n \left(  e^{ - \int_{\Delta_{k-1}^r}^{\Delta_{k}^r}  \bar{\lambda} (s) ds}  
	\int_{\Delta_k^l}^{\Delta_k^r} \lambda( t, u) e^{ \int_t^{\Delta_{k}^r}  \bar{\lambda} (s) ds} dt \right)
	e^{ - \int_{\Delta_{n}^r}^{T}  \bar{\lambda} (s) ds} \nonumber \\
& = 	e^{ - \int_{0}^{T}  \bar{\lambda} (s) ds}  
	\prod_{k=1}^n \left( \int_{\Delta_k^l}^{\Delta_k^r} \lambda( t, u_k) e^{ \int_t^{\Delta_{k}^r}  \bar{\lambda} (s) ds} dt
		\right).
\end{align*}
This proves the expression \eqref{eq:logL-cont-time-uncert-on-graph}.
\end{proof}

\begin{proof}[Proof of Lemma \ref{lemma:quant-time-lambda(t,u)}]
Under (A3), $\Delta_i = I_{j(i)}$ for some $j(i) \in [N]$, and then  for $t \in I_j $, 
by \eqref{eq:lambda-phi-cont-time-on-graph},
\begin{align*}
   \lambda[k](t, u) 
 & = \mu(u) + \sum_{i, j(i) < j } \frac{1}{ h  }  \int_{I_{j(i)}}  k( t', t; u_i, u) dt',
\end{align*}
because $\Delta_i^r < t$ if and only if $j(i) < j$.
By Assumption \ref{assump:A4},  for $t' \in I_{j(i)}$ and $t \in I_j$,  $k(t', t; u_i, u) \equiv K_{ j(i), j} (u_i, u)$.
Thus,
\[
   \lambda[k](t, u)  =  \mu(u) + \sum_{i, j(i) < j }  K_{ j(i), j} (u_i, u), \quad \forall t \in I_j.
\]
The r.h.s. is a constant independent of $t$, and this proves the lemma. 
\end{proof}
\begin{proof}[Proof of Lemma \ref{lemma:l-lambda-base-on-graph}]
Under (A3), we have \eqref{eq:logL-cont-time-uncert-on-graph-A3} hold, namely
\[
\ell = \sum_{k=1}^n \log \left( 
\int_{I_{j(k)}^l}^{I_{j(k)}^r} \lambda( t, u_k) e^{ \int_t^{I_{j(k)}^r}  \bar{\lambda} (s) ds} dt
		\right) - \int_0^T \bar{\lambda}(s) ds.
\]		
 Applying Lemma \ref{lemma:quant-time-lambda(t,u)}, 
we have 
\[
\int_{I_{j(k)}^l}^{I_{j(k)}^r} \lambda( t, u_k) e^{ \int_t^{I_{j(k)}^r}  \bar{\lambda} (s) ds} dt
= \Lambda_{j(k)} (u_k)  \int_{0}^{ h}  e^{  \bar{\Lambda}_{j(k)}  (h- t)} dt
=  \frac{ \Lambda_{j(k)} (u_k) }{ \bar{\Lambda}_{j(k)} }  (   e^{ h  \bar{ \Lambda}_{j(k)}  } - 1), 
\]
and meanwhile,
\[
\int_0^T \bar{\lambda}(s) ds   = \sum_{j=1}^N h \bar{\Lambda}_j. 
\]
This proves the expression of $\ell$ as \eqref{eq:l-lambda-base-on-graph}.
\end{proof}

\begin{proof}[Proof of Lemma \ref{lemma:on-graph-bernoulli-glm}]
The expression \eqref{eq:Lambdaj-base-on-graph-bernoulli} implies that $\mathbf{\Lambda}_t $ is in $\sigma\{ \mathbf{y}_i, i\le  t-1 \}$. 
From the expression of the log-likelihood \eqref{eq:l-lambda-base-on-graph-beroulli},
one can verify that 
\[
\Pr [ \bar{y}_t = 1| \mathbf{y}_i, i\le  t-1 ] = 1- e^{- h \bar{\Lambda}_{t}},
\]
and for each $u\in\calV$, 
\[
\Pr [  y_t(u) =1 | \mathbf{y}_i, i\le  t-1, \bar{y}_t = 1] = \frac{ \Lambda_{t} (u) }{ \bar{\Lambda}_{t}}.
\]
As a result,
\[
\Pr [ {y}_t (u) = 1|  \mathbf{y}_i, i\le  t-1 ] 
= ( 1- e^{- h \bar{\Lambda}_{t}} )\frac{ \Lambda_{t} (u) }{ \bar{\Lambda}_{t}}
= ( 1- e^{- h  \mathbf{\Lambda}_t^T \mathbf{1}  } )\frac{ \Lambda_{t} (u) }{ \mathbf{\Lambda}_t^T \mathbf{1}  },
\] 
where in the second equality we used that $ \bar{\Lambda}_t = \mathbf{\Lambda}_t^T \mathbf{1} $ by definition \eqref{eq:def-bar-Lambda}.
Writing in vector form, this gives
$\E [ \mathbf{y}_t | \mathbf{y}_i, i\le  t-1 ] = \Phi(\boldsymbol x)$ with $\boldsymbol x = h \mathbf{\Lambda}_t \in \R_+^V$.  
\end{proof}

\section{Additional experimental details }
\label{apdx:additional-exp}

\begin{algorithm}
\DontPrintSemicolon
\SetAlgoNoLine
\caption{ Stochastic batch-based training for network case}
\label{apdx:algorithm:train-graph}
\SetKwInOut{Input}{Input}
\SetKwInOut{Output}{Output}

\Input{
Training trajectories $\{ \mathbf{y}_t^{(m)} \}_{m=1}^{M}$, 
$\boldsymbol \mu_0$ as initial value of $\boldsymbol \mu$;
Parameters: 
intensity lower bound $b$, barrier weight $\delta_{b}$ ; 
batch size $M_B$, 
maximum number of epochs $k_{\rm max}$, 
learning rate schedule $\{ \gamma_k \}_{k}$;
Optional: singular value threshold $\tau_\SVD$, smoothness weight $\delta_s$
}

\Output{
Learned kernel $\theta_K$ and baseline $\boldsymbol \mu$ }

\vspace{+5pt}

Initialize kernel $\theta_K \leftarrow 0$  and baseline on graph $\boldsymbol \mu \leftarrow \boldsymbol \mu_0$

\For{$k = 1,\ldots, k_{\rm max}$}{

	 \While{loop over the batches}{
	 
	 $\theta \leftarrow [\boldsymbol \mu, \theta_K]$
	
	 Load $M_B$ many training trajectories to form batch $Y_B =\{\mathbf{y}^{(m)} \}_{m}$
	
	  Compute intensities $\mathbf{\Lambda}_t^{(m)}[ \theta ]$ in \eqref{eq:lambdatm-z-graph} at $t=1, \cdots, N$ for  each $\mathbf{y}^{(m)}$ in $Y_B$, and let   $b^{(m)} \leftarrow \min_{ 1 \le t \le N, \, u \in \calV} \Lambda_t^{(m)}(u)$
	  
	   Split $Y_B = Y_{B,1}\cup Y_{B,2}$, where $\mathbf{y}^{(m)}$ in $Y_{B,1}$ if $b^{(m)} < b$, and otherwise in $Y_{B,2}$

	   \uIf{using VI update}{
	    $g_B \leftarrow \frac{1}{M_B}
        		    \sum_{\mathbf{y}^{(m)} \in Y_{B,2}} \hat G^{(m)}[\theta]$,
	    	$\hat G^{(m)}$ as in \eqref{eq:formula-Gtheta-graph}             
            }
    \ElseIf{using GD update}{
    $g_B \leftarrow \frac{1}{M_B}
            \sum_{\mathbf{y}^{(m)} \in Y_{B,2}} \hat F^{(m)}[\theta]$,
     $\hat F^{(m)}$ as in \eqref{eq:expression-hatFm-graph}
            }

    \If{$Y_{B,1}$ not empty}{
    $g_B \gets g_B + \delta_b \sum_{ \mathbf{y}^{(m)} \in Y_{B,1}} \partial_{\theta_K} B^{(m)}(\theta)$, $B^{(m)}$ as in \eqref{eq:def-Bm-barrier-network}
    } 
    
     Update 
    $\theta_K \gets \theta_K - \gamma_k g_B$

    For each $u \in\calV$, solve $\tilde \mu(u)$ as root of \eqref{eq:eqn-elim-mu-graph} on $Y_B$ using bisection search

    Update $\mu(u) \gets 0.9 \mu(u) +0.1\tilde \mu(u)$, $\forall u \in \calV$
    }

	(Optional)
    \If{do smoothness regularization of kernel}{
    $\theta_K \gets \theta_K - \gamma_k \delta_s \partial_{\theta_K} \sum_{u', u \in \calV} S( [ \Psi_{i,l} (u',u)]_{i, l} )$, 
    	$S$ as in \eqref{eq:def-Stheta-smoothness}
    }
 }

(Optional) \If{do low-rank truncation of kernel}{

	Rearrange entries in $\theta_K$ into $\{ [ \Psi_{i,l}(u',u)]_{i,l}, \, u', u \in \calV \}$  and 
    	concatenate matrices 
	$\mathbf{\Psi} \gets \left[ \Psi(u',u)\right]_{u', u \in \calV}$, $\mathbf{\Psi}$ has size $(N+N')V^2 \times N'$

	For each $(u',u)$, $\Psi'(u',u) \leftarrow \Psi_{(N'+1):N,1:N'}(u',u)$, and 
    	concatenate matrices 
	$\mathbf{\Psi}' \gets \left[ \Psi'(u',u)\right]_{u', u \in \calV}$, $\mathbf{\Psi}'$ has size $(N-N'+1)V^2 \times N'$
      
    	$r \leftarrow $ number of singular values of $\mathbf\Psi'$ larger than $\tau_\SVD$

	 $V \leftarrow $ first $r$ right singular vectors of $\mathbf\Psi'$
	 
	  $\mathbf{\Psi} \gets \mathbf{\Psi}{V}{V}^\top$, and rearrange $\mathbf{\Psi}$ into $\theta_K$
}

\KwRet{$\boldsymbol \mu$, $\theta_K$}
\end{algorithm}

\subsection{On the time-only data in Section \ref{subsec:time-only-example}}

\subsubsection{Choice of hyperparameters}\label{apdx:choice-hyperparameters-time-only}

In the experiment with $N=32$ and $N' = 8$:
% SGD
The batch size $M_B=400$,
and the maximum number of epochs $k_{\rm max} = 300$. 
% lr
Learning rate: 
for \texttt{TULIK-VI}, the schedule is set as $\gamma_k = 0.4$ if $k\le 100$ and $\gamma_k = 0.2$ if $ 100 < k \le 300 $;
for \texttt{TULIK-GD}, the schedule is set as $\gamma_k = 0.2$ if $k\le 100$ and $\gamma_k = 0.1$ if  $100 < k \le 300 $.
Regularization:
% barrier
We use the quadratic barrier in \eqref{eq:def-Bm-barrier}, the intensity lower bound $b=0.01$, 
the barrier weight $\delta_b=0.1$.
% smooth
We apply the smoothness penalty \eqref{eq:def-Stheta-smoothness}
with smoothness weight $\delta_s = 0.08$. 

In the experiment with $N = 320$ and $N'=80$:
all the choices of optimization hyperparameters are the same as the experiment with $N = 32$ and $N' = 8$ except for  $\delta_s = 0.004$.
We apply low-rank truncation after the 
stochastic optimization 
loops.
We select the singular value threshold $\tau_{\rm SVD} \in\{0.2, 0.4, \ldots, 1.2\}$ by minimizing the $\ell_1$ probabilistic prediction errors on a validation set with $500$ trajectories.
This gives $\tau_{\rm SVD} = 0.6$ and $0.8$ for \texttt{TULIK-VI} and \texttt{TULIK-GD} respectively.
The truncated kernel matrices have rank 3 for both cases and are plotted in Figure \ref{fig:TULIK-large-kernel-time-only}.

\subsubsection{Training dynamics for the data with \texorpdfstring{$N=32$ and $N'=8$}{N=32andNprime=8}}\label{apdx:training-dynamics-time-only}
\begin{figure}[b!]
\centering 
\begin{subfigure}[h]{0.4\linewidth}
\includegraphics[width=\linewidth]{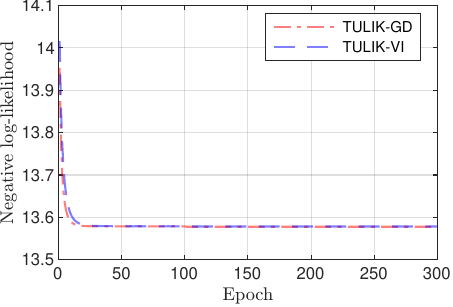}
\caption{negative log-likelihood over epochs}
\label{fig:TULIK-nonstationary-kernel-time-only-nll}
\end{subfigure}
\hspace{+10pt}
\begin{subfigure}[h]{0.4\linewidth}
\includegraphics[width=\linewidth]{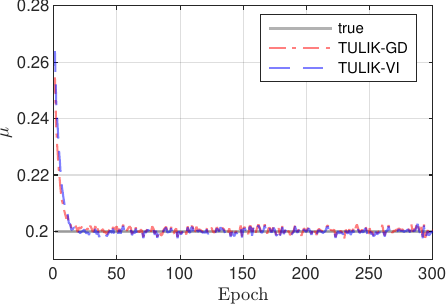}
\caption{trained $\mu$ over epochs}
\label{fig:TULIK-nonstationary-kernel-time-only-mu-dynamics}
\end{subfigure}
\hspace{+0pt}
\vspace{-5pt}
\caption{
Training dynamics on simulated time-only event data with kernel function defined in \eqref{eq:time-only-kernel}, $N'=8$ and $N=32$. 
(a) The training negative log likelihood over epochs of \texttt{TULIK-GD} and \texttt{TULIK-VI}. (b) The $\mu$ learned by \texttt{TULIK-GD} and \texttt{TULIK-VI} over epochs. The true value of $\mu$ is marked by the solid line at $0.2$.
}
\label{fig:TULIK-nonstationary-kernel-time-only-dynamics}
\end{figure}
Figure~\ref{fig:TULIK-nonstationary-kernel-time-only-nll} shows the descending dynamics of training negative log-likelihood functions for both \texttt{TULIK-VI} and \texttt{TULIK-GD}. Meanwhile, Figure~\ref{fig:TULIK-nonstationary-kernel-time-only-mu-dynamics} shows that the proposed scheme to learn $\mu$ also numerically converges to the truth $\mu=0.2$ in both \texttt{TULIK-VI} and \texttt{TULIK-GD}. 

\subsubsection{Alternative baselines}\label{apdx:detail-baselines-time-only}

Note that in the GLM model \cite{juditsky2020convex}, one can allow \texttt{GLM-L} and \texttt{GLM-S} to have non-stationary coefficients. 
In GLM, the standard approach to infer parameters is to construct the likelihood function and recover the parameters as MLE \cite{mccullagh2019generalized}. 
In the work proposing \texttt{GLM-S} and \texttt{GLM-I}, GLMs are used as part of the model for the Bernoulli processes and they train models using VI approach, which is originally introduced in \cite{juditsky2019signal} to adapt to problems with weaker assumptions than the convexity of the likelihood function.

For GLM-based baselines, the $L^2$ loss is formally defined as 
$L_{\rm GLM}(\theta)   = \frac{1}{M} \sum_{m=1}^M \ell_{\rm GLM}^{(m)}(\theta)$, where 
\[
\ell_{\rm GLM}^{(m)}(\theta)
  = \frac{1}{2}\sum_{t=1}^N  ( \phi_{\rm GLM}(  \Lambda_t^{(m)}(\theta)) - y_t^{(m)} )^2,
\]
and the VI field can be derived similarly as in Section \ref{subsec:VI-recovery-timeonly}. The link function $\phi_{\rm GLM}(x)$ can be selected from the linear function $x\mathbb I_{[0,1]}(x)$, where $\mathbb I_{[0,1]}(x)$ is an indicator function on $[0,1]$, or the sigmoid function $1/(1+e^{-x})$, corresponding to \texttt{GLM-L} or \texttt{GLM-S}. 

The \texttt{HP-E} method trains on the continuous-time data, which, in this experiment, are converted from data with uncertainty by assigning $t_i = jh$ for the $i$-th $y_j=1$.   For the learning of \texttt{HP-E}, since we are using the exponential time-invariant kernel defined in Section~\ref{subsec:prelim-hawkes}, we solve the gradient for $\mu$, $\alpha$, and $\beta$ and optimize using SGD.

\subsection{Simulated time-only event data with a time-invariant kernel}\label{adpx:time-only-stationary}

\begin{figure}[b!]
\centering 
\begin{subfigure}[h]{0.45\linewidth}
\includegraphics[width=\linewidth]{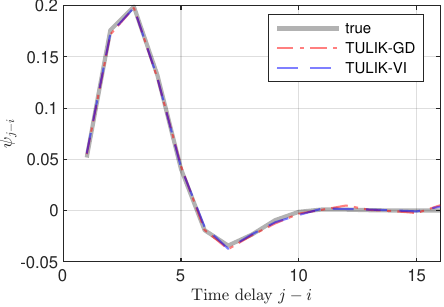}
\end{subfigure}
\vspace{-10pt}
\caption{True time-invariant kernel and recovered kernels on simulated time-only event data with $N'=16$ and $N=32$. 
True time-invariant kernel vector $\psi$ is visualized as a solid curve. The kernels recovered by \texttt{TULIK-VI} and \texttt{TULIK-GD} are shown as a dashed curve and a dash-dotted curve, respectively.}
\label{fig:stationary-kernel-time-only}
\end{figure}

\begin{figure}[b!]
\centering 
\begin{subfigure}[h]{0.80\linewidth}
\includegraphics[width=\linewidth]{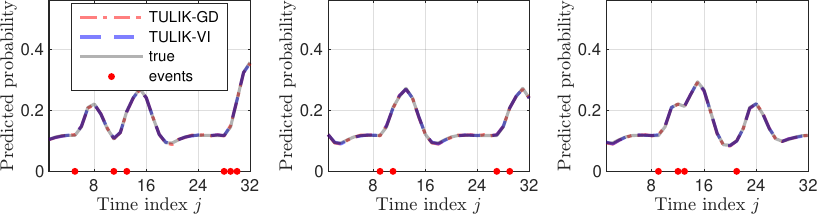}
\end{subfigure}
\vspace{-5pt}
\caption{
The probability predictions on 3 randomly selected testing sequences.
Data is the simulated time-only event data with time-invariant kernel vector $\psi$ defined in Figure~\ref{fig:stationary-kernel-time-only} with $N'=16$ and $N=32$.}
\label{fig:prob-pred-time-only-stationary-kernel}
\end{figure}

\begin{table}[th!]
    \caption{
    Relative Errors (REs) of learned $\mu$, 
    kernel matrix, 
    and probability predictions,
    on simulated time-only event data with the kernel vector $\psi$ defined in Figure~\ref{fig:stationary-kernel-time-only} with $N'=16$ and $N=32$.
    The format are the same as in Table~\ref{tab:time-only-smallkernel}.
    }
    \label{tab:time-only-stationary-kernel}
    \vspace{-5pt}
    \begin{subtable}[h]{1\textwidth}
        \centering
        \resizebox{1\columnwidth}{!}{%
	\begin{tabular}{c cc cc ccccc}
        \toprule
        \multicolumn{10}{c}{time-only event data with discrete time grids $N'=16$ and $N=32$}\\ 
        \cmidrule(lr){2-10}
        RE $\times 10^{-2}$ & \multicolumn{2}{c}{$\mu$} & \multicolumn{2}{c}{kernel} & \multicolumn{5}{c}{prediction}\\
        \cmidrule(lr){2-3}  \cmidrule(lr){4-5}  \cmidrule(lr){6-10} 
        & {\texttt{TULIK-VI}} & {\texttt{TULIK-GD}} & {\texttt{TULIK-VI}} & {\texttt{TULIK-GD}} & {\texttt{TULIK-VI}} & {\texttt{TULIK-GD}} 
        & {GLM-L} & {GLM-S} & {HP-E} \\
			 \cmidrule(lr){2-3} \cmidrule(lr){4-5}  \cmidrule(lr){6-10} 
			 \multirow{2}{*} {$\ell_1$} &0.70 &0.60 &10.73 &9.99 &{2.00} &{\bf1.89} &{2.28} &5.26 &47.19\\

 & (0.58) & (0.44) & (3.49) & (2.80) & (0.64) & (0.51) & (0.51) & (0.16) & (0.94)\\[3pt]

\multirow{2}{*} {$\ell_2$} & -- &0.60 &7.52 &7.02 &{2.43} &{\bf2.28} &{2.80} &6.36 &52.16\\

 &  & (0.44) & (2.51) & (2.06) & (0.77) & (0.62) & (0.63) & (0.31) & (0.94)\\[3pt]

\multirow{2}{*} {$\ell_\infty$} & -- &0.60 &5.86 &5.46 &{3.58} &{\bf3.36} &{4.32} &9.68 &66.53\\

 &  & (0.44) & (2.36) & (2.11) & (1.10) & (0.90) & (1.12) & (1.07) & (0.95)
			\\\bottomrule
		\end{tabular}
  }%
    \end{subtable}
\end{table}

\paragraph{Dataset.}

%In this example, we set a time horizon $[-\tau_{\rm max}, T]$
%\xc{this notatin messed up continuous time interval and discrete ones. better to stick to discret time model. }
%with $\tau_{\rm max}=10$ and $T = 20$. 
% kernel
%The latent influence kernel function $k( t', t)$ is stationary, defined as
%\xc{you can use the notation $psi$ introduced in the paper. you only need to give plots of the discrete psi, the eqn A.1 is not needed, because you never use the continuous kernel function in this experiment.}
%\begin{equation}\label{eq:time-only-kernel-stationary}
%   k(t',t) =0.01e^{-8(t-t')^2/25}+0.25e^{-6(t-t')^2/25}\sin(-0.05\pi+\pi(t-t')/2),
%    \quad t > t', 
%\end{equation}
%which only depends on $t-t'$. 
We set discrete time grids as $N'=16$ and $N=32$ and define a true time-invariant kernel vector $\psi$ visualized as the gray curve in Figure~\ref{fig:stationary-kernel-time-only}. Given $\psi$ and $\mu=0.2$, we generate Bernoulli process data from \eqref{eq:time-only-bernoulli-glm}. The training and testing data consist of $4800$ and $500$ trajectories, respectively.

\paragraph{Method.}
% our method
We compute the proposed model following Algorithm \ref{algorithm:train-time-only} and use the vector fields for the time-invariant kernel, which have been mentioned in Section~\ref{subsec:special-structure-psi}.
The choice of training hyperparameters is as follows:
We set the batch size $M_B=400$, and the maximum number of epochs $k_{\rm max} = 60$. 
Learning rate: for \texttt{TULIK-VI}, the learning rate schedule is set as $\gamma_k = 0.4$ if $k\le 20$ and $\gamma_k = 0.2$ if $20 <k\le 60$; for \texttt{TULIK-GD}, the learning rate schedule is set as $\gamma_k = 0.2$ if $k\le 20$ and $\gamma_k = 0.1$ if $20 <k\le 60$.
Regularization: We use the quadratic barrier in \eqref{eq:def-Bm-barrier}, the intensity lower bound $b=0.01$, the barrier weight $\delta_b=0.1$.  We apply the smoothness penalty \eqref{eq:def-Stheta-smoothness}
with smoothness weight $\delta_s = 0.004$.

% baseline
We compare the proposed methods with the baselines including GLM-L, GLM-S, and HP-E, which can also be computed on the stationary example.

\paragraph{Results.}
In this time-invariant kernel example, the kernel $\psi$ and the scalar $\mu$ are well recovered, as illustrated by Figure~\ref{fig:stationary-kernel-time-only}  and the small REs in Table~\ref{tab:time-only-stationary-kernel}. 
The probability predictions are also accurate, as shown in Figure~\ref{fig:prob-pred-time-only-stationary-kernel} and Table~\ref{tab:time-only-stationary-kernel}. 
In comparison, TULIK methods outperform the other baselines, and in this case, the advantage over \texttt{GLM-L} is mild (within one standard deviation).

\subsection{On the on-network data in Section \ref{subsec:on-network-example}}

\subsubsection{Choice of hyperparameters}\label{apdx:choice-hyperparameters-on-network}

We use the batch size $M_B=800$ and the maximum number of epochs $k_{\rm max} = 150$. 
Learning rate:   
for \texttt{TULIK-VI}, the learning rate schedule is set as $\gamma_k = 0.4$ if $k\le 50$ and $\gamma_k = 0.2$ if  $50 < k \le 150 $; 
for \texttt{TULIK-GD}, the learning rate schedule is set as $\gamma_k = 0.2$ if $k\le 50$ and $\gamma_k = 0.1$ if  $50 < k \le 150 $. 
Regularization: 
we use the on-network quadratic barrier \eqref{eq:def-Bm-barrier-network} with the intensity lower bound $b=0.03$ and the barrier weight $\delta_b=0.1$.
We adopt the smoothness weight $\delta_s=0.004$. 

After the 
stochastic optimization
loops, we apply low-rank truncation to the kernel matrix as in Algorithm \ref{apdx:algorithm:train-graph}. We select $\tau_{\rm SVD}$ from $\{0.2, 0.4, \ldots, 1.2\}$ by  minimizing the $\ell_1$ probabilistic prediction errors on a validation set with $500$ trajectories, and obtain $\tau_{\rm SVD} = 0.8$ and $1.0$ for \texttt{TULIK-VI} and \texttt{TULIK-GD} respectively.
The truncated matrices $\mathbf{\Psi}$ have rank-2 in both cases and are visualized in Figure \ref{fig:TULIK-kernel-on-network}.

\subsubsection{Training dynamics}\label{apdx:TULIK-on-network}

Figure~\ref{fig:TULIK-on-network-nll} shows the convergence of training log-likelihood functions for both \texttt{TULIK-VI} and \texttt{TULIK-GD}. 
Additionally, Figure~\ref{fig:TULIK-on-network-mu-dynamics} validates the on-network approach to learning $\mu$ also numerically converges around $7\%$ relative errors. Such relative errors correspond to absolute errors on each node around $0.01$.

\begin{figure}[t!]
\centering 
\begin{subfigure}[h]{0.4\linewidth}
\includegraphics[width=\linewidth]{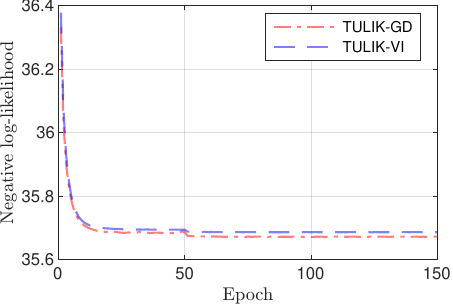}
\caption{negative log-likelihood over epochs}
\label{fig:TULIK-on-network-nll}
\end{subfigure}
\hspace{+10pt}
\begin{subfigure}[h]{0.4\linewidth}
\includegraphics[width=\linewidth]{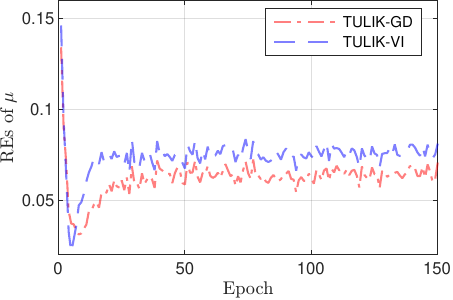}
\caption{trained $\mu$ over epochs}
\label{fig:TULIK-on-network-mu-dynamics}
\end{subfigure}
\hspace{+0pt}
\vspace{-5pt}
\caption{For simulated on-network event data in Section~\ref{subsec:on-network-example}, we train the model on 40000 trajectories using both \texttt{TULIK-VI} and \texttt{TULIK-GD}. The results from \texttt{TULIK-VI} and \texttt{TULIK-GD} are visualized as dashed curves and dash-dotted curves, respectively. (a) The dynamics of training negative log-likelihood over epochs. (c) The dynamics of REs of learned $\mu$ over epochs. 
}
\label{fig:TULIK-on-network-dynamics}
\end{figure}

\begin{figure}[t!]
\centering 
\begin{subfigure}[h]{0.4\linewidth}
\includegraphics[width=\linewidth]{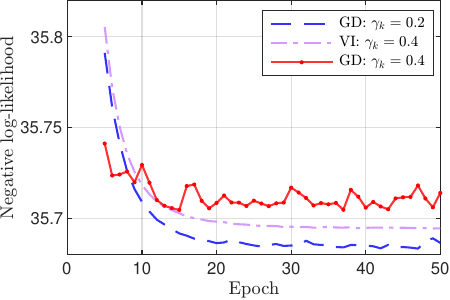}
% \caption{Training negative log-likelihood}
\end{subfigure}
\hspace{+10pt}
% \begin{subfigure}[h]{0.4\linewidth}
% \includegraphics[width=\linewidth]{plots/test_f3_graph_GD_instableGD_GDinstability_ProbPred.pdf}
% \caption{Testing $\ell_1$ REs of probability predictions}
% \end{subfigure}
\hspace{+0pt}
\vspace{-5pt}
\caption{
Data is the on-network event data in Section~\ref{subsec:on-network-example}.
On the same training set, we increase the learning rate of GD to be 0.4 which is the same as VI, then the training negative log-likelihood oscillates and converges worse, showing instability in training. 
}
\label{fig:TULIK-on-graph-GD-instability}
\end{figure}

\subsubsection{Alternative baselines}\label{apdx:detail-baselines-on-network}
Note that for on-network event data, in both GLM-L and GLM-S, instead of the link functions, they use link vector field, which is the same link function as defined in time-only examples but implemented on each node, to compute the prediction probability. 
Their per-trajectory $L^2$ loss can be formally written as
\[
\ell_{\rm GLM}^{(m)}( z )
  = \frac{1}{2 }\sum_{t=1}^N  \| \Phi_{\rm GLM}( \mathbf{\Lambda}_t^{(m)}[z] ) - \mathbf{y}_t^{(m)} \|_2^2,
\]
where $\Phi_{\rm GLM}(\mathbf {x}) = \left(\phi_{\rm GLM}(x(u))\right)_{u\in\calV} $. 
The VI monotone field used to update $z$ can be derived similarly as in Section \ref{subsec:kernel-recovery-network}.

For the learning of continuous-time multivariate HP-E, the kernel function becomes $k(t',t;u',u) = \alpha(u',u)\beta e^{-\beta(t-t')}$ and we perform SGD to learn $\boldsymbol{\mu}, (\alpha(u',u))_{u',u\in\calV}$ and $\beta$.

\subsubsection{Instability in GD training}\label{apdx:GD-instability}

Figure~\ref{fig:TULIK-on-graph-GD-instability} shows the dynamics of training negative log-likelihoods 
(from epoch 5 to epoch 50)
for VI with a learning rate $0.4$,
and GD with a learning rate $0.2$ and 0.4 respectively. 
It can be seen that, when the learning rate of GD is set to be the same as VI, the learning dynamic shows instability. 

%As another observational evidence, now we see more than 5 violations frequently, especially in the first 50 epochs. Recall that in Section~\ref{subsec:on-network-example}, both \texttt{TULIK-VI} and \texttt{TULIK-GD} can control the violations under 5. 

\subsection{More on SADs data experiment in Section \ref{subsec:more-on-sepsis}}\label{apdx:SADs}

We provide additional details for the SADs data and its results.

\subsubsection{Data pre-processing}\label{apdx:data-processing-sepsis}

\begin{table}[t!]
\caption{SAD constructions based on raw observations thresholding and measurements grouping. For some measurement, if its raw observation exceed corresponding threshold, then its risk score will be added to the total risk score of corresponding SAD.}\label{tab:SAD-grouping}
\vspace{-15pt}
\begin{center}
\begin{small}
\resizebox{1\textwidth}{!}{%
\begin{tabular}{lllcc}
\toprule[1pt]
SAD name & Measurement name & Physionet name & Threshold & Risk score \\ % Column names row
\midrule[0.3pt]
 % In-table horizontal line
\textbf{Renal Injury}	&	creatinine	&	Creatinine	&	$>$1.3	&	0.667	\\
	&	potassium	&	Potassium	&	$>$5.0	&	0.067	\\
	&	phosphorus	&	Phosphate	&	$>$4.5	&	0.067	\\
	&	bicarb (hco3)	&	HCO3	&	$>$26	&	0.067	\\
	&	blood urea nitrogen (bun)	&	BUN	&	$>$20	&	0.133	\\
\cmidrule(l){2-5}
\textbf{Electrolyte}	&	calcium	&	Calcium	&	$>$10.5	&	0.167	\\
\textbf{Imbalance}	&	chloride	&	Chloride	&	$<$98 or $>$106	&	0.667	\\
	&	magnesium	&	Magnesium	&	$<$1.6	&	0.167	\\
	\cmidrule(l){2-5}
\textbf{Oxygen Carrying}	&	hematocrit	&	Hct	&	$<$37	&	0.500	\\
\textbf{Dysfunction}	&	hemoglobin	&	Hgb	&	$<$12	&	0.500	\\
	\cmidrule(l){2-5}
\textbf{Shock}	&	base excess	&	BaseExcess	&	$<-$3	&	0.100	\\
	&	lactic acid	&	Lactate	&	$>$2.0	&	0.150	\\
	&	ph	&	pH	&	$<$7.32	&	0.750	\\
	\cmidrule(l){2-5}
\textbf{Diminished }	&	sbp cuff	&	SBP	&	$<$120	&	0.250	\\
\textbf{Cardiac Output}	&	dbp cuff	&	DBP	&	$<$80	&	0.250	\\
	&	map cuff	&	MAP	&	$<$65	&	0.500	\\
	\cmidrule(l){2-5}
\textbf{Coagulopathy}	&	partial prothrombin time (ptt)	&	PTT	&	$>$35 	&	0.250	\\
	&	fibrinogen	&	Fibrinogen	&	$<$233	&	0.250	\\
	&	platelets	&	Platelets	&	$<$150,000	&	0.500	\\
	\cmidrule(l){2-5}
\textbf{Cholestasis}	&	bilirubin direct	&	Bilirubin direct	&	$>$0.3	&	0.100	\\
	&	bilirubin total	&	Bilirubin total	&	$>$1.0	&	0.500	\\
	&	alkaline phosphatase	&	Alkalinephos	&	$>$120	&	0.400	\\
	\cmidrule(l){2-5}
\textbf{Hepatocellular Injury}	&	aspartate aminotransferase (ast)	&	AST	&	$>$40	&	1.000	\\
	\cmidrule(l){2-5}
\textbf{Oxygenation }	 &	saturation of oxygen (sao2)	&	SaO2	&	$<$92 \%	&	0.500	\\
\textbf{Dysfunction (lab)}	 &	end tidal co2	&	EtCO2	&	$<$35 or $>$45	&	0.250	\\
	&	partial pressure of carbon dioxide (paco2)	&	PaCO2	&	$<$35 or $>$45	&	0.250	\\
	\cmidrule(l){2-5}
\textbf{Inflammation (lab)}	&	glucose	&	Glucose	&	$>$125	&	0.200	\\
	&	white blood cell count	&	WBC	&	$<$4,000 or $>$12,000	&	0.800	\\
	\cmidrule(l){2-5}
\textbf{Oxygenation }	&	unassisted resp rate	&	Resp	&	$>$20	&	0.167	\\
\textbf{Dysfunction (vital)}	&	spo2	&	O2Sat	&	$<$92 \%	&	0.333	\\
	&	fio2	&	FiO2	&	$>$21 \%	&	0.500	\\
	\cmidrule(l){2-5}
\textbf{Inflammation (vital)}	&	temperature	&	Temp	&	$<$36 or  $>$38	&	0.800	\\
	&	pulse	&	HR	&	$>$90	&	0.200	\\
\bottomrule[1pt]
\end{tabular}
}
\end{small}
\end{center}
%\vspace{-0.15in}
\vspace{-0.25in}
\end{table}

Table~\ref{tab:SAD-grouping} introduces the rules to group measurements into SADs and compute risk scores for SADs. 
The Bernoulli process $y_t(u)$ on the 13-node network is obtained from the risk scores on the first 12 nodes, and using spesis onset-or-not on the last node.
Specifically, for the first 12 nodes, risk scores are assigned to indicate the severity of corresponding medical symptoms. Then for $u\in\{1,\cdots,12\}$, namely the nodes except for sepsis, $y_t(u)=1$ means the risk score of node $u$ increases at hour $t$. 

\subsubsection{Evaluation metrics}\label{apdx:eva-sepsis}

We evaluate the accuracy of sepsis onset prediction using the TPR, TNR, and BA. Since all patients in the processed data were eventually diagnosed as sepsis patients, the positive label cannot be defined simply based on sepsis onset. Instead, following the definition in \cite{physionetChallenge}, we consider a meaningful early prediction window \((t-12, t]\) for a sepsis onset at time \(t\). A patient is classified as {\it positive} if this prediction window intersects with the time horizon \((0, 20]\), ensuring that early prediction is feasible within the given time horizon.  
Specifically, patients with sepsis onset at \(t < 32\) are labeled as positive, while those with onset at \(t \geq 32\) are labeled as negative, as meaningful early prediction is not possible within the current time horizon for very late onsets.  
For a positive patient, a prediction is considered a true positive if the first predicted sepsis onset occurs within the early prediction window \((t-12, t] \cap (0, 20]\); otherwise, it is classified as a false positive.  
For a negative patient, a prediction is considered a true negative if no sepsis onset is predicted within \((0, 20]\); otherwise, it is classified as a false negative.

\subsubsection{Choice of hyperparameters}\label{apdx:choice-hyperparameters-sepsis}
\begin{figure}[t!]
\centering 
\begin{subfigure}[h]{0.65\linewidth}
\includegraphics[width=\linewidth]{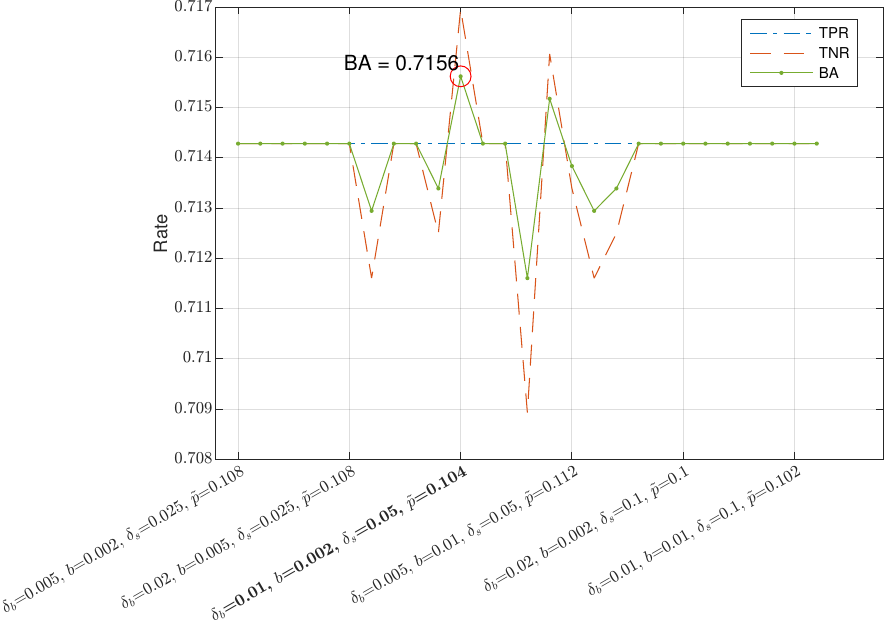}
\end{subfigure}

\vspace{-10pt}
\caption{ After grid search in Section~\ref{subsec:more-on-sepsis}, the validation curve for SADs data learned by \texttt{TULIK-VI}. The curves with circle marker, with cross marker and with dot marker show TPR, TNR and BA over combinations of hyperparameters. The hyperparameters yielding largest BA on the validation set are bolded and the corresponding metrics on each curve are marked by red circles and texts. 
}
\label{fig:sepsis-cv}
\end{figure}
We initialize the kernel tensor by filling all zero. The initialization of on-network $\mu$ can refer to Section~\ref{subsec:algo-on-network}. 
To get reasonable results, we randomly leave one validation set containing 70 trajectories out of the training data. The remaining 1000 training trajectories will be used to update the kernel tensor and $\mu$.  We set the batch size $M_B=107$, the maximum number of epochs $k_{\rm max} = 800$, and the learning rate schedule $\gamma_k = 0.4$ for all $k\le 800$.
We grid search the intensity lower bound $b\in\{0.002, 0.005,0.01\}$, the barrier weight $\delta_b\in\{0.005, 0.01, 0.02\}$ in the quadratic loss, and the smoothness weight $\delta_s\in\{0.05, 0.1, 0.2\}$. Figure~\ref{fig:sepsis-cv} shows the validation curve for the search of optimal hyperparameters in the grid. We pick the hyperparameters that have the largest BA across different combinations. During the training, violations occur more frequently than the simulations but most of them will be corrected by the barrier penalty. For each set of hyperparameters, we compute the probability predictions on the validation set after training. Denote $\hat p_t(u)$ as the prediction of $\E  [ y_t(u) | \mathbf{y}_i, i \le  t-1 ]$ in each method. We search for the probability threshold $\tilde p$ between $[0,1]$ to convert $\hat p_t(u)>\tilde p$ into ones and $\hat p_t(u)\le \tilde p$ into zeros to minimize the absolute difference between TPR and TNR on the validation set.  Then we select the hyperparameters yielding largest BA on the validation set for the eventual training on the whole training data. 

%\subsubsection{Additional results}\label{apdx:additional-sepsis}
%\rev{We introduce additional results for the SADs data. Figure~\ref{fig:sepsis-non-linear-effects} illustrates the effects of all medical indices on sepsis onset over time across different delays. The figure complements Figure~\ref{fig:causal-networks} by highlighting the strong triggering effect of OCD during the second 12-hour period of ICU stay. The results also demonstrate the non-linear nature of biological processes, where medical indices do not deteriorate indefinitely, and the triggering effects on sepsis onset do not increase indefinitely.  
%%
%}

%
%\begin{figure}[t!]
%\centering 
%\begin{subfigure}[h]{0.35\linewidth}
%\includegraphics[width=\linewidth]{plots/test_f24_real_LIK_kernel_ShockToOthers.pdf}
%\end{subfigure}
%
%\vspace{-5pt}
%\caption{The effects of Shock node to all other medical indices over time.}
%\label{fig:shock-on-others}
%\end{figure}
%

\subsection{On burglary crime data in Section \ref{subsec:burglary-crime}}\label{apdx:burglary}

%Mar 8: content in Section~\ref{apdx:burglary} are all new, not made blue

We introduce more details for the burglary data experiments.

\subsubsection{Data pre-processing}\label{apdx:data-processing-burglary}
As shown in Figure~\ref{fig:atl-downtown-burglary}, the Atlanta downtown area is uniformly divided into 16 sub-regions, leading to 16 nodes in the point process model. The indices of the sub-region follow a latitude-major ascending order (bottom to top, left to right in Figure~\ref{fig:atl-downtown-burglary}).
The Bernoulli process $y_t(u)$ on the 16-node network is determined by burglary-or-not within the hour $t$ at the sub-region $u$.

\subsubsection{Choice of hyperparameters}\label{apdx:choice-hyperparameters-burglary}

\begin{figure}[t!]
\centering 
\begin{subfigure}[h]{0.65\linewidth}
\includegraphics[width=\linewidth]{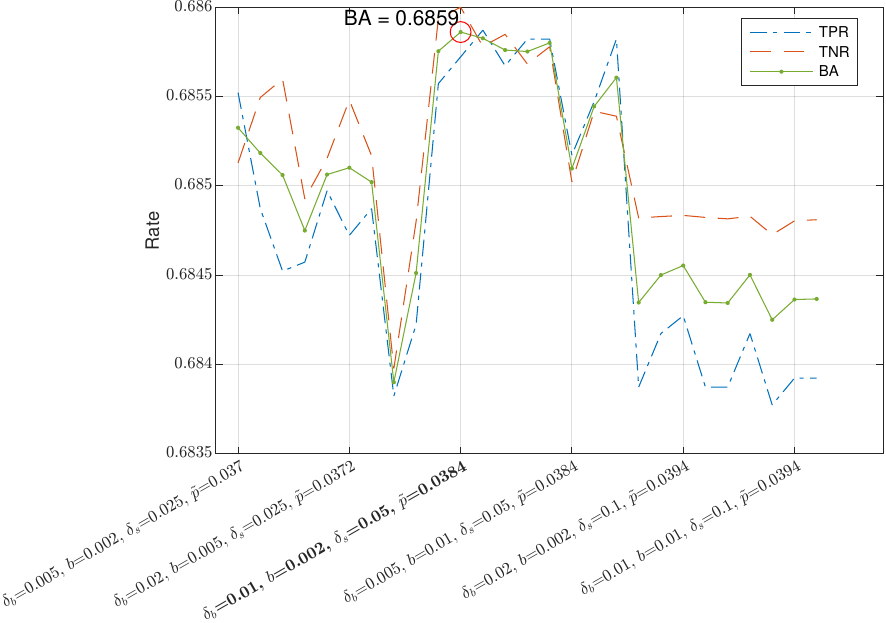}
\end{subfigure}

\vspace{-10pt}
\caption{ Same plot as in Figure~\ref{fig:sepsis-cv}. After grid search in Section~\ref{subsec:burglary-crime}, the validation curve for burglary crime data learned by \texttt{TULIK-VI}. 
}
\label{fig:burglary-cv}
\end{figure}
We initialize the kernel tensor as a zero tensor. The initialization of on-network $\mu$ is introduced in Section~\ref{subsec:algo-on-network}. 
To obtain reasonable results, we randomly leave out a validation set with 250 trajectories from the training data. The remaining 2000 training trajectories are used to learn the kernel tensor and $\mu$. We set the batch size to $M_B = 400$, the maximum number of epochs to $k_{\rm max} = 800$, and the learning rate schedule to $\gamma_k = 0.2$ for all $k \leq 800$.  
We perform a grid search over the intensity lower bound $b \in \{0.002, 0.005, 0.01\}$, the barrier weight $\delta_b \in \{0.005, 0.01, 0.02\}$ in the quadratic loss, and the smoothness weight $\delta_s \in \{0.05, 0.1, 0.2\}$. Figure~\ref{fig:burglary-cv} presents the validation curve for the grid search of optimal hyperparameters. The hyperparameters that yield the highest BA across different combinations are selected.  
The search for the probability threshold to convert the probability predictions to binary observations follows the same process as described in Section~\ref{apdx:choice-hyperparameters-sepsis}.

\end{document}